\definecolor{softred}{rgb}{0.8, 0.2, 0.2}
\definecolor{softgreen}{rgb}{0.2, 0.6, 0.2}
\definecolor{softblue}{rgb}{0.3, 0.5, 0.8}
\definecolor{softorange}{rgb}{0.9, 0.5, 0.2}
\pgfplotsset{compat=1.18}
\theoremstyle{plain}
\newtheorem{theorem}{Theorem}[section]
\newtheorem{lemma}[theorem]{Lemma}
\theoremstyle{definition}
\newtheorem{assumption}[theorem]{Assumption}
\theoremstyle{remark}
\definecolor{softbluegray}{HTML}{F2F6F9}
\definecolor{textbluegray}{HTML}{629999}
\title{Towards Principled Unsupervised Multi-Agent Reinforcement Learning}
\author{%
  Riccardo Zamboni \\
  Politecnico di Milano\\
  \texttt{riccardo.zamboni@polimi.it} \\
  \And
  Mirco Mutti \\
  Technion\\
  \\
  \And
  Marcello Restelli \\
  Politecnico di Milano\\
}
\begin{document}

\maketitle

\begin{abstract}
    In reinforcement learning, we typically refer to \emph{unsupervised} pre-training when we aim to pre-train a policy without a priori access to the task specification, \ie rewards, to be later employed for efficient learning of downstream tasks. In single-agent settings, the problem has been extensively studied and mostly understood. A popular approach casts the unsupervised objective as maximizing the \emph{entropy} of the state distribution induced by the agent's policy, from which principles and methods follow.
    In contrast, little is known about state entropy maximization in multi-agent settings, which are ubiquitous in the real world. What are the pros and cons of alternative problem formulations in this setting? How hard is the problem in theory, how can we solve it in practice? In this paper, we address these questions by first characterizing those alternative formulations and highlighting how the problem, even when tractable in theory, is non-trivial in practice. Then, we present a scalable, decentralized, trust-region policy search algorithm to address the problem in practical settings. Finally, we provide numerical validations to both corroborate the theoretical findings and pave the way for unsupervised multi-agent reinforcement learning via state entropy maximization in challenging domains, showing that optimizing for a specific objective, namely \emph{mixture entropy}, provides an excellent trade-off between tractability and performances.
\end{abstract}

\section{Introduction}
\label{sec:intro}

Multi-Agent Reinforcement Learning~\citep[MARL,][]{marlbook} recently showed promising results in learning complex behaviors, such as coordination and teamwork~\citep{samvelyan2019starcraft}, strategic planning in the presence of imperfect knowledge~\citep{perolat2022mastering}, and trading~\citep{johanson2022emergentbarteringbehaviourmultiagent}. Just like in single-agent RL, however, most of the efforts are focused on tabula rasa learning, that is, without exploiting any prior knowledge gathered from offline data and/or policy pre-training. Despite its generality, learning tabula rasa hinders MARL from addressing real-world situations, where training from scratch is slow, expensive, and arguably unnecessary~\citep{agarwal2022reincarnating}. In this regard, some progress has been made on techniques specific to the multi-agent setting, ranging from ad hoc teamwork~\citep{mirsky2022survey} to zero-shot coordination~\citep{hu2020other}, but our understanding of what can be done \emph{instead of} learning tabula rasa is still limited.

In single-agent RL, unsupervised pre-training frameworks~\citep{laskin2021urlb} have emerged as a viable solution: a policy is pre-trained without a priori access to the task specification, \ie rewards, to be later employed for efficient learning of downstream tasks. Among others, state-entropy maximization~\citep{hazan2019provably, lee2020efficientexplorationstatemarginal} was shown to be a useful tool for policy pre-training~\citep{hazan2019provably,mutti2021taskagnosticexplorationpolicygradient} and data collection for offline learning~\citep{yarats2022don}. In this setting, the unsupervised objective is cast as maximizing the entropy of the state distribution induced by the agent's policy. Recently, the potential of entropy objectives in MARL was empirically corroborated by a plethora of works~\citep{liu2021cooperative, zhang2021made, yang2021ciexplore, xu2024population} investigating entropic reward-shaping techniques to boost exploration in downstream tasks. Yet, to the best of our knowledge, the literature still lacks a principled understanding of how state entropy maximization works in multi-agent settings, and how it can be used for unsupervised pre-training. Let us think of an illustrative example that highlights the central question of this work: multiple autonomous robots deployed in a factory for a production task. The robots' main goal is to perform many operations over a large set of products, with objectives ranging from optimizing for costs and energy to throughput, which may change over time depending on the market's condition. Arguably, trying to learn each possible task from scratch is inefficient and unnecessary. On the other hand, one could think of first learning to cover the possible states of the system and then fine-tune this general policy over a specific task. Yet, if everyone is focused on their own exploration, any incentive to collaborate with each other may disappear, especially when coordinating comes at a cost for individuals. Similarly, covering the entire space might be unreasonable in most real-world cases. Here we are looking for a third alternative.

\begin{tcolorbox}[colback=softbluegray, colframe=softbluegray,  boxrule=0.5pt, arc=4pt, width=\linewidth]
\textbf{Research Questions:}
\begin{itemize}[leftmargin=10pt]
    \item[] (\textbf{Q1}) Can we formulate a multi-agent counterpart of the unsupervised pre-training via state entropy maximization in a principled way?
    \item[] (\textbf{Q2}) How are different formulations related? Do crucial theoretical differences emerge?
    \item[] (\textbf{Q3}) Can we explicitly pre-train a policy for state entropy maximization in practical multi-agent  scenarios?
    \item[] (\textbf{Q4}) Do crucial differences emerge in practice? Does this have an impact on downstream tasks learning?
\end{itemize}
\end{tcolorbox}

\textbf{Content Outline and Contributions.}~~First, in Section~\ref{sec:problem_formulation}, we address (\textbf{Q1}) by showing that the problem can be addressed through the lenses of a specific class of decision making problems, called \emph{convex Markov Games}~\citep{gemp2025convexmarkovgamesframework, kalogiannis2025solving}, yet it can take different, alternative, formulations. Specifically, they differ on whether the agents are trying to \emph{jointly} cover the space through conditionally dependent actions, or they neglect the presence of others and deploy fully \emph{disjoint} strategies, or they coordinate to cover the state space beforehand, but taking actions independently as components of a \emph{mixture}. We formalize these cases into three distinct objectives. Then, in Section~\ref{sec:properties}, we address (\textbf{Q2}), highlighting that these objectives are related through performance bounds that scale with the number of agents. We also show that only the joint or mixture objectives enjoy remarkable convergence properties under policy gradient updates in the ideal case of evaluating the agents' performance over infinite realizations (trials). However, as one shifts the attention to the more practical scenario of reaching good performance over a handful, or even just one, trial, we show that different objectives lead to different behaviors and mixture objectives do enjoy more favorable properties. Then, in Section~\ref{sec:algorithm}, we address (\textbf{Q3}) by introducing a decentralized multi-agent policy optimization algorithm, called \emph{Trust Region Pure Exploration} (TRPE), explicitly addressing state entropy maximization pre-training over finite trials. Finally, we address (\textbf{Q4}) by testing the algorithm on some simple yet challenging settings, showing that optimizing for a specific objective, namely mixture entropy, provides an excellent trade-off between tractability and performances. We show that this objective yields superior sample complexity and remarkable zero-shot performance when the pre-trained policy is deployed in sparse reward downstream tasks.
\section{Preliminaries}
\label{sec:setting}
In this section, we introduce the most relevant background and the basic notation.

\textbf{Notation.}~~We denote $[N] := \{1, 2, \ldots, N\}$ for a constant $N < \infty$. We denote a set with a calligraphic letter $\Acal$ and its size as $|\Acal|$. For a (finite) set $\Acal = \{1,2,\dots, i, \dots\}$, we denote $-i = \Acal / \{i\}$ the set of all its elements but $i$. $\Acal^T := \times_{t = 1}^T \Acal$ is the $T$-fold Cartesian product of $\Acal$. The simplex on $\Acal$ is $\Delta_{\Acal} := \{ p \in [0, 1]^{|\Acal|} | \sum_{a \in \Acal} p(a) = 1 \}$ and $\Delta_{\Acal}^{\Bcal}$ denotes the set of conditional distributions $p: \Acal \to \Delta_\Bcal$. Let $X, X'$ random variables on the set of outcomes $\Xs$ and corresponding probability measures $p_X, p_{X'}$, we denote the Shannon entropy of $X$ as $H (X) = - \sum_{x \in \Xs} p_X (x) \log (p_X (x))$ and the Kullback-Leibler (KL) divergence as $\kl(p_X\| p_{X'}) = \sum_{x \in \Xs} p_X (x) \log (p_{X} (x)/p_{X'} (x))$. We denote $\xs = (X_1, \ldots, X_T)$ a random vector of size $T$ and $\xs[t]$ its entry at position $t \in [T]$.

\textbf{Interaction Protocol.}~~As a model for interaction, we consider finite-horizon Markov Games~\citep[MGs,][]{Littman1994} \emph{without rewards}. A MG $\MDP := (\Ns, \Ss, \Acal, \Pb, \mu, T)$ is composed of a set of agents $\Ns$, a set $\Ss = \times_{i \in [\Ns]} \Ss_i$ of states, and a set of (joint) actions $\Acal = \times_{i \in [\Ns]} \Acal_i$, which we assume to be discrete and finite in size $|\Ss|, |\Acal|$ respectively. At the start of an episode, the initial state $s_1$ of $\MDP$ is drawn from an initial state distribution $\mu \in \Delta_{\Ss}$. Upon observing $s_1$, each agent takes action $a_1^i \in \Acal_i$, the system transitions to $s_2 \sim \Pb(\cdot|s_1, a_1)$ according to the \emph{transition model} $\Pb \in \Delta^{\Ss}_{\Ss \times \Acal}$. The process is repeated until $s_T$ is reached and $s_T$ is generated, being $T < \infty$ the horizon of an episode. Each agent acts according to a \emph{ policy}, that can be either Markovian when the action is only conditioned on the current state, i.e., $\pi^i \in \Delta_{\Ss}^{\Acal^i}$, or non-Markovian when the action is conditioned on the history, i.e., $\pi^i \in \Delta_{\Ss^t\times \Acal^t}^{\Acal^i}$.\footnote{In general, we will denote the set of valid per-agent policies with $\Pi^i$ and the set of joint policies with $\Pi$.} Also, we will denote as \emph{decentralized-information} policies the ones conditioned on either $\Ss_i$ or $\Ss^t_i\times \Acal^t_i$ for agent $i$, and \emph{centralized-information} ones the ones conditioned over the full state or state-actions sequences. It follows that the joint action is taken according to the \emph{joint} policy $\Delta_{\Ss}^{\Acal}  \ni\pi = (\pi^{i})_{i \in [\Ns]}$. 

\textbf{Induced Distributions.}~~
Now, let us denote as $S$ and $S_i$ the random variables corresponding to the joint state and $i$-th agent state respectively. Then the former is distributed as $d^\pi \in \Delta_{\Ss}$, where $d^\pi (s) = \frac{1}{T} \sum_{t \in [T]} Pr (s_t = s|\pi,\mu)$, the latter is distributed as $d^\pi_i \in \Delta_{\Ss_i}$, where $d^\pi_i (s_i) = \frac{1}{T} \sum_{t \in [T]} Pr (s_{t,i} = s_i|\pi,\mu)$. 
Furthermore, let us denote with $\sbf,\as$ the random vectors corresponding to sequences of (joint) states, and actions of length $T$, which are supported in $\Ss^T, \Acal^T$ respectively. We define $p^\pi \in \Delta_{\Ss^T \times \Acal^T}$, where $p^\pi(\sbf,\va) = \prod_{t \in [T]}Pr(s_t = \sbf[t], a_t = \as[t])$. Finally, we denote the empirical state distribution induced by $K \in \mathbb N^+$ trajectories $\{\sbf_k\}_{k \in [K]}$ as $d_K (s) = \frac{1}{KT}  \sum_{k \in [K]} \sum_{t \in [T]}  \mathds{1} (\sbf_k[t] = s)$.

\textbf{Convex MDPs and State Entropy Maximization.}~~
In the MDP setting ($|\Ns|=1$), the problem of state entropy maximization can be viewed as a special case of \emph{convex RL} \citep{hazan2019provably, zhang2020variationalpolicygradientmethod,zahavy2023rewardconvexmdps}. In such framework, the general task is defined via an F-bounded concave\footnote{In practice, the function can be either convex, concave, or even non-convex. The term is used to distinguish the objective from the standard (linear) RL objective. We will assume $\mathcal F$ is concave if not mentioned otherwise.} utility function $\mathcal F : \Delta_{\Ss} \rightarrow (-\infty,F]$, with $F < \infty$, that is a function of the state distribution $d^\pi$. This allows for a generalization of the standard RL objective, which is a linear product between a reward vector and the state(-action) distribution~\citep{puterman2014markov}. Usually, some regularity assumptions are enforced on the function $\mathcal F$. In the following, we align with the literature through the following smoothness assumption:
\begin{restatable}[Lipschitz]{ass}{lip}
    \label{thr:lip} 
    A function $\mathcal F : \Acal \rightarrow \mathbb R$ is Lipschitz-continuous for some constant $L < \infty$, or L-Lipschitz for short, if it holds \(
        |\mathcal F(x) - \mathcal F(y)| \leq L \|x - y\|_1, \; \forall (x,y) \in \Acal^2.
    \)
\end{restatable}
More recently,~\citet{mutti2023challengingcommonassumptionsconvex} noticed that in many practical scenarios only a finite number of $K\in \mathbb N^+$ episodes/trials can be drawn while interacting with the environment, and in such cases one should focus on $d_K$ rather than $d^\pi$. As a result, they contrast the \emph{infinite-trials} objective defined as $\zeta_\infty(\pi) :=\mathcal F(d^\pi)$ with a \emph{finite-trials} one, namely $\zeta_K(\pi) := \E_{d_K\sim p^\pi_K}\mathcal F (d_K)$, noticing that convex MDPs (cMDPs) are characterized by the fact that $\zeta_K(\pi) \leq \zeta_\infty(\pi)$, differently from standard (linear) MDPs for which equality holds. In single-agent convex RL, state entropy maximization is defined as solving a cMDP equipped with an entropy functional~\citep{hazan2019provably}, namely $\mathcal F(d^\pi) := H(d^\pi)$. 

Interestingly, even in single-agent settings, the infinite-trials state entropy objective can be formulated as a non-Markovian reward, as the \emph{value} of being in a state depends on the states visited before and after that state.\footnote{By conditioning with respect to the policy, such a reward would result to be Markovian. However, the contraction argument does not appear to hold for a Bellman operator over this kind of policy-based rewards.} As a consequence, it is not possible to derive Bellman operators of any kind~\citep{takacs1966non, whitehead1995reinforcement, zhang2020variationalpolicygradientmethod}. Conversely, for finite-trials formulations, it is possible to define a Bellman operator by extending the state representation to include the whole trajectories of interaction. Unfortunately though, even this option is intractable as the size of such an extended MDP will grow exponentially.\footnote{ Indeed, the optimization of the finite-trial formulation is NP-hard~\citep{2023mutticonvexrlfinite}.}
\section{Problem Formulation}
\label{sec:problem_formulation}
This section addresses the first of the research questions outlined in the introduction.

\begin{tcolorbox}[colback=softbluegray, colframe=softbluegray,  boxrule=0.5pt, arc=4pt, width=\linewidth]
\begin{center}
  (\textbf{Q1}) \emph{Can we formulate a multi-agent counterpart of the unsupervised pre-training \\ via state entropy maximization in a principled way?} 
\end{center}
\end{tcolorbox}

In fact, when a reward function is not available, the core of the problem resides in finding a well-behaved problem formulation coherent with the task.~\citet{gemp2025convexmarkovgamesframework} recently introduced a convex generalization of MGs called \textbf{convex Markov Games} (cMGs), namely a tuple $\MDP_{\mathcal F} := (\Ns, \Ss, \Acal, \Pb, \mathcal F, \mu, T)$, that consists in a MG equipped with (non-linear) functions of the \emph{stationary joint state} distribution $\mathcal F(d^\pi)$. We expand over this definition, by noticing that state entropy maximization can be casted as solving a cMG equipped with an entropy functional, namely $\mathcal F(\cdot) := H(\cdot)$. Yet, important new questions arise: Over which distributions should agents compute the entropy? How much information should they have access to? Can we define objectives accounting for a finite number of trials? Different answers depict different objectives.

\textbf{Joint Objectives.}~~The first and most straightforward way to formulate the problem is to define it as in the MDP setting, with the joint state distribution simply taking the place of the single-agent state distribution. In this case, we define \emph{infinite-trials} and \emph{finite-trials} \emph{Joint} objectives, respectively
\begin{align}
\vspace{-5pt}
&\max_{\pi = (\pi^i\in \Pi^i)_{i \in [\Ns]}} \ \Big\{ \zeta_\infty(\pi) := \mathcal F(d^\pi)\Big\} &
\max_{\pi = (\pi^i\in \Pi^i)_{i \in [\Ns]}} \ \Big\{ \zeta_K(\pi) :=\E_{d_K\sim p^\pi_K}\mathcal F (d_K)\Big\} \label{eq:mse} 
\end{align} 
In state entropy maximization tasks, an optimal (joint) policy will try to cover the joint state space uniformly, either in expectation or over a finite number of trials respectively. In this, the joint formulation is rather intuitive as it describes the most general case of multi-agent exploration. Moreover, as each agent sees a difference in performance explicitly linked to others, this objective should be able to foster coordinated exploration. As we shall see, this comes at a price.

\textbf{Disjoint Objectives.}~~One might look for formulations that fully embrace the multi-agent setting, such as defining a set of functions supported on per-agent state distributions rather than joint distributions. This intuition leads to \emph{infinite-trials} and \emph{finite-trials} \emph{Disjoint} objectives: 
\begin{align}
         &\Big\{ \max_{\pi^i\in \Pi^i} \zeta^i_\infty(\pi^i,\cdot) := \mathcal F(d_i^{\pi^i,\cdot})\Big\}_{ i \in [\Ns] } &
        \Big\{ \max_{\pi^i\in \Pi^i} \zeta^i_K(\pi^i, \cdot) := \E_{d_K\sim p^{\pi^i,\cdot}_K}\mathcal F (d_{K,i}) \Big\}_{ i \in [\Ns] } \label{eq:mse_decentralized}
\end{align}
According to these objectives, each agent will try to maximize her own marginal state entropy separately, neglecting the effect of her actions over others performances. In other words, we expect this objective to hinder the potential coordinated exploration, where one has to take as step down as so allow a better performance overall.

\begin{wrapfigure}{r}{0.45\columnwidth}  
    \centering
    \vspace{-13pt}  
    \includegraphics[trim=50 65 50 40, clip, width=0.42\columnwidth]{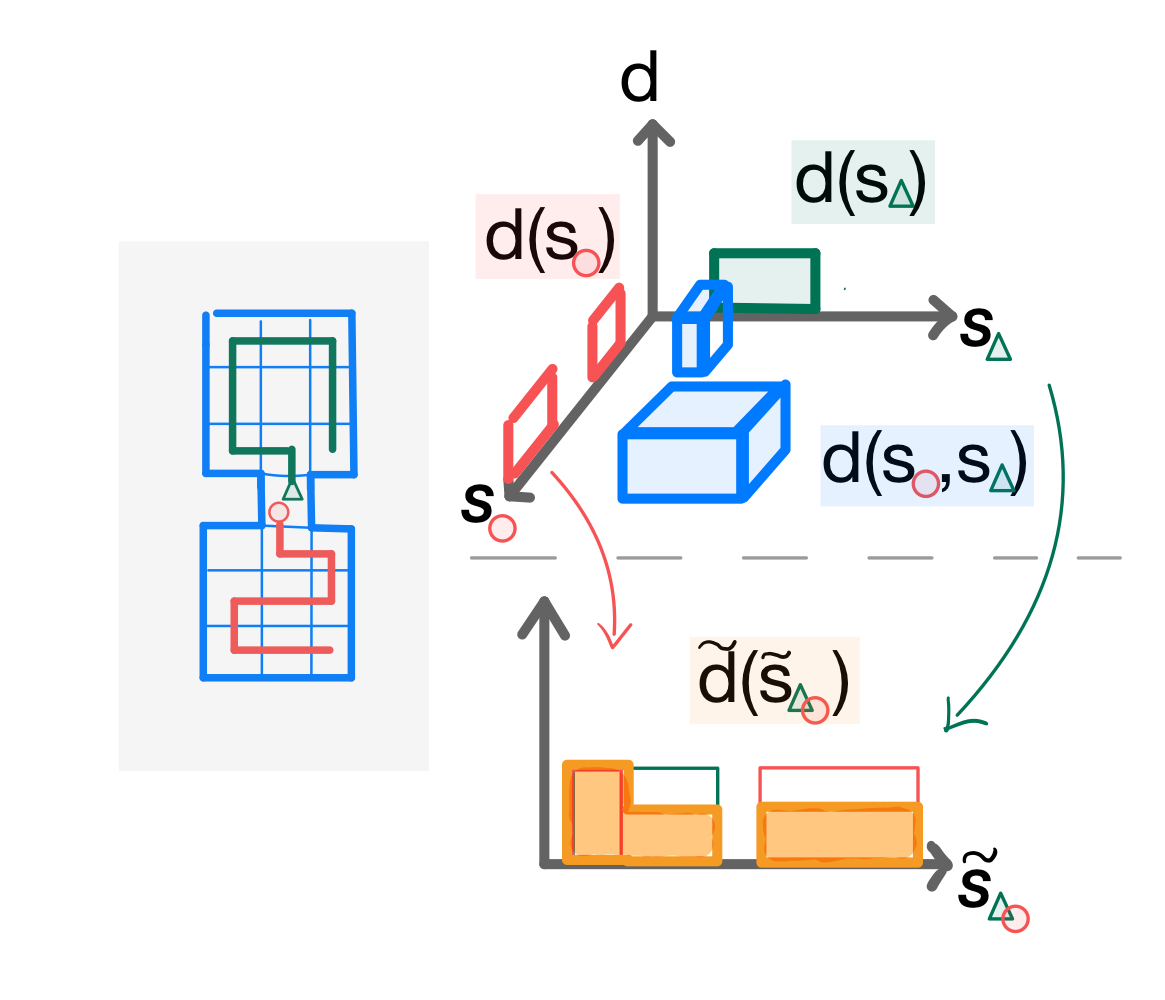}
    \vspace{-6pt} 
    \caption{The interaction on the \emph{left} induces different (empirical) distributions: Marginal distributions for \textcolor{softred}{\textbf{agent 1}} and \textcolor{softgreen}{\textbf{agent 2}} over their respective states; a \textcolor{softblue}{\textbf{joint distribution}} over the product space; a \textcolor{softorange}{\textbf{mixture distribution}} over a common space, defined as the average. The mixture distribution is usually \emph{less sparse}.}
    \label{fig:distributions}
    \vspace{-45pt}  
\end{wrapfigure}

\textbf{Mixture Objectives.}~~At last, we introduce a problem formulation that will later prove capable of uniquely taking advantage of the structure of the problem. First, we introduce the following:

\begin{restatable}[Uniformity]{ass}{mixutre}
    \label{ass:mixture} The agents have the same state space $\Ss_i = \Ss_j = \tilde \Ss, \forall (i,j) \in \Ns \times \Ns$.\footnote{One should notice that even in cMGs where this is not (even partially) the case, the assumption can be enforced by padding together the per-agent states.}
\end{restatable}

Under this assumption, we will drop the agent subscript when referring to the per-agent states and use $\tilde \Ss$ instead. Interestingly, this assumption allows us to define a particular distribution:
\[
    \tilde d^\pi(\tilde s) := \frac{1}{|\Ns|}\sum_{i \in [\Ns]} d^\pi_i(\tilde s) \in \Delta_{\tilde \Ss}.
\]

We refer to this distribution as \emph{mixture} distribution, given that it is defined as a uniform mixture of the per-agent marginal distributions. Intuitively, it describes the average probability over all the agents to be in a common state $\tilde s \in \tilde \Ss$, in contrast with the joint distribution that describes the probability for them to be in a joint state $s$, or the marginals that describes the probability of each one of them separately. In Figure~\ref{fig:distributions} we provide a visual representation of these concepts. Similarly to what happens for the joint distribution, one can define the empirical distribution induced by $K$ episodes as $\tilde d_{K} (\tilde s) = \frac{1}{|\Ns|} \sum_{i \in [\Ns]}  d_{K,i}(\tilde s)$ and $\tilde d^\pi = \E_{\tilde d_K\sim p^\pi_K}[\tilde d_K]$. The mixture distribution allows for the definition of the \emph{Mixture} objectives, in their infinite and finite trials formulations respectively:
\begin{align}
        &\max_{\pi = (\pi^i\in \Pi^i)_{i \in [\Ns]}} \ \Big\{ \tilde \zeta_\infty(\pi) := \mathcal F (\tilde d^{\pi}) \Big\}
       & \max_{\pi = (\pi^i\in \Pi^i)_{i \in [\Ns]}} \ \Big\{ \tilde \zeta_K(\pi) := \E_{\tilde d_{K}\sim p^\pi_K}\mathcal F (\tilde d_{K}) \Big\} \label{eq:mse_mixture}
\end{align}
\vspace{-10pt}

When this kind of objectives is employed in state entropy maximization, the entropy of the mixture distribution decomposes as 
\(H(\tilde d^{\pi}) = \frac{1}{|\Ns|}\sum_{i\in [\Ns]}H(d^{\pi}_i) + \frac{1}{|\Ns|}\sum_{i\in [\Ns]}\kl(d^{\pi}_i||\tilde d^{\pi})\) and one remarkable scenario arises: Agents follow policies possibly inducing lower disjoint entropies, but their induced marginal distributions are maximally different. Thus, the average entropy remains low, but the overall mixture entropy is high due to diversity (i.e., high values of the KL divergences). This scenario has been referred to in~\citet{Kolchinsky_2017} as the \emph{clustering} scenario and, in the following, we will provide additional evidences why this scenario is particularly relevant.



\section{A Formal Characterization of Multi-Agent State Entropy Maximization}
\label{sec:properties}

In the previous section, we provided a principled problem formulation of multi-agent state entropy maximization through an array of different objectives. Here, we address the second research question:
\begin{tcolorbox}[colback=softbluegray, colframe=softbluegray,  boxrule=0.5pt, arc=4pt, width=\linewidth]
\begin{center}
    (\textbf{Q2}) \emph{How are different formulations related? Do crucial theoretical differences emerge?}
\end{center}
\end{tcolorbox}

First of all, we show that if we look at state entropy maximization tasks specifically, i.e. cMGs $\mathcal M_{H}$ equipped with entropy functionals $\mathcal F(\cdot) := H(\cdot)$, all the objectives in infinite-trials formulation can be elegantly linked one to the other though the following result:

\begin{restatable}[Entropy Mismatch]{lem}{entropymismatch}
    \label{lem:entropymismatch} 
    For every cMG $\mathcal M_{H}$, for a fixed (joint) policy $\pi = (\pi^{i})_{i \in \Ns}$ the infinite-trials objectives are ordered according to:
    \[
        \frac{H(d^\pi)}{|\Ns|} \leq \frac{1}{|\Ns|}\sum_{i \in [\Ns]}H(d_i^\pi)  \leq H(\tilde d^\pi) \leq \sup_{i \in [\Ns]}H(d_i^\pi) + \log(|\Ns|) \leq  H(d^\pi) + \log(|\Ns|)
    \]
\end{restatable}
\vspace{-0.2cm}

The full derivation of these bounds is reported in Appendix~\ref{apx:proof}. This set of bounds demonstrates that the difference in performances over infinite-trials objective for the same policy can be generally bounded as a function of the number of agents. In particular, disjoint objectives generally provides poor approximations of the joint objective from the point of view of the single-agent, while the mixture objective is guaranteed to be a rather good lower bound to the joint entropy as well, since its over-estimation scales logarithmically with the number of agents.

It is still an open question how hard it is to actually optimize for these objectives. Now, while general cMGs $\mathcal M_{\mathcal F}$ are an interaction framework whose general properties are far from being well-understood, they surely enjoy some nice properties. In particular, it is possible to exploit the fact that performing Policy Gradient~\citep[PG,][]{sutton1999policy, peters2008reinforcement} independently among the agents is equivalent to running PG jointly, since this is done over the same common objective as for Potential Markov Games~\citep{leonardos2021globalconvergencemultiagentpolicy} (see Lemma~\ref{claim:projection} in Appendix~\ref{apx:theory}). This allows us to provide a rather positive answer, here stated informally and extensively discussed in Appendix~\ref{apx:theory}:

\begin{restatable}[(Informal) Sufficiency of Independent Policy Gradient]{fact}{sufficiencypga}
    \label{fact:sufficiencypga} 
    Under proper assumptions, for every cMG $\mathcal M_{\mathcal F}$, independent Policy Gradient over infinite trials non-disjoint objectives via centralized-information policies of the form $\pi = (\pi^i\in \Delta_\Ss^{\Acal^i})_{i \in [\Ns]}$ converges \emph{fast}.
\end{restatable}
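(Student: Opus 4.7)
The plan is to reduce the multi-agent independent policy gradient dynamics to single-player policy gradient on a convex MDP (cMDP), for which global convergence is already established~\citep{zhang2020variationalpolicygradientmethod,zahavy2023rewardconvexmdps}. The key reduction step is Lemma~\ref{claim:projection}: since all agents update along their respective marginal gradients of a \emph{common} scalar objective (either $\zeta_\infty$ or $\tilde \zeta_\infty$), the concatenated independent update coincides exactly with a gradient step on the product parameterization $\Pi = \times_{i \in \Ns}\Pi^i$ applied to that single objective, exactly as in potential MGs~\citep{leonardos2021globalconvergencemultiagentpolicy}. Centralized-information conditioning is essential here: each $\pi^i \in \Delta_\Ss^{\Acal^i}$ factorizes the joint action distribution given the full state, and the product class is closed under the per-agent direct or softmax parameterizations usually considered in PG analysis.

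Given the reduction, I would construct an auxiliary single-player cMDP $\mathcal M'$ with action set $\Acal = \prod_{i \in \Ns} \Acal_i$, the same state space and transitions as $\MDP_{\mathcal F}$, and a concave utility on joint state occupancies. For the joint objective this utility is simply $\mathcal F(d^\pi)$. For the mixture objective, I would note that $\tilde d^\pi$ is the image of $d^\pi$ under a fixed linear operator $\mathcal L$ that marginalizes each per-agent coordinate and averages uniformly across agents; hence $\mathcal F(\tilde d^\pi) = (\mathcal F \circ \mathcal L)(d^\pi)$ inherits concavity and Lipschitzness of $\mathcal F$ on $\Delta_\Ss$, preserving Assumption~\ref{thr:lip} up to a factor involving $|\Ns|$. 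The restriction to product policies carves out a nonconvex parametric submanifold of the full centralized Markov policy class, but this is precisely the manifold on which independent PG moves, so the trajectories of joint PG on $\mathcal M'$ and independent PG on $\MDP_{\mathcal F}$ coincide step by step.

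With the reduction in place, I would invoke the global convergence results for PG on convex MDPs. Concretely, the variational policy gradient of~\citet{zhang2020variationalpolicygradientmethod} and its refinements~\citep{zahavy2023rewardconvexmdps} provide an $O(1/\sqrt{T})$ rate to the global optimum under (i) concavity and Lipschitzness of $\mathcal F$ on $\Delta_\Ss$, (ii) suitable regularity of the transition and initial state distribution, and (iii) access to occupancy-measure gradients (or a sufficiently expressive parameterization). Translating back, independent PG on $\MDP_{\mathcal F}$ reaches an $\varepsilon$-optimal product policy for both the joint and the mixture objectives in a number of iterations polynomial in $1/\varepsilon$ and the Lipschitz constant $L$, which is the quantitative content behind the word \emph{fast}.

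The main obstacle I expect is bridging stationarity of the product-parameterized PG dynamics to global optimality on the underlying convex problem. The hidden-convexity arguments for convex RL work natively in occupancy space, where the objective is genuinely concave, but the pull-back to the nonconvex product-policy manifold typically requires a Fisher-nondegeneracy or uniform-visitation condition that has to be re-derived in the cMG setting, e.g., by assuming that $\mu$ has full support, in the spirit of~\citet{leonardos2021globalconvergencemultiagentpolicy}. A secondary but delicate point is ensuring that the centralized-information product class is rich enough to realize an optimal joint occupancy on $\mathcal M'$; this is exactly where disjoint objectives break down, since there the per-agent gradients no longer align on a single scalar potential, which explains why the statement excludes the disjoint case.
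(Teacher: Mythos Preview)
Your proposal is correct and follows essentially the same route as the paper: reduce independent PG to joint PG on a common scalar objective via the projection lemma (Lemma~\ref{claim:projection}), then invoke the hidden-convexity analysis of \citet{zhang2020variationalpolicygradientmethod} for convex MDPs. The only packaging difference is that the paper turns what you flag as ``obstacles'' (bijection between parameters and occupancies, stationarity implying global optimality, richness of the product class under centralized conditioning) into explicit regularity assumptions (Assumptions~\ref{assumption:gen-para}--\ref{assumption:gradient}) and then reproduces Zhang et al.'s argument step by step to obtain the sharper $O(1/k)$ rate in Theorem~\ref{theorem:iteration complexity-gen}, rather than the $O(1/\sqrt{T})$ you quote from a black-box invocation.
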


This result suggests that PG should be generally enough for the infinite-trials optimization, and thus, in some sense, these problems might not be of so much interest. However, cMDP theory has outlined that optimizing for infinite-trials objectives might actually lead to extremely poor performance as soon as the policies are deployed over just a handful of trials, i.e. in almost any practical scenario~\citep{2023mutticonvexrlfinite}. We show that this property transfers almost seamlessly to cMGs as well, with interesting additional take-outs:

\begin{restatable}[Finite-Trials Mismatch in cMGs]{thr}{objectivemismatch}
    \label{thr:objectivemismatch} 
    For every cMG $\mathcal M_{\mathcal F}$ equipped with a $L$-Lipschitz function $\mathcal F$, let $K \in \mathbb N^+$ be a number of evaluation episodes/trials, and let $\delta \in (0, 1]$ be a confidence level, then for any (joint) policy $\pi = (\pi^i\in \Pi^i)_{i \in [\Ns]}$, it holds that
    \begin{equation*}
        |\zeta_K(\pi) - \zeta_\infty(\pi)| \leq  LT \sqrt{\frac{2 |\Ss| \log(2T/\delta)}{K}} ,\quad 
        |\zeta^i_K(\pi) - \zeta^i_\infty(\pi)| \leq  LT \sqrt{\frac{2|\tilde \Ss| \log(2T/\delta)}{K}},
    \end{equation*}
    \begin{equation*}
        |\tilde \zeta_K(\pi) - \tilde \zeta_\infty(\pi)| \leq  LT \sqrt{\frac{2|\tilde \Ss| \log(2T/\delta)}{|\Ns|K}}. 
    \end{equation*}
\end{restatable}

In general, this set of bounds confirms that for any given policy, infinite and finite trials performances might be extremely different, and thus optimizing the infinite-trials objective might lead to unpredictable performance at deployment, whenever this is done over a handful of trials. 
This property is inherently linked to the \emph{convex} nature of  cMGs, and \citet{2023mutticonvexrlfinite} introduced it for cMDPs to highlight that the concentration properties of empirical state-distributions~\cite{Weissman2003InequalitiesFT} allow for a nice dependency on the number of trials in controlling the mismatch. In multi-agent settings, the result portraits a more nuanced scene:

\emph{(i)}~The mismatch still scales with the cardinality of the support of the state distribution, yet, for joint objectives, this quantity scales very poorly in the number of agents.\footnote{Indeed, in the case of product state-spaces $\Ss = \times_{i \in [\Ns]} \Ss_i$ the cardinality scales exponentially with the number of agents $|\Ns|$.} Thus, even though optimizing infinite-trials joint objectives might be rather easy \emph{in theory} as Fact~\ref{fact:sufficiencypga} suggests, it might result in poor performances \emph{in practice}. On the other hand,  the quantity is independent of the number of agents for disjoint and mixture objectives.

\emph{(ii)}~Looking at mixture objectives, the mismatch scales sub-linearly with the number of agents $\Ns$. In some sense, the number of agents has the same role as the number of trials: The more the agents the less the deployment mismatch, and at the limit, with $\Ns \rightarrow \infty$, the mismatch vanishes completely.\footnote{In this scenario, all the bounds of Lemma~\ref{lem:entropymismatch} linking different objectives become vacuous.} In other words, this result portraits a striking difference with respect to joint objectives: When facing state entropy maximization over mixtures, a reasonably high number of agents compared to the size of the state-space actually helps, and simple policy gradient over mixture objectives might be enough.

\textbf{Remark 1.}~~Although we do not claim that the mixture objective is the one-fits-all solution, it is nonetheless \emph{well-founded}. In particular whenever the rewards the agents will face in downstream tasks are equivalent for every agent, as it happens in relevant practical settings. When, on the other hand, the agents will aim to visit every joint state while solving for a specific task,\footnote{For instance, when for two agents the reward $r(s, s')$ is different from $r(s',s)$, i.e. the order matters.} the joint entropy objective is preferable, although it may be impractical: We report in Appendix~\ref{apx:comparison} an overall comparison of the two options, providing a motivating example as well.

\textbf{Remark 2.}~~Fact~\ref{fact:sufficiencypga} is valid for \emph{centralized-information} policies only. Up to our knowledge, no guarantees are known for \emph{decentralized-information} policies even in linear MGs. Interestingly though, the finite-trials formulation does offer additional insights on the behavior of optimal decentralized-information policies: The interested reader can learn more about this in Appendix~\ref{apx:policies}.

\section{An Algorithm for Multi-Agent State Entropy Maximization in Practice}
\label{sec:algorithm}
As stated before, a core drive of this work is addressing practical scenarios, where only a handful of trials can be drawn while interacting with the environment. Yet, Th.~\ref{thr:objectivemismatch} implies that optimizing for infinite-trials objectives, as with PG updates in Fact~\ref{fact:sufficiencypga}, might result in poor performance at deployment. As a result, here we address the third research question, that is:
\begin{tcolorbox}[colback=softbluegray, colframe=softbluegray,  boxrule=0.5pt, arc=4pt, width=\linewidth]
\begin{center}
    (\textbf{Q3}) \emph{Can we explicitly pre-train a policy for state entropy maximization \\ in practical multi-agent scenarios?} 
\end{center}
\end{tcolorbox}

To do so, we will shift our attention from infinite trials objectives to finite trials ones explicitly, more specifically on the single-trial case with $K=1$. Remarkably, it is possible to directly optimize the single-trial objective in multi-agent cases with decentralized algorithms: We introduce \emph{Trust Region Pure Exploration} (TRPE), the first decentralized algorithm that explicitly addresses single-trial objectives in cMGs, with state entropy maximization as a special case. TRPE takes inspiration from trust-region based methods as TRPO~\citep{schulman2017trustregionpolicyoptimization} due to their ability to address brittle optimization landscapes in which a small change into the policy parameters of each agent may drastically change the value of the objective function and the use of the trust region, like in TRPE, allows for accounting for this effect.\footnote{Previous works have connected the trust region with the natural gradient~\citep{pajarinen2019compatible}.}While the TRPE algorithm is new, the benefits of trust-region methods in multi-agent settings recently enjoyed an ubiquitous success and interest for their surprising effectiveness~\citep{yu2022surprisingeffectivenessppocooperative}.
\begin{wrapfigure}{r}{0.58\textwidth}
\vspace{-10pt}
\begin{minipage}{\linewidth}
\begin{tcolorbox}
[colback=softbluegray!30, colframe=softbluegray, boxrule=0.5pt, arc=4pt, width=\linewidth, coltext=black, coltitle=black, title=\textbf{Algorithm}: Trust Region Pure Exploration (\textbf{TRPE})]
\refstepcounter{algorithm}
\label{alg:trpe}
\begin{algorithmic}[1]
    \STATE \textbf{Input}: exploration horizon $T$, trajectories $N$, trust-region threshold $\delta$, learning rate $\eta$
    \STATE Initialize $\vtheta = (\theta^i)_{i \in [\Ns]}$
    \FOR{epoch = $1, 2, \ldots$ until convergence}
        \STATE Collect $N$ trajectories with $\pi_{\vtheta}= (\pi^i_{\theta^i})_{i \in [\Ns]}$
        \FOR{agent $i=1, 2, \ldots$ \textbf{concurrently}}
            \STATE Set datasets $\mathcal{D}^i = \{ (\sbf^i_n,\va^i_n), \zeta_1^n\}_{n \in [N]}$
            \STATE $h = 0$, $\theta^i_h = \theta^i$
            \WHILE{$\kl(\pi^i_{ \theta^i_h } \| \pi^i_{\theta^i_0}) \leq \delta$}
                \STATE Compute $\hat{\mathcal L}^i(\theta^i_h/ \theta^i_0)$ via IS as in Eq.~\eqref{eq:empiricalis}
                \STATE $\theta^i_{h + 1} = \theta^i_{h} + \eta \nabla_{\theta^i_{h}} \hat{\mathcal L}^i(\theta^i_h/ \theta^i_0)$
                \STATE $h \gets h + 1$
            \ENDWHILE
            \STATE $\theta^i \gets \theta^i_h$
        \ENDFOR
    \ENDFOR
    \STATE \textbf{Output}: joint policy $\pi_{\vtheta}= (\pi^i_{\theta^i})_{i \in [\Ns]}$
\end{algorithmic}
\tcbset{label={alg:trpe}}
\end{tcolorbox}
\end{minipage}
\vspace{-30pt}
\end{wrapfigure}
In fact, trust-region analysis nicely align with the properties of finite-trials formulations and allow for an elegant extension to cMGs through the following.
\begin{restatable}[Surrogate Function over a Single Trial]{defi}{surrogate} \label{def:surrogate} For every cMG $\mathcal M_{\mathcal F}$ equipped with a $L$-Lipschitz function $\mathcal F$, let $d_1$ be a general single-trial distribution $d_1 = \{ d_1, d_{1,i}, \tilde d_1\}$, then for any per-agent deviation over policies $\pi = (\pi^i, \pi^{-i})$, $ \tilde \pi = (\tilde \pi^i, \pi^{-i})$, it is possible to define a per-agent \emph{Surrogate Function} $\mathcal L^i(\tilde \pi/\pi)$ of the form \(
        \mathcal L^i(\tilde \pi/\pi) = \E_{d_1\sim p^\pi_1} \rho^i_{\tilde \pi/ \pi} \mathcal F (d_1),\)
where $\rho^i$ is the per-agent importance-weight coefficient $\rho^i_{\tilde \pi/ \pi} = p^{\tilde \pi}_1 / p^\pi_1 = \prod_{t \in [T]} \frac{\tilde \pi^i(\va^i[t]|\sbf^i[t])}{ \pi^i(\va^i[t]|\sbf^i[t])}$.
\end{restatable}

From this definition, it follows that the trust-region algorithmic blueprint of~\citet{schulman2017trustregionpolicyoptimization} can be directly applied to single-trial formulations, with a parametric space of stochastic differentiable policies for each agent $\Theta= \{\pi^i_{\theta^i}: \theta^i \in \Theta^i \subseteq \mathbb R^q\}$. 

In practice, KL-divergence is employed for greater scalability provided a trust-region threshold $\delta$, we address the following optimization problem for each agent:\vspace{-4pt}
\begin{align*}
&\max_{\tilde \theta^i \in \Theta^i} \mathcal L^i(\tilde \theta^i/\theta^i) \qquad \text{s.t. } \kl(\pi^i_{\tilde \theta^i } \| \pi^i_{\theta^i}) \leq \delta \vspace*{-8pt}
\end{align*}
where we simplified the notation by letting $\mathcal L^i(\tilde \theta^i/\theta^i)  := \mathcal L^i(\pi^i_{\tilde \theta^i},\pi^{-i}_{ \theta^{-i}}/\pi_{\theta} )$.\footnote{More precisely, $ \mathcal {L}^i(\pi^i_{\tilde \theta^i}, \pi^{-i}_{ \theta^{-i}}/\pi_\theta) = \mathbb{E}_{d_1 \sim p_1^{\pi_\theta} } p_1^{\pi^i_{\tilde \theta^i}, \pi^{-i}_{ \theta^{-i}}}/ p_1^{\pi^{i}_{ \theta^{i}}, \pi^{-i}_{ \theta^{-i}}} \mathcal F (d_1)$.}

The main idea then follows from noticing that the surrogate function in Def.~\ref{def:surrogate} consists of an Importance Sampling (IS) estimator~\citep{mcbook}, and it is then possible to optimize it in a fully decentralized and off-policy manner~\citep{metelli2020pois, muttirestelli2020}. More specifically, given a pre-specified objective of interest $\zeta_1 \in \{\zeta_1,\zeta_1^i, \tilde \zeta_1\}$, agents sample $N$ trajectories $\{(\sbf_n, \va_n)\}_{n \in [N]}$ following a (joint) policy with parameters $\vtheta_0 = (\theta^i_0,\theta^{-i}_0)$. They then compute the values of the objective for each trajectory, building separate datasets $\mathcal{D}^i = \{ (\sbf^i_n,\va^i_n), \zeta_1^n\}_{n \in [N]}$ and using it to compute the Monte-Carlo approximation of the surrogate function, namely 
\begin{align}
&\hat{\mathcal L}^i(\theta^i_h/ \theta^i_0) = \frac{1}{N}\sum_{n\in[N]} \rho^{i,n}_{\theta^i_h/ \theta^i_0} \zeta^n_1, \quad \rho^{i,n}_{\theta^i_h/ \theta^i_0} = \prod_{t \in [T]} \pi^i_{\theta^i_h}(\va^i_n[t]|\sbf^i_n[t])/ \pi^i_{\theta^i_0}(\va^i_n[t]|\sbf^i_n[t]), \label{eq:empiricalis}
\end{align}
and $\zeta^n_1$ is the plug-in estimator of the entropy based on the empirical measure $d_1$~\citep{paninski2003}. 
Finally, at each off-policy iteration $h$, each agent updates its parameter via gradient ascent
$
    \theta^i_{h+1} \leftarrow \theta^i_{h} + \eta \nabla_{\theta^i_h} \hat{\mathcal L}^i(\theta^i_h/ \theta^i_0) 
$
until the trust-region boundary is reached, i.e., when it holds $
    \kl(\pi^i_{\tilde \theta^i } \| \pi^i_{\theta^i}) > \delta.
$
The psudo-code of TRPE is reported in Algorithm~\ref{alg:trpe}. We remark that even though TRPE is applied to state entropy maximization, the algorithmic blueprint does not explicitly require the function $\mathcal F$ to be the entropy function and thus it is of independent interest.

\textbf{Limitations.}~~The main limitations of the proposed methods are two. First, the Monte-Carlo estimation of single-trial objectives might be sample-inefficient in high-dimensional tasks. However, more efficient estimators of single-trial objectives remain an open question in single-agent convex RL as well, as the convex nature of the problem hinders the applicability of Bellman operators. Secondly, the plug-in estimator of the entropy is applicable to discrete spaces only, but designing scalable estimators of the entropy in continuous domains is usually a contribution \emph{per se}~\citep{mutti2021taskagnosticexplorationpolicygradient}.
\section{Empirical Corroboration}
\label{sec:experiments}
In this section, we address the last research question, that is:

\begin{tcolorbox}[colback=softbluegray, colframe=softbluegray,  boxrule=0.5pt, arc=4pt, width=\linewidth]
\begin{center}
    (\textbf{Q4}) \emph{Do crucial differences emerge in practice? Does this have \\ an impact on downstream tasks learning?}
\end{center}
\end{tcolorbox}

by providing empirical corroboration of the findings discussed so far. Especially, we aim to answer the following questions: (\textbf{Q4.1}) Is Algorithm~\ref{alg:trpe} actually capable of optimizing finite-trials objectives? (\textbf{Q4.2}) Do different objectives enforce different behaviors, as expected from Section~\ref{sec:problem_formulation}? (\textbf{Q4.3}) Does the \emph{clustering} behavior of mixture objectives play a crucial role? If yes, when and why?

Throughout the experiments, we will compare the result of optimizing finite-trial objectives, either joint, disjoint, mixture ones, through Algorithm~\ref{alg:trpe} via fully decentralized policies. The experiments will be performed with different values of the exploration horizon $T$, so as to test their capabilities in different exploration efficiency regimes.\footnote{The exploration horizon $T$, rather than being a given trajectory length, has to be seen as a parameter of the exploration phase which allows to tradeoff exploration quality with exploration efficiency.} The full implementation details are reported in Appendix~\ref{apx:exp}.

\textbf{Experimental Domains.}~~The experiments were performed with the aim to illustrate essential features of state entropy maximization suggested by the theoretical analysis, and for this reason the domains were selected for being challenging while keeping high interpretability. The first is a notoriously difficult multi-agent exploration task called \emph{secret room}~\citep[MPE,][]{liu2021cooperative},\footnote{We highlight that all previous efforts in this task employed centralized-information policies. On the other hand, we are interested on the role of the entropic feedback in fostering coordination rather than full-state conditioning, thus we employed decentralized-information policies.} referred to as  Env.~(\textbf{i}). In such task, two agents are required to reach a target while navigating over two rooms divided by a door. In order to keep the door open, at least one agent have to remain on a switch. Two switches are located at the corners of the two rooms. The hardness of the task then comes from the need of coordinated exploration, where one agent allows for the exploration of the other. The second is a simpler exploration task yet over a high dimensional state-space, namely a 2-agent instantiation of \emph{Reacher}~\citep[MaMuJoCo,][]{peng2021facmac}, referred to as Env.~(\textbf{ii}). Each agent corresponds to one joint and equipped with decentralized-information policies. In order to allow for the use of plug-in estimator of the entropy~\citep{paninski2003}, each state dimension was discretized over 10 bins.
\begin{figure*}[!t]
\vspace{-6pt}

\begin{minipage}[t]{0.64\textwidth} 
    \centering
    \hspace{15pt}
    \begin{tikzpicture}
    \node[draw=black, rounded corners, inner sep=2pt, fill=white] (legend) at (0,0) {
        \begin{tikzpicture}[scale=0.8]
            \draw[thick, color={rgb,255:red,230; green,159; blue,0}, opacity=0.8] (0,0) -- (1,0);
            \fill[color={rgb,255:red,230; green,159; blue,0}, opacity=0.2] (0,-0.1) rectangle (1,0.1);
            \node[anchor=west, font=\scriptsize] at (1.2,0) {Mixture};
            
            \draw[thick, dashed, color={rgb,255:red,86; green,180; blue,233}, opacity=0.8] (2.5,0) -- (3.5,0);
            \fill[color={rgb,255:red,86; green,180; blue,233}, opacity=0.2] (2.5,-0.1) rectangle (3.5,0.1);
            \node[anchor=west, font=\scriptsize] at (3.7,0) {Joint};

            \draw[thick, dotted, color={rgb,255:red,204; green,121; blue,167}, opacity=0.8] (4.7,0) -- (5.7,0);
            \fill[color={rgb,255:red,204; green,121; blue,167}, opacity=0.2] (4.7,-0.1) rectangle (5.7,0.1);
            \node[anchor=west, font=\scriptsize] at (5.9,0) {Disjoint};
            
            \draw[thick, color={rgb,255:red,153; green,153; blue,153}, opacity=0.8] (7.2,0) -- (8.2,0);
            \fill[color={rgb,255:red,153; green,153; blue,153}, opacity=0.2] (7.2,-0.1) rectangle (8.2,0.1);
            \node[anchor=west, font=\scriptsize] at (8.4,0) {Uniform};
        \end{tikzpicture}
    };
\end{tikzpicture}

    \begin{subfigure}[t]{0.49\textwidth}
        \includegraphics[width=\textwidth]{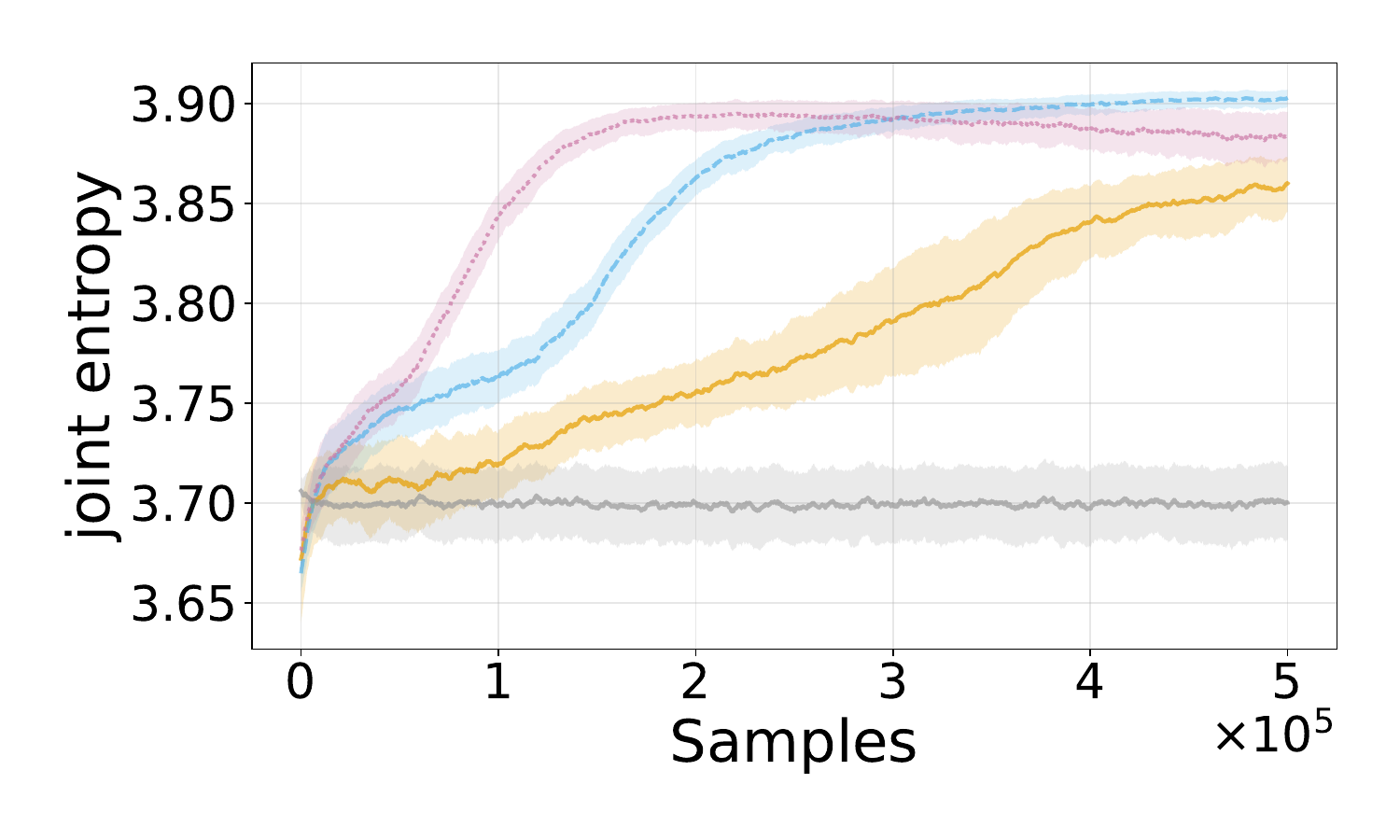}
        \vspace{-18pt}
        \caption{\centering Joint Entropy}
        \label{subfig:image2}
    \end{subfigure}
    \hfill
    \begin{subfigure}[t]{0.49\textwidth}
        \includegraphics[width=\textwidth]{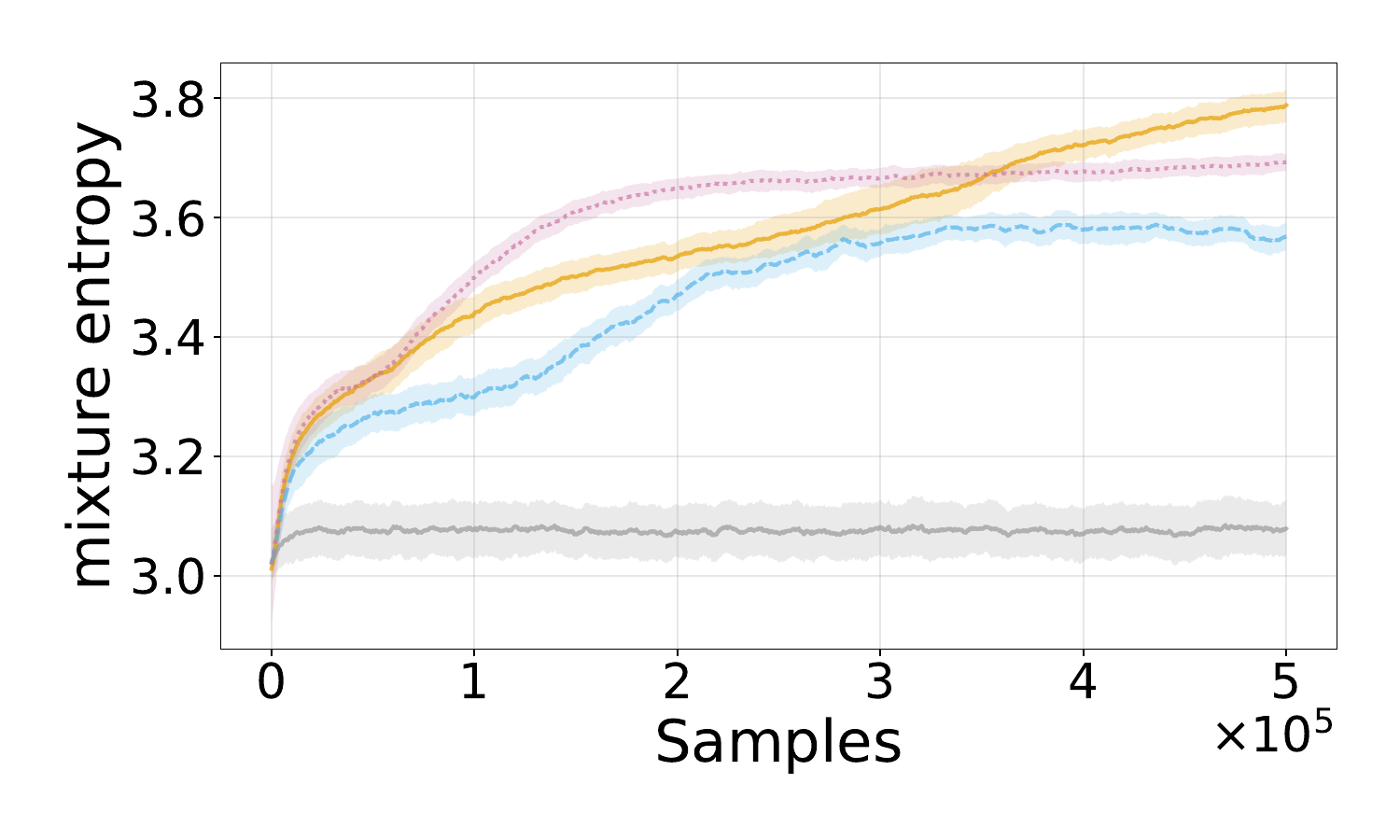}
        \vspace{-18pt}
        \caption{\centering Mixture Entropy}
        \label{subfig:image3}
    \end{subfigure}
\end{minipage}
\hfill
\raisebox{-17pt}{
\begin{minipage}[t]{0.34\textwidth} 
    \begin{subfigure}[t]{0.48\textwidth}
        \includegraphics[width=\textwidth]{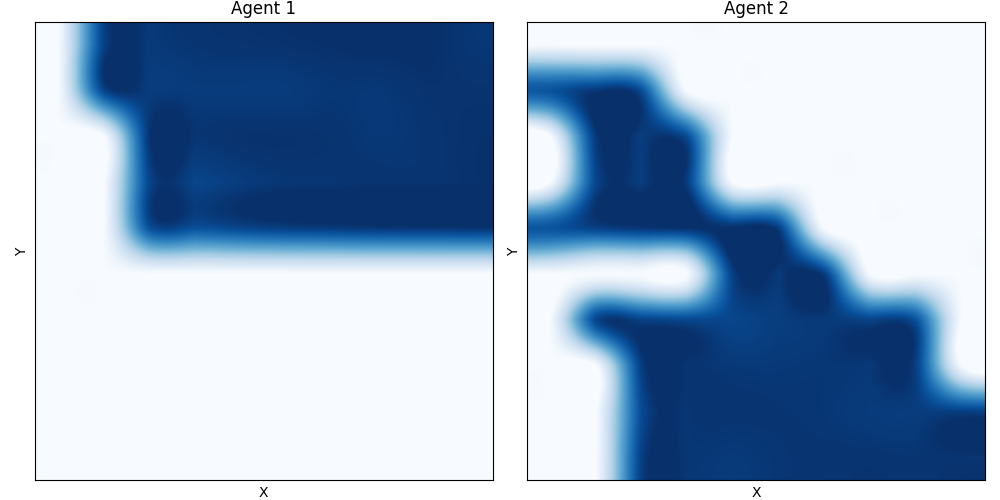}
        \caption{\centering Mixture}
        \label{subfig:image7}
    \end{subfigure}
    \hfill
    \begin{subfigure}[t]{0.48\textwidth}
        \includegraphics[width=\textwidth]{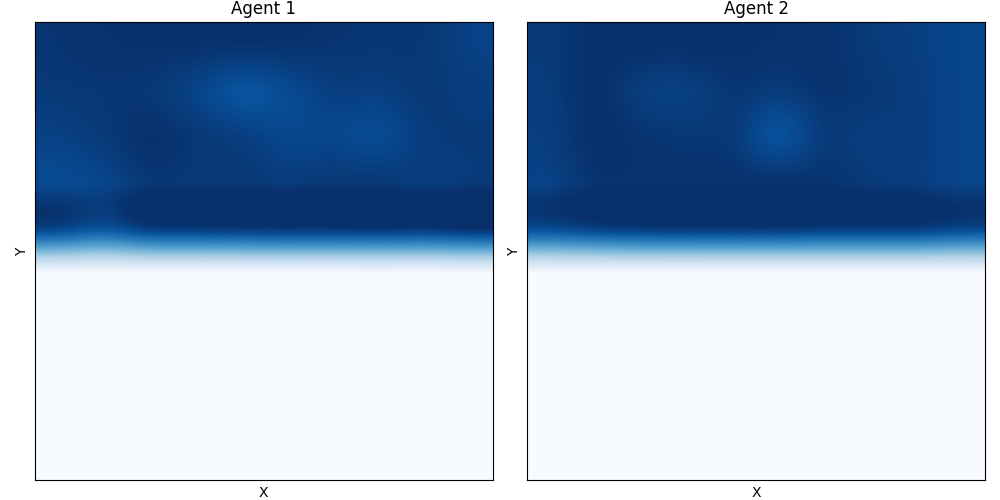}
        \caption{\centering Joint}
        \label{subfig:image5}
    \end{subfigure}
    \vspace{2pt}
    
    \begin{subfigure}[t]{0.48\textwidth}
        \includegraphics[width=\textwidth]{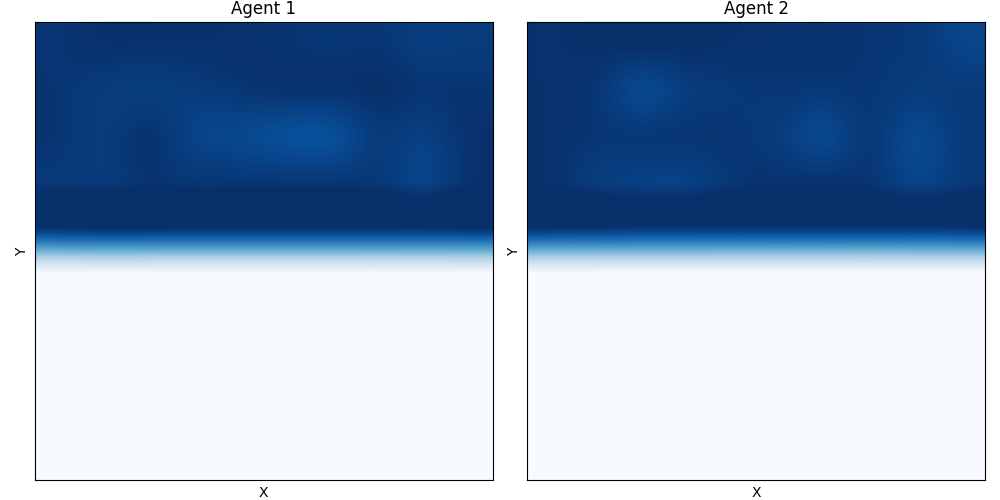}
        \caption{\centering Disjoint}
        \label{subfig:image6}
    \end{subfigure}
    \hfill
    \begin{subfigure}[t]{0.48\textwidth}
        \includegraphics[width=\textwidth]{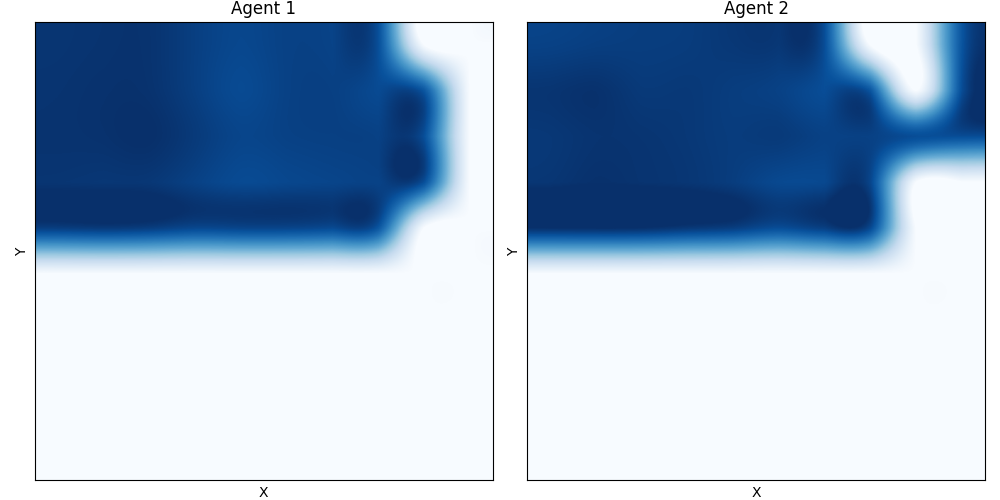}
        \caption{\centering Uniform}
        \label{subfig:image4}
    \end{subfigure}
\end{minipage}
}
\vspace{-5pt}
\caption{Single-trial Joint and Mixture Entropy induced by different objective optimization along a $T = 50$ horizon. (\emph{Right}) State Distributions of two agents induced by different learned policies. We report the average and 95\% confidence interval over 4 runs.}
\label{fig:room}
\vspace{-15pt}
\end{figure*}

\textbf{State Entropy Maximization.}~~As common for the unsupervised RL framework~\citep{hazan2019provably, laskin2021urlb, liu2021behavior, mutti2021taskagnosticexplorationpolicygradient}, Algorithm~\ref{alg:trpe} was first tested in her ability to optimize for state entropy maximization objectives, thus in environments \emph{without rewards}. In Figure~\ref{fig:room}, we report the results for a short, and thus more challenging, exploration horizon $(T=50)$ over Env.~(\textbf{i}), as it is far more interpretable. Other experiments with longer horizons or over Env.~(\textbf{ii}) can be found in Appendix~\ref{apx:exp}. Interestingly, at this challenging exploration regime, when looking at the joint entropy in Figure~\ref{subfig:image2}, joint and disjoint objectives perform rather well compared to mixture ones in terms of induced joint entropy, while they fail to address mixture entropy explicitly, as seen in Figure~\ref{subfig:image3}. On the other hand mixture-based objectives result in optimizing both mixture \emph{and} joint entropy effectively, as one would expect by the bounds in Th.~\ref{lem:entropymismatch}. By looking at the actual state visitation induced by the trained policies, the difference between the objectives is apparent. While optimizing joint objectives, agents exploit the high-dimensionality of the joint space to induce highly entropic distributions even without exploring the space uniformly via coordination (Fig.~\ref{subfig:image5}); the same outcome happens in disjoint objectives, with which agents focus on over-optimizing over a restricted space loosing any incentive for coordinated exploration (Fig. \ref{subfig:image6}). On the other hand, mixture objectives enforce a clustering behavior (Fig. \ref{subfig:image7}) and result in a better efficient exploration.\footnote{While it is true that mixture objectives optimization appears to lead to slower optimization, this is the result of such pathological behaviors.}

\textbf{Policy Pre-Training via State Entropy Maximization.}~~Importantly, while metrics in Fig.~\ref{fig:room} are indeed interesting qualitative metrics, especially to understand how the unsupervised optimization process works, they do not fully capture the ultimate goal in a vacuum: the ultimate goal of unsupervised (MA)RL is to provide good pre-trained models for (MA)RL. As such, the most important experimental metric to look at is the return achieve in downstream tasks, where the policy optimizing the mixture entropy fares well in comparison to others. Thus, we tested the effect of pre-training policies via state entropy maximization as a way to alleviate the well-known hardness of \emph{sparse-reward} settings. In order to do so, we employed a multi-agent counterpart of the TRPO algorithm~\cite{schulman2017trustregionpolicyoptimization} with different pre-trained policies. First, we investigated the effect on the learning curve in the hard-exploration task of Env.~(\textbf{i}) under long horizons ($T=150$), with a \emph{worst-case goal} set on the opposite corner of the closed room. Pre-training via mixture objectives still lead to a faster learning compared to initializing the policy with a uniform distribution. On the other hand, joint objective pre-training did not lead to substantial improvements over standard initializations. More interestingly, when extremely short horizons were taken into account ($T=50$) the difference became appalling, as shown in Fig.~\ref{subfig:image9}: pre-training via mixture-based objectives lead to faster learning and higher performances, while pre-training via disjoint objectives turned out to be even \emph{harmful} (Fig.~\ref{subfig:image10}). This was motivated by the fact that the disjoint objective overfitted the task over the states reachable without coordinated exploration, resulting in almost deterministic policies, as shown in Fig.~\ref{fig:333} in Appendix~\ref{apx:exp}. Finally, we tested the zero-shot capabilities of policy pre-training on the simpler but high dimensional exploration task of Env.~(\textbf{ii}), where the goal was sampled randomly between \emph{worst-case positions} at the boundaries of the region reachable by the arm. As shown in Fig.~\ref{subfig:image11}, both joint and mixture were able to guarantee zero-shot performances via pre-training compatible with MA-TRPO after learning over $2$e$4$ samples, while disjoint objectives were not. On the other hand, pre-training with joint objectives showed an extremely high-variance, leading to worst-case performances not better than the ones of random initialization. Mixture objectives on the other hand showed higher stability in guaranteeing compelling zero-shot performance. These results are the first to extend findings from single-agent environments~\citep{zisselman2023explore} to multi-agent ones.
\begin{figure*}[!]
    \centering
    \begin{tikzpicture}
    \node[draw=black, rounded corners, inner sep=2pt, fill=white] (legend) at (0,0) {
        \begin{tikzpicture}[scale=0.8]
            \draw[thick, color={rgb,255:red,230; green,159; blue,0}, opacity=0.8] (0,0) -- (1,0);
            \fill[color={rgb,255:red,230; green,159; blue,0}, opacity=0.2] (0,-0.1) rectangle (1,0.1);
            \node[anchor=west, font=\scriptsize] at (1.2,0) {Mixture};
            
            \draw[thick, dashed, color={rgb,255:red,86; green,180; blue,233}, opacity=0.8] (2.5,0) -- (3.5,0);
            \fill[color={rgb,255:red,86; green,180; blue,233}, opacity=0.2] (2.5,-0.1) rectangle (3.5,0.1);
            \node[anchor=west, font=\scriptsize] at (3.7,0) {Joint};

            \draw[thick, dotted, color={rgb,255:red,204; green,121; blue,167}, opacity=0.8] (4.7,0) -- (5.7,0);
            \fill[color={rgb,255:red,204; green,121; blue,167}, opacity=0.2] (4.7,-0.1) rectangle (5.7,0.1);
            \node[anchor=west, font=\scriptsize] at (5.9,0) {Disjoint};
            
            \draw[thick, color={rgb,255:red,153; green,153; blue,153}, opacity=0.8] (7.2,0) -- (8.2,0);
            \fill[color={rgb,255:red,153; green,153; blue,153}, opacity=0.2] (7.2,-0.1) rectangle (8.2,0.1);
            \node[anchor=west, font=\scriptsize] at (8.4,0) {Random Initialization};
        \end{tikzpicture}
    };
\end{tikzpicture}
    \vfill
    \begin{subfigure}[b]{0.32\textwidth}
        \includegraphics[width=\textwidth]{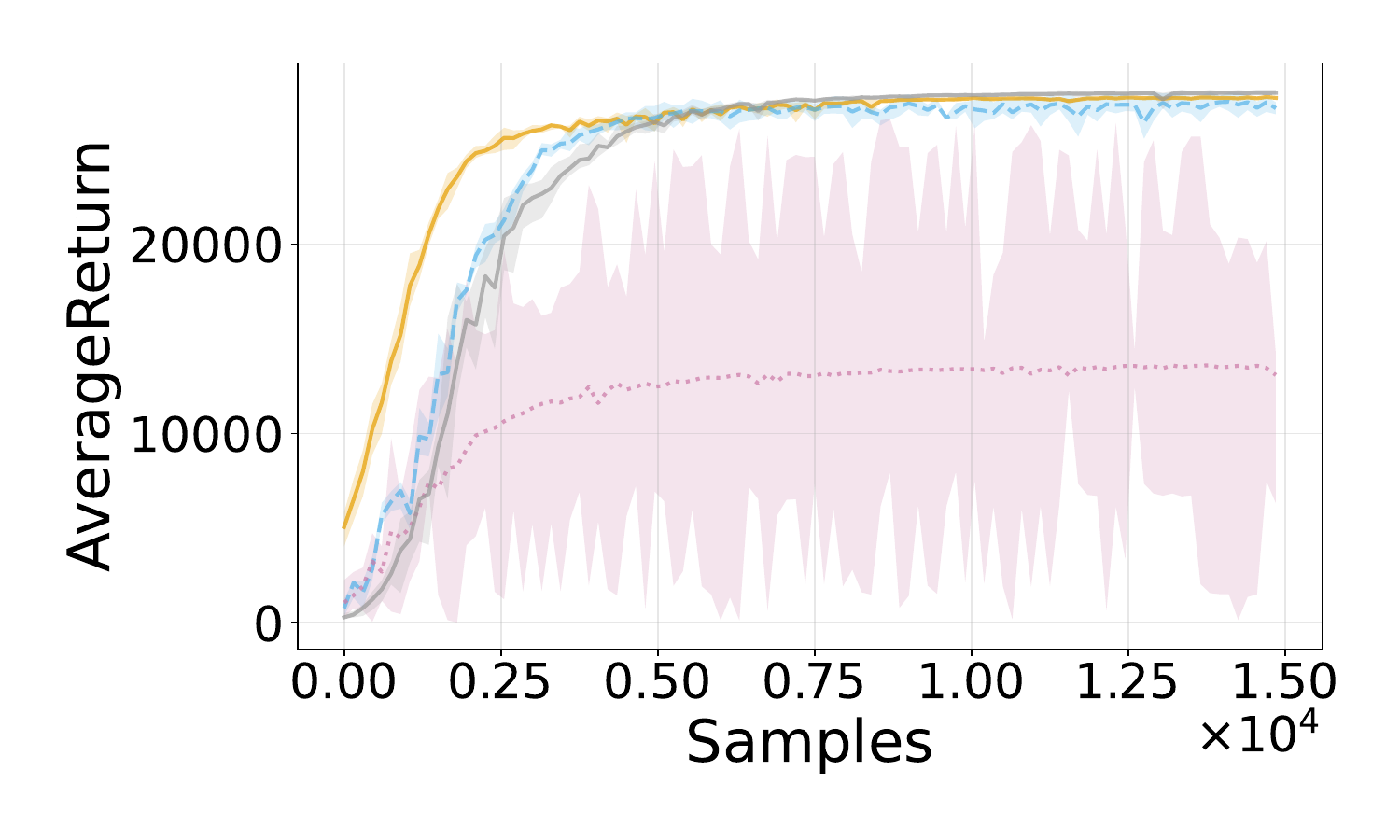}
        \vspace{-18pt}
        \caption{\centering MA-TRPO + TRPE Pre-Training (Env.~(\textbf{i}), $T=150$)}
        \label{subfig:image9}
    \end{subfigure}
    \hfill
    \begin{subfigure}[b]{0.32\textwidth}
        \includegraphics[width=\textwidth]{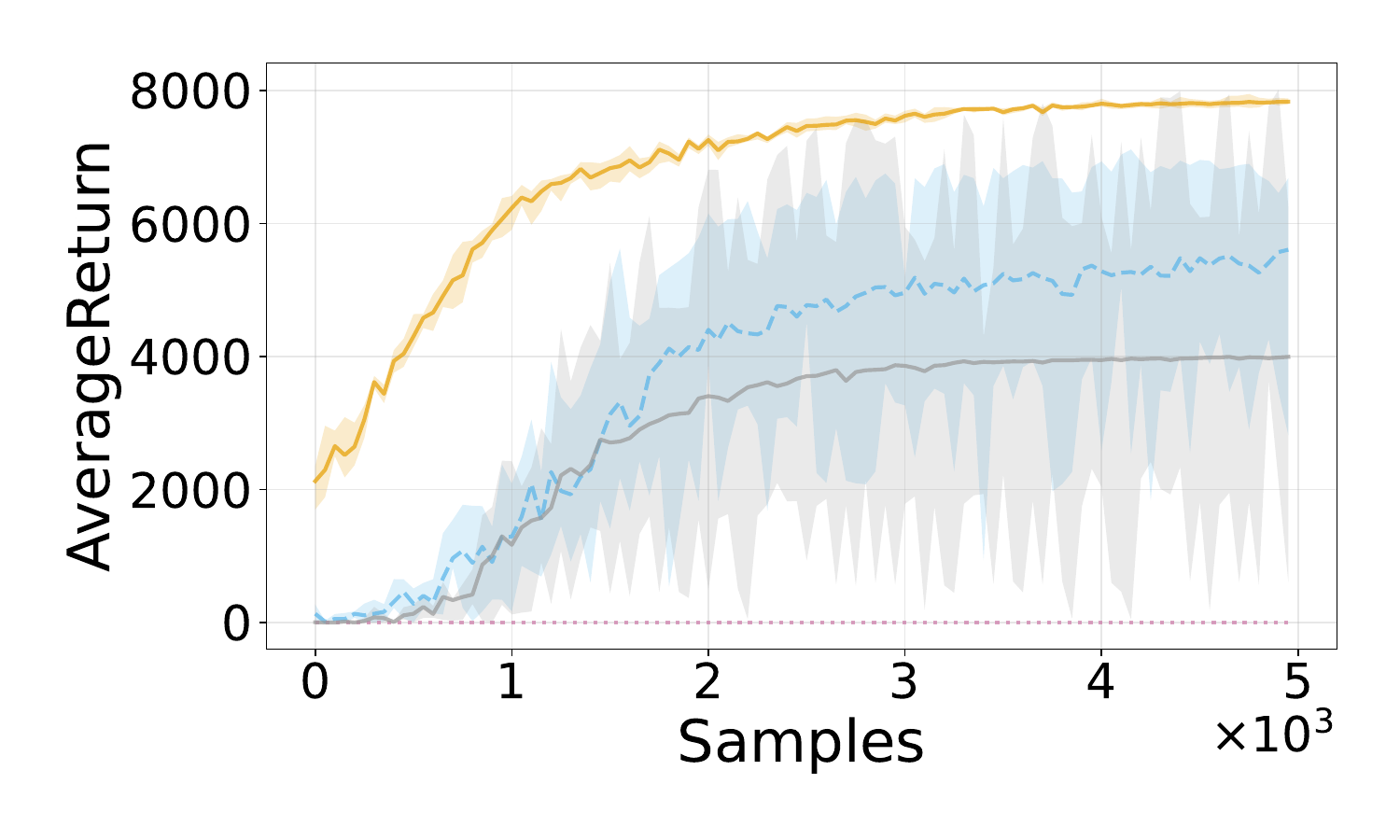}
        \vspace{-18pt}
        \caption{\centering MA-TRPO + TRPE Pre-Training (Env.~(\textbf{i}), $T=50$)}
        \label{subfig:image10}
    \end{subfigure}
    \hfill
    \begin{subfigure}[b]{0.34\textwidth}
        \centering
        \includegraphics[width=0.8\textwidth]{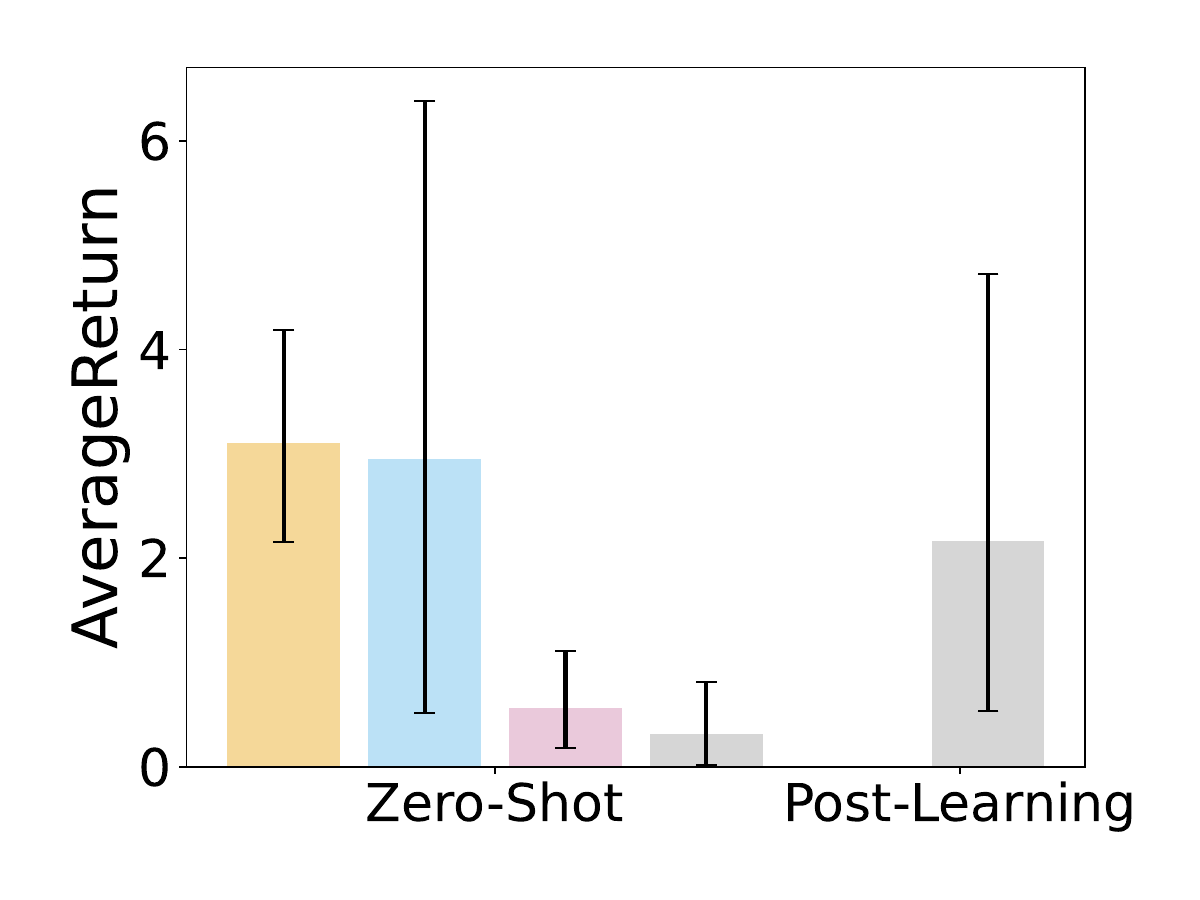}
        \vspace{-10pt}
        \caption{\centering MA-TRPO + TRPE Pre-Training (Env.~(\textbf{ii}), $T=100$)}
        \label{subfig:image11}
    \end{subfigure}
\caption{Effect of pre-training in sparse-reward settings. (\emph{Left}) Policies initialized with either Uniform or TRPE pre-trained policies. (\emph{Right}) Policies initialized with either Zero-Mean or TRPE pre-trained policies. We report the average and 95\% c.i. over 4 runs over worst-case goals.}
\label{fig:pretraining}
\vspace{-10pt}
\end{figure*}

\textbf{Takeaways.}~~Overall, the proposed experiments managed to answer to all of the experimental questions:~(\textbf{Q4.1})~Algorithm~\ref{alg:trpe} is indeed able to optimize for finite-trial objectives;~(\textbf{Q4.2})~\textbf{Mixture objectives enforce coordination}, essential when high efficiency is required, while joint or disjoint objectives may fail to lead to relevant solutions because of under or over optimization;~(\textbf{Q4.3})~\textbf{The efficient coordination} through mixture objectives enforces the ability of \textbf{pre-training via state entropy maximization} to lead to \textbf{faster and better training} and even \textbf{zero-shot generalization}.
\section{Related Works}
\label{sec:relatedworks}
Below, we summarize the most relevant work investigating related concepts.

\textbf{Entropic Functionals in MARL.}~~A large plethora of works on both swarm robotics~\citep{mclurkin2005dynamic, breitenmoser2010voronoi} and multi-agent intrinsic motivation, such as~\citep[][]{ iqbal2019coordinated,yang2021ciexplore, zhang2021made, zhang2023self, xu2024population, toquebiau2024joint}, investigated the effects of employing entropic-like functions to boost exploration and performances in down-stream tasks. Importantly, these works are of empirical nature, and they do not investigate the theoretical properties of cMGs or multi-agent state entropy maximization, nor they propose algorithms able to pre-train policies \emph{without access} to extrinsic rewards.\footnote{The interested reader can refer to~\citet{mutti2021taskagnosticexplorationpolicygradient, liu2021behavior} for an extensive investigation of the fundamental differences between intrinsic motivation and state entropy maximization.} Finally, while a similar notion of cMGs was proposed in~\citep{gemp2025convexmarkovgamesframework, kalogiannis2025solving}, their contributions are focused on the existence and computation of equilibria and performance of centralized algorithms over infinite-trials objectives.

\textbf{State Entropy Maximization.}~~Entropy maximization in MDPs was first introduced in~\citet{hazan2019provably} and then investigated extensively in various subsequent works~\citep[e.g.,][]{muttirestelli2020, mutti2021taskagnosticexplorationpolicygradient, mutti2022importance, mutti2022unsupervised, mutti2023unsupervised, liu2021behavior, liu2021aps, seo2021state, yarats2021reinforcement, zhang2020exploration,guo2021geometric, yuan2022renyi, nedergaard2022k, yang2023cem,  pmlr-v202-tiapkin23a, jain2023maximum, kim2023accelerating, zisselman2023explore, li2024element, bolland2024off, zamboni2024limits, zamboni2024explore, depaola2025enhancing}. Its infinite-trials formulation\footnote{Conversely, the finite-trial formulation targeted by Algorithm~\ref{alg:trpe} is not studied in the literature of regularized MDPs.} can also be seen as a particular reward-free instance of state-entropy regularized MDPs~\citep{brekelmans2022your, ashlag2025state}, although this reduction does not alleviate the aforementioned criticalities in solving such problems in multi-agent settings. To the best of our knowledge, our work is the first to study a multi-agent variation of the state entropy maximization problem.

\textbf{Policy Optimization.}~~
Finally, our algorithmic solution (Algorithm~\ref{alg:trpe}) draws heavily on the literature of policy optimization and trust-region methods~\citep{schulman2017trustregionpolicyoptimization}.
Specifically, we considered an IS policy gradient estimator, which is partially inspired by the work of~\citet{metelli2020pois}, but considers other forms of IS estimators, such as non-parametric k-NN estimators previously employed in~\citet{mutti2021taskagnosticexplorationpolicygradient}. 
\section{Conclusions and Perspectives}
\label{sec:conclusions}

In this paper, we introduce a principled framework for unsupervised pre-training in MARL via state entropy maximization. First, we formalize the problem as a convex generalization of Markov Games, and show that it can be defined via several different objectives: one can look at the joint distribution among all the agents, the marginals which are agent-specific, or the mixture which is a tradeoff of the two. Thus, we link these three options via performance bounds and we theoretically characterize how the problem, even when tractable in theory, is non-trivial in practice. Then, we design a practical algorithm and we use it in a set of experiments to confirm the expected superiority of mixture objectives in practice, due to their ability to enforce efficient coordination over short horizons. Future works can build over our results in many directions, for instance by pushing forward the knowledge on convex Markov Games, developing scalable algorithms for continuous domains, or performing extensive empirical investigation over large scale problems. We believe that our work can be a crucial step in the direction of extending policy pre-training via state entropy maximization in a principled way to yet more practical settings.


\bibliographystyle{plainnat}
\bibliography{biblio}


\clearpage
\appendix


\section{Further Insights on the Unsupervised Objectives.}
\label{apx:comparison}

\paragraph*{Motivating Example.}~Let us envision a team of agents in a "search and rescue" task. In a specific building (environment) the target may be found in different place (different rewards) and the unsupervised pre-training phase aims to prepare for all of them. Mixture entropy is a good surrogate objective in this case, as the agents will split up into different portions of the buildings to traverse in order to find the target quickly.

\textbf{Clarification on the Ideal Objective: Joint and Mixture Objectives Comparison}

As in single-agent settings, the goal of unsupervised (MA)RL via state entropy pre-training is to learn exploration for any possible task while interacting with a reward-free environment. If the tasks is assumed to be represented through state-based reward functions, the latter translates into state coverage: The state entropy is a proxy for state coverage~\citep{hazan2019provably,mutti2021taskagnosticexplorationpolicygradient, liu2021behavior}.

As a consequence, the most natural state entropy formulation in Markov games is the \textbf{joint state entropy}. However, it comes with some important drawbacks:
\begin{itemize}
    \item \textbf{Estimation.} The support of the entropy grows exponentially with the number of agents $|S|^{|\mathcal{N}|}$, so does the complexity of the entropy estimation problem~\citep{beirlant1997nonparametric};
    \item \textbf{Concentration.} The empirical entropy concentrates as $\sqrt{K^{-1}}$ for $K$ trajectories (see Thm.~\ref{thr:objectivemismatch});
    \item \textbf{Redundancy.} When Asm. 3.1 holds and the state space $|\mathcal{S}|$ is the same for every agent, the joint entropy may inflate state coverage as $(s,s')$ and $(s',s)$ are different joint states. 
\end{itemize}

In other words, the problem of optimizing the joint entropy suffers from the \emph{curse of multiagency}, which is particularly relevant in practice (while their difference might not be so relevant in ideal settings, see Fact~\ref{fact:sufficiencypga} and Thm.~\ref{theorem:iteration complexity-gen}). 

Another potential formulation is the \textbf{mixture state entropy}, which has the following properties:
\begin{itemize}
    \item \textbf{Estimation.} The support of the entropy and therefore the estimation complexity do not grow with the number of agents;
    \item \textbf{Concentration.} The empirical entropy concentrates as $\sqrt{(K|\mathcal{N}|)^{-1}}$ for $K$ trajectories (see Thm.~\ref{thr:objectivemismatch});
    \item \textbf{Redundancy.} For the mixture entropy objective, the joint states $(s,s')$ and $(s',s)$ are contributing in the same way; therefore, there is no difference in visiting one or the other. 
\end{itemize}
The latter can be a limitation when we aim to explore all the possible \emph{joint states}, e.g., when the reward functions of the agents will be different in the eventual tasks.
Yet, at least the mixture entropy is also a lower bound to the joint entropy objective with a $\log (|\mathcal{N}|)$ approximation (see Lem.~\ref{lem:entropymismatch}) and thus a valid proxy also in the latter case, given the favorable estimation and concentration properties.

\section{Proofs of the Main Theoretical Results}
\label{apx:proof}

In this Section, we report the full proofing steps of the Theorems and Lemmas in the main paper.

\entropymismatch*

\begin{proof}
    The bounds follow directly from simple yet fundamental relationships between entropies of joint, marginal and mixture distributions which can be found in~\citet{ paninski2003, Kolchinsky_2017}, in particular:
    \begin{align*}
        \frac{1}{|\Ns|}H(d^\pi) \leq \frac{1}{|\Ns|}\sum_{i \in [\Ns]}H(d_i^\pi)  \overset{\text{(a)}}{\leq} H(\tilde d^\pi)  &\overset{\text{(b)}}{\leq} \frac{1}{|\Ns|}\sum_{i \in [\Ns]}H(d_i^\pi)+ \log(|\Ns|) \\& \overset{\text{(c)}}{\leq} \sup_{i \in [\Ns]}H(d_i^\pi)+ \log(|\Ns|) \leq H(d^\pi) + \log(|\Ns|)
    \end{align*}
    where step (a) and (b) use the fact that $\tilde d^\pi(s) := \frac{1}{|\Ns|}\sum_{i \in [\Ns]} d^\pi_i(s)$ is a uniform mixture over the agents, whose distribution over the weights has entropy $\log(|\Ns|)$, so as we can apply the bounds from~\citet{Kolchinsky_2017}. Step (c) uses the fact that $H(d^\pi) = \sum_{i \in [\Ns]}H(d^\pi_i|d^\pi_{<i})$, then taking the supremum as first $i$ it follows that $ \sup_{i \in [\Ns]}H(d_i^\pi) = H(d^\pi) - \sum_{j \in [\Ns]> i} H(d^\pi_j|d^\pi_{<j}, d^\pi_{i}) \leq H(d^\pi)$ due to non-negativity of entropy.
\end{proof}

\objectivemismatch*

\begin{proof}
    For the general proof structure, we adapt the steps of~\citet{mutti2023challengingcommonassumptionsconvex} for cMDPs to the different objectives possible in cMGs.
    Let us start by considering joint objectives, then:
    \begin{align*}
        \big| \zeta_K(\pi) - \zeta_\infty (\pi) \big|
        &= \Big| \E_{d_K\sim p^{\pi}_K} \left[ \mathcal F (d_K) \right] - \mathcal F (d^{\pi}) \Big| \leq \E_{d_K\sim p^{\pi}_K} \left[ \left| \mathcal F (d_K) - \mathcal F (d^{\pi}) \right| \right] \\
        &\overset{\text{(a)}}{\leq} \E_{d_K\sim p^{\pi}_K} \left[ L \left\| d_K- d^{\pi} \right\|_1 \right] \leq L \E_{d_K\sim p^{\pi}_K} \left[ \left\| d_K- d^{\pi} \right\|_1 \right] \\
        &\overset{\text{(b)}}{\leq}   L \E_{d_K\sim p^{\pi}_K} \left[ \max_{t \in [T]}  \left\| d_{K, t} - d^{\pi}_t \right\|_1 \right],
    \end{align*}
    where in step (a) we apply the Lipschitz assumption on $\mathcal F$ to write and in step (b) we apply a maximization over the episode's step by noting that $d_{K} = \frac{1}{T} \sum_{t \in [T]} d_{K, t}$ and $d^\pi = \frac{1}{T} \sum_{t \in [T]} d^\pi_t$.
    We then apply bounds in high probability
    \begin{align*}
        Pr \Big(  \max_{t \in [T]}  \left\| d_{K, t} - d^{\pi}_t \right\|_1 \geq \epsilon \Big)
        &\leq Pr \Big( \bigcup_{ t} \left\| d_{K, t} - d^{\pi}_t \right\|_1 \geq \epsilon \Big)  \\
        &\overset{\text{(c)}}{\leq}  \sum_{ t} Pr \Big( \left\| d_{K, t} - d^{\pi}_t \right\|_1 \geq \epsilon \Big) \\
        &\leq  T \ Pr \Big( \left\| d_{K, t} - d^{\pi}_t \right\|_1 \geq \epsilon \Big), 
    \end{align*}
    with $\epsilon > 0$ and in step (c) we applied a union bound. We then consider standard concentration inequalities for empirical distributions~\citep{Weissman2003InequalitiesFT} so to obtain the final bound
    \begin{equation}
        Pr \Bigg( \left\| d_{K, t} - d^{\pi}_t \right\|_1 \geq \sqrt{\frac{2|\Ss| \log(2 / \delta')}{K}} \ \Bigg) \leq \delta'. \label{eq:empirical_dist_concentration}
    \end{equation}
    By setting $\delta' = \delta / T$, and then plugging the empirical concentration inequality, we have that with probability at least $1 - \delta$
    \begin{equation*}
        \big| \zeta_K (\pi) - \zeta_\infty (\pi) \big| \leq  L T \sqrt{\frac{2|\Ss| \log(2 T / \delta )}{K}},
    \end{equation*}
    which concludes the proof for joint objectives.

    The proof for disjoint objectives follows the same rational by bounding each per-agent term separately and after noticing that due to Assumption~\ref{ass:mixture}, the resulting bounds get simplified in the overall averaging. As for mixture objectives, the only core difference is after step (b), where $\tilde{d}_K$ takes the place of $d_K$ and $\tilde{d^\pi}$ of $d^\pi$. The remaining steps follow the same logic, out of noticing that the empirical distribution with respect to $\tilde{d^\pi}$ is taken with respect $|\Ns|K$ samples in total. Both the two bounds then take into account that the support of the empirical distributions have size $|\tilde \Ss|$ and not $|\Ss|$.
\end{proof}

\subsection{Policy Gradient in cMGs with Infinite-Trials Formulations.}
\label{apx:theory}
In this Section, we analyze policy search for the infinite-trials joint problem $\zeta_\infty$ of Eq.~\eqref{eq:mse}, via projected gradient ascent over parametrized policies, providing in Th.~\ref{theorem:iteration complexity-gen} the formal counterpart of Fact~\ref{fact:sufficiencypga} in the Main paper. As a side note, all of the following results hold for the (infinite-trials) mixture objective $\tilde \zeta_\infty$ of Eq.~\eqref{eq:mse_mixture}. We will consider the class of parametrized policies with parameters $\theta_i \in \Theta_i \subset \mathbb R^d$, with the joint policy then defined as $\pi_\theta, \theta \in \Theta = \times_{i \in [\Ns]} \Theta_i$. Additionally, we will focus on the computational complexity only, by assuming access to the exact gradient. The study of statistical complexity surpasses the scope of the current work. We define the \textbf{(independent) Policy Gradient Ascent} (PGA) update as:
\begin{eqnarray}
\label{defn:grad-proj}
\theta^{k+1}_i  =  \argmax_{\theta_i\in\Theta_i} \zeta_\infty(\pi_{\theta^k}) \!+\! \left\langle \nabla_{\theta_i} \zeta_\infty(\pi_{\theta^k}),\theta_i\!-\!\theta^k_i\right\rangle \!-\! \frac{1}{2\eta}\|\theta_i\!-\!\theta_i^k\|^2 =  \Pi_{\Theta_i}\big\{\theta^k_i + \eta\nabla_{\theta_i} \zeta_\infty(\pi_{\theta^k})\big\}
\end{eqnarray}
where $\Pi_{\Theta_i}\{\cdot\}$ denotes Euclidean projection onto $\Theta_i$, and equivalence holds by the convexity of $\Theta_i$. The classes of policies that allow for this condition to be true will be discussed shortly.

In general the overall proof is built of three main steps, shared with the theory of Potential Markov Games~\citep{leonardos2021globalconvergencemultiagentpolicy}: (i) prove the existence of well behaved stationary points; (ii) prove that performing independent policy gradient is equivalent to perform joint policy gradient; (iii) prove that the (joint) PGA update converges to the stationary points via single-agent like analysis.
In order to derive the subsequent convergence proof, we will make the following assumptions:

\begin{assumption}
	\label{assumption:gen-para} Define the quantity $\lambda(\theta) := d^{\pi_\theta}$, then:\\
	\textbf{(i).} $\lambda(\cdot)$ forms a bijection between $\Theta$ and $\lambda(\Theta)$, where $\Theta$ and $\lambda(\Theta)$ are closed and convex. \\
	\textbf{(ii).} The Jacobian matrix $\nabla_{\theta}\lambda(\theta)$ is Lipschitz continuous in $\Theta$.  \\
	\textbf{(iii).} Denote $g(\cdot) := \lambda^{-1}(\cdot)$ as the inverse mapping of $\lambda(\cdot)$. Then there exists $ \ell_{\theta}>0$ s.t. $\|g(\lambda)-g(\lambda')|\leq \ell_\theta\|\lambda-\lambda'\|$ for some norm $\|\cdot\|$ and for all $\lambda,\lambda'\in\lambda(\Theta)$. 
\end{assumption}

\begin{assumption}
	\label{assumption:ncvx-Lip}
	There exists $ L>0$ such that the gradient $\nabla_{\theta }\zeta_\infty(\pi_{\theta})$ is $L$-Lipschitz. 
\end{assumption}

\begin{assumption}
	\label{assumption:gradient}
	The agents have access to a gradient oracle $\mathcal O(\cdot)$ that returns $\nabla_{\theta_i}\zeta_\infty(\pi_\theta)$ for any deployed joint policy $\pi_\theta$.
\end{assumption}

\paragraph*{On the Validity of Assumption~\ref{assumption:gen-para}.}~This set of assumptions enforces the objective $\zeta_\infty(\pi_\theta)$ to be well-behaved with respect to $\theta$ even if non-convex in general, and will allow for a rather strong result. Yet, the assumptions are known to be true for directly parametrized policies over the whole support of the distribution $d^\pi$~\citep{zhang2020variationalpolicygradientmethod}, and as a result they implicitly require agents to employ policies conditioned over the full state-space $\Ss$. Fortunately enough, they also guarantee $\Theta$ to be convex.

\begin{lemma}[\textbf{(i)} Global optimality of stationary policies~\citep{zhang2020variationalpolicygradientmethod}]
	\label{lemma:global-opt}
	Suppose Assumption \ref{assumption:gen-para} holds, and $\mathcal F$ is a concave, and continuous function defined in an open neighborhood containing $\lambda(\Theta)$. 
	Let $\theta^*$ be a first-order stationary point of problem \eqref{eq:mse}, i.e.,\vspace{-2mm}
	\begin{equation}
	\label{defn:1st-order-condition}
	\exists u^*\in\hat{\partial}(\mathcal F\circ\lambda)(\theta^*),\quad\text{s.t.}\quad \langle u^*, \theta-\theta^*\rangle\leq0 \qquad\mbox{for}\qquad\forall \theta\in\Theta.
	\end{equation}
	Then $\theta^*$ is a globally optimal solution of problem \eqref{eq:mse}.
\end{lemma}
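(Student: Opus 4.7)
The plan is to leverage the bijection $\lambda(\cdot)$ to reformulate the problem in the $\lambda$-space, where concavity of $\mathcal F$ on a convex set makes every stationary point globally optimal, and then carefully transport this conclusion back to $\theta$-space using Assumption~\ref{assumption:gen-para}. Specifically, define the lifted problem $\max_{\mu \in \lambda(\Theta)} \mathcal F(\mu)$. Since $\lambda(\Theta)$ is convex and closed by (i), and $\mathcal F$ is concave and continuous on an open neighborhood of it, this is a standard concave maximization over a convex feasible set, for which first-order stationarity implies global optimality. Moreover, the bijection gives a one-to-one correspondence between feasible points $\theta \in \Theta$ and $\mu = \lambda(\theta) \in \lambda(\Theta)$, so the two optimization problems share the same optimal value and the same set of optimizers (up to the bijection).

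The key technical step is therefore to argue that if $\theta^*$ satisfies the first-order stationarity condition~\eqref{defn:1st-order-condition} for $\mathcal F \circ \lambda$ on $\Theta$, then $\mu^* := \lambda(\theta^*)$ is first-order stationary for $\mathcal F$ on $\lambda(\Theta)$, i.e.\ there exists $v^* \in \hat{\partial} \mathcal F(\mu^*)$ with $\langle v^*, \mu - \mu^* \rangle \leq 0$ for all $\mu \in \lambda(\Theta)$. To do this I would apply a chain-rule decomposition for the Clarke subdifferential: any $u^* \in \hat{\partial}(\mathcal F \circ \lambda)(\theta^*)$ can be written as $u^* = [\nabla_\theta \lambda(\theta^*)]^\top v^*$ for some $v^* \in \hat{\partial} \mathcal F(\mu^*)$, justified by the fact that $\lambda$ is $C^1$ with Lipschitz Jacobian by (ii) and $\mathcal F$ is concave hence locally Lipschitz on its domain. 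For any $\mu \in \lambda(\Theta)$, set $\theta := g(\mu) = \lambda^{-1}(\mu) \in \Theta$, which is well-defined by (i) and Lipschitz by (iii), so that $\mu - \mu^* = \lambda(\theta) - \lambda(\theta^*) = \nabla_\theta \lambda(\theta^*)(\theta - \theta^*) + o(\|\theta - \theta^*\|)$. Plugging into the stationarity inequality for $\theta^*$ and using convexity of $\lambda(\Theta)$ to pass to arbitrary feasible directions (via a rescaling/limiting argument along the segment from $\mu^*$ to $\mu$, which stays inside $\lambda(\Theta)$) yields $\langle v^*, \mu - \mu^* \rangle \leq 0$ for all $\mu \in \lambda(\Theta)$.

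Once lifted stationarity is established, the conclusion is immediate: by concavity of $\mathcal F$, for any $\mu \in \lambda(\Theta)$ the subgradient inequality gives $\mathcal F(\mu) \leq \mathcal F(\mu^*) + \langle v^*, \mu - \mu^* \rangle \leq \mathcal F(\mu^*)$, so $\mu^*$ is a global maximizer of $\mathcal F$ on $\lambda(\Theta)$. Using the bijection to pull back, $\theta^* = g(\mu^*)$ is a global maximizer of $\mathcal F \circ \lambda = \zeta_\infty$ on $\Theta$, which is the claim.

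The main obstacle I anticipate is the subtlety of the chain rule for the (generalized) subdifferential: $\mathcal F \circ \lambda$ need not be concave in $\theta$, and $\hat{\partial}$ denotes a generalized/regular subdifferential rather than the convex one, so one needs Assumption~\ref{assumption:gen-para}(ii)--(iii) to guarantee that compositional calculus applies with equality (not just inclusion) and that directions in $\mu$-space at $\mu^*$ are realized by admissible directions in $\theta$-space at $\theta^*$. This is precisely where the bijection, the Lipschitz Jacobian, and the Lipschitz inverse cooperate: injectivity plus smoothness ensures $\nabla_\theta \lambda(\theta^*)$ has a one-sided inverse on the tangent cone, and Lipschitzness of $g$ controls the $o(\|\theta - \theta^*\|)$ remainder uniformly so that the passage from the variational inequality in $\theta$ to the one in $\mu$ is rigorous.
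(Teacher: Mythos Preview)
The paper does not actually prove this lemma: it is stated verbatim with a citation to \citet{zhang2020variationalpolicygradientmethod} and used as a black box in the convergence analysis of Theorem~\ref{theorem:iteration complexity-gen}. Your outline---lift the stationarity condition to the $\lambda$-space via the bijection, invoke concavity of $\mathcal F$ over the convex set $\lambda(\Theta)$ to upgrade first-order stationarity to global optimality, then pull back through $g=\lambda^{-1}$---is precisely the ``hidden convexity'' argument used in the original reference, and you correctly flag the chain-rule step for the generalized subdifferential as the place where Assumption~\ref{assumption:gen-para}(i)--(iii) does real work. So your proposal is aligned with the source the paper relies on; there is nothing further to compare against in this paper.
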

This result characterizes the optimality of stationary points for Eq.~\eqref{eq:mse}. Furthermore, we know from~\citet{leonardos2021globalconvergencemultiagentpolicy} that stationary points of the objective are Nash Equilibria.

\begin{lemma}[\textbf{(ii)} Projection Operator~\citep{leonardos2021globalconvergencemultiagentpolicy}]\label{claim:projection} 
    Let $\theta := (\theta_1,...,\theta_\Ns)$ be the parameter profile for all agents and use the update of Eq.~\eqref{defn:grad-proj} over a non-disjoint infinite-trials objective. Then, it holds that
    \begin{equation*}
        \Pi_{\Theta}\big\{\theta^k + \eta\nabla_\theta \zeta_\infty(\pi_{\theta^k})\big\} = \Big( \Pi_{\Theta_i}\big\{\theta^k_i + \eta\nabla_{\theta_i} \zeta_\infty(\pi_{\theta^k})\big\}\Big)_{i \in [\Ns]}
    \end{equation*}
\end{lemma}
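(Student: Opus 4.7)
The plan is to exploit the product structure of both the parameter space $\Theta = \times_{i \in [\Ns]} \Theta_i$ and the gradient $\nabla_\theta \zeta_\infty(\pi_\theta)$. First I would observe that, by the very definition of partial derivatives, the joint gradient is the concatenation of the per-agent gradients, i.e.\ $\nabla_\theta \zeta_\infty(\pi_{\theta^k}) = \big(\nabla_{\theta_i} \zeta_\infty(\pi_{\theta^k})\big)_{i \in [\Ns]}$. This is precisely where the \emph{non-disjoint} hypothesis enters: all agents differentiate the \emph{same} scalar objective $\zeta_\infty$, so the per-agent update directions are consistent blocks of a single joint direction. For a disjoint objective each agent would differentiate her own $\zeta_\infty^i$ and the concatenated vector would not coincide with the gradient of any common potential, breaking the reduction.

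Second, I would invoke the standard fact that Euclidean projection onto a Cartesian product of closed convex sets decomposes block-wise. Concretely, for any $x = (x_i)_{i \in [\Ns]} \in \mathbb R^d$ the definition of $\Pi_\Theta$ reads
\[
\Pi_\Theta(x) = \argmin_{y \in \Theta} \|y - x\|^2 = \argmin_{(y_i) \in \times_i \Theta_i} \sum_{i \in [\Ns]} \|y_i - x_i\|^2,
\]
and since each summand depends only on $y_i$, the joint minimization separates into the independent problems $\min_{y_i \in \Theta_i} \|y_i - x_i\|^2 = \Pi_{\Theta_i}(x_i)$. Applying this decomposition to $x = \theta^k + \eta \nabla_\theta \zeta_\infty(\pi_{\theta^k})$, whose $i$-th block is exactly $\theta^k_i + \eta \nabla_{\theta_i} \zeta_\infty(\pi_{\theta^k})$ by the first observation, yields the claimed identity.

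There is no real obstacle here: the lemma is a structural consequence of the product geometry of $\Theta$ and the fact that all agents share the same potential. The only point worth flagging is that one needs each $\Theta_i$ (and hence $\Theta$) to be closed and convex for the projection to be well-defined and single-valued, which is guaranteed by Assumption~\ref{assumption:gen-para}(i). With the identity in hand, step (iii) of the outlined program reduces the analysis of independent PGA over $\zeta_\infty$ to that of centralized PGA on a single smooth non-convex objective on the convex set $\Theta$, for which global convergence to stationary points follows from Lemma~\ref{lemma:global-opt} together with standard projected-gradient rates under Assumption~\ref{assumption:ncvx-Lip}.
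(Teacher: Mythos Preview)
Your argument is correct and is exactly the standard one: the joint gradient of a common scalar objective is by definition the stack of its per-agent blocks, and Euclidean projection onto a Cartesian product of closed convex sets separates block-wise, so the two projected updates coincide. The paper does not give its own proof of this lemma but cites it directly from \citet{leonardos2021globalconvergencemultiagentpolicy}; your write-up supplies precisely the decomposition argument that underlies that reference, and your remark on why the \emph{non-disjoint} hypothesis is needed (so that the per-agent gradients are blocks of a single potential's gradient) is a useful clarification the paper leaves implicit.
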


This result will only be used for the sake of the convergence analysis, since it allows to analyze independent updates as joint updates over a single objective. The following Theorem is the formal counterpart of Fact~\ref{fact:sufficiencypga} and it is a direct adaptation to the multi-agent case of the single-agent proof by~\citet{zhang2020variationalpolicygradientmethod}, by exploiting the previous result.

\begin{theorem}[\textbf{(iii)} Convergence rate of independent PGA to stationary points (Formal Fact~\ref{fact:sufficiencypga})]
	\label{theorem:iteration complexity-gen}
	Let Assumptions \ref{assumption:gen-para} and \ref{assumption:ncvx-Lip} hold. Denote $D_\lambda \!:=\! \max_{\lambda,\lambda'\in\lambda(\Theta)} \|\lambda-\lambda'\|$ as defined in Assumption~\ref{assumption:gen-para}(iii). Then the independent policy gradient update \eqref{defn:grad-proj} with $\eta = 1/L$ satisfies for all $k$ with respect to a stationary (joint) policy $\pi_{\theta^*}$ the following\vspace{-2mm}
	\begin{equation*}
	\zeta_\infty(\pi_{\theta^*}) \!-\! \zeta_\infty(\pi_{\theta^k})\leq \frac{4L\ell_{\theta}^2D_\lambda^2}{k+1}.
	\end{equation*}
\end{theorem}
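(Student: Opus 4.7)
The proof plan mirrors the single-agent hidden-convexity analysis of \citet{zhang2020variationalpolicygradientmethod}, with the multi-agent aspect being absorbed entirely by Lemma~\ref{claim:projection}. First I would invoke that lemma to identify the decentralized iterates of~\eqref{defn:grad-proj} with the joint projected gradient iterate $\theta^{k+1} = \Pi_{\Theta}\{\theta^k + \eta\,\nabla_\theta \zeta_\infty(\pi_{\theta^k})\}$ on the convex set $\Theta$ (convex by Assumption~\ref{assumption:gen-para}(i)). From this point on, the analysis proceeds as if there were a single agent, and no further multi-agent-specific arguments are required.

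Next I would combine the Lipschitz-gradient Assumption~\ref{assumption:ncvx-Lip} with the choice $\eta = 1/L$ to derive the standard descent inequality for projected gradient ascent,
\[
\zeta_\infty(\pi_{\theta^{k+1}}) - \zeta_\infty(\pi_{\theta^k}) \geq \frac{L}{2}\|\theta^{k+1} - \theta^k\|^2,
\]
which yields monotone ascent and a lower bound on per-step progress in terms of the iterate increment.

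The heart of the argument is then to convert this progress bound into a $1/k$ optimality-gap bound via hidden convexity. Writing $\zeta_\infty(\pi_\theta) = \mathcal{F}(\lambda(\theta))$ and exploiting concavity of $\mathcal{F}$ on $\lambda(\Theta)$ gives $\zeta_\infty(\pi_{\theta^*}) - \zeta_\infty(\pi_{\theta^k}) \leq \langle \nabla_\lambda \mathcal{F}(\lambda(\theta^k)),\, \lambda(\theta^*) - \lambda(\theta^k) \rangle$. The chain rule $\nabla_\theta \zeta_\infty(\pi_{\theta^k}) = \nabla_\theta \lambda(\theta^k)^\top \nabla_\lambda \mathcal{F}(\lambda(\theta^k))$, coupled with the first-order optimality condition of the projection step, allows one to upper bound the concavity gap by $\eta^{-1}\langle \theta^k - \theta^{k+1},\, \theta^* - \theta^k \rangle$ up to controllable remainder terms. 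The Lipschitz inverse-map property (Assumption~\ref{assumption:gen-para}(iii)) then supplies $\|\theta^* - \theta^k\| \leq \ell_\theta \|\lambda(\theta^*) - \lambda(\theta^k)\| \leq \ell_\theta D_\lambda$, which is exactly what injects the $\ell_\theta^2 D_\lambda^2$ factor into the final bound. Letting $\Delta^k := \zeta_\infty(\pi_{\theta^*}) - \zeta_\infty(\pi_{\theta^k}) \geq 0$, these pieces combine into a quadratic recursion of the form $\Delta^k - \Delta^{k+1} \geq c\,(\Delta^k)^2/(L \ell_\theta^2 D_\lambda^2)$ for a universal constant $c$; a routine induction then yields $\Delta^k \leq 4L \ell_\theta^2 D_\lambda^2 / (k+1)$.

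The main obstacle is the careful bookkeeping in transitioning between $\theta$-space, where the projection step and descent lemma naturally live, and $\lambda$-space, where concavity is available. The Jacobian $\nabla_\theta \lambda(\theta)$ introduced by the chain rule is precisely why the smoothness and bijectivity conditions in Assumption~\ref{assumption:gen-para} are needed, and tracking it cleanly (avoiding spurious conditioning factors) is the delicate step. As a prerequisite, global optimality of the stationary point $\theta^*$, which makes $\Delta^k$ the genuine suboptimality gap rather than a distance to a merely critical point, is pulled from Lemma~\ref{lemma:global-opt} before the recursion is launched.
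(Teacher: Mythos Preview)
Your high-level plan is correct and matches the paper: invoke Lemma~\ref{claim:projection} to collapse the independent updates into a single joint projected-gradient step, then import the single-agent hidden-convexity rate of \citet{zhang2020variationalpolicygradientmethod} verbatim. Where your sketch diverges is in the \emph{mechanism} of the hidden-convexity step, and that divergence is a real gap rather than a stylistic difference.

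You propose to bound $\Delta^k \leq \langle \nabla_\lambda \mathcal F(\lambda(\theta^k)),\, \lambda(\theta^*) - \lambda(\theta^k)\rangle$ by concavity in $\lambda$-space, then use the chain rule $\nabla_\theta \zeta_\infty = (\nabla_\theta \lambda)^\top \nabla_\lambda \mathcal F$ together with the projection first-order condition to pass to $\eta^{-1}\langle \theta^k - \theta^{k+1},\, \theta^* - \theta^k\rangle$ ``up to controllable remainder terms.'' But the chain rule gives $\langle \nabla_\theta \zeta_\infty,\, \theta^* - \theta^k\rangle = \langle \nabla_\lambda \mathcal F,\, J(\theta^* - \theta^k)\rangle$ with $J = \nabla_\theta\lambda(\theta^k)$, and $J(\theta^*-\theta^k)$ differs from $\lambda(\theta^*)-\lambda(\theta^k)$ by a second-order term that is \emph{not} controlled by Assumptions~\ref{assumption:gen-para}--\ref{assumption:ncvx-Lip}: you would need a bound on $\|\nabla_\lambda\mathcal F\|$ and on the linearisation error of $\lambda$, neither of which is available, and this is precisely the ``spurious conditioning factor'' you flag but do not resolve.

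The paper's argument (which is the actual Zhang et~al.\ proof) sidesteps the Jacobian entirely. It uses the \emph{argmax} characterisation of the projection in~\eqref{defn:grad-proj}: since $\theta^{k+1}$ maximises the quadratic model over all of $\Theta$, it in particular beats the specific comparator $\theta_\alpha := g\big(\alpha\lambda(\theta^*) + (1-\alpha)\lambda(\theta^k)\big)\in\Theta$, for which $\lambda(\theta_\alpha)$ lies exactly on the $\lambda$-segment. Concavity of $\mathcal F$ is then applied directly to $\mathcal F(\lambda(\theta_\alpha))$, and Assumption~\ref{assumption:gen-para}(iii) gives $\|\theta_\alpha - \theta^k\| \leq \alpha\,\ell_\theta D_\lambda$ with no chain rule needed. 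Optimising over $\alpha\in[0,1]$ yields a recursion in $\alpha_k \propto \Delta^k$ that telescopes to the stated $4L\ell_\theta^2 D_\lambda^2/(k+1)$ rate. Replacing your chain-rule paragraph with this interpolation-path argument makes the proof go through with the exact constant.
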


\begin{proof}
    First, the Lipschitz continuity in Assumption \ref{assumption:ncvx-Lip} indicates that 
        $$\left|\zeta_\infty(\lambda(\theta)) - \zeta_\infty(\lambda(\theta^k)) - \langle \nabla_\theta\zeta_\infty(\lambda(\theta^k)),\theta-\theta^k\rangle\right|\leq \frac{L}{2}\|\theta-\theta^k\|^2.$$
        Consequently, for any $\theta\in\Theta$ we have the ascent property:
        \begin{equation}\label{eq:taylor_proof}
        \zeta_\infty(\lambda(\theta)) \geq \zeta_\infty(\lambda(\theta^k)) + \langle \nabla_\theta\zeta_\infty(\lambda(\theta^k)),\theta-\theta^k\rangle - \frac{L}{2}\|\theta-\theta^k\|^2 \geq \zeta_\infty(\lambda(\theta)) - L\|\theta-\theta^k\|^2.
        \end{equation}
        The optimality condition in the policy update rule \eqref{defn:grad-proj} coupled with the result of Lemma~\ref{claim:projection} allows us to follow the same rational as~\citet{zhang2020variationalpolicygradientmethod}. We will report their proof structure after this step for completeness.
    \begin{align}
        \label{thm:ItrCmp-1}
        \MoveEqLeft
        \zeta_\infty(\lambda(\theta^{k+1}))  \geq  \zeta_\infty(\lambda(\theta^k)) + \langle \nabla_\theta\zeta_\infty(\lambda(\theta^k)),\theta^{k+1}-\theta^k\rangle - \frac{L}{2}\|\theta^{k+1}-\theta^k\|^2 \nonumber \\
        & =  \max_{\theta\in\Theta} \zeta_\infty(\lambda(\theta^k)) + \langle \nabla_\theta\zeta_\infty(\lambda(\theta^k)),\theta-\theta^k\rangle - \frac{L}{2}\|\theta-\theta^k\|^2\nonumber\\
        & \overset{\text{(a)}}{\geq}  \max_{\theta\in\Theta} \zeta_\infty(\lambda(\theta)) - L\|\theta-\theta^k\|^2\nonumber\\
        & \overset{\text{(b)}}{\geq}   \max_{\alpha\in[0,1]}\left\{\zeta_\infty(\lambda(\theta_{\alpha})) - L\|\theta_{\alpha}-\theta^k\|^2: \theta_{\alpha} = g(\alpha\lambda(\theta^*) + (1-\alpha)\lambda(\theta^k)) \right\}.
    \end{align}
        where step (a) follows from \eqref{eq:taylor_proof} and step (b) uses the convexity of $\lambda(\Theta)$. Then, by the concavity of $\zeta_\infty$ and the fact that the composition $\lambda\circ g = id$ due to Assumption~\ref{assumption:gen-para}(i), we have that:
        $$\zeta_\infty(\lambda(\theta_{\alpha})) = \zeta_\infty(\alpha\lambda(\theta^*) + (1-\alpha)\lambda(\theta^k))\geq\alpha\zeta_\infty(\lambda(\theta^*)) + (1-\alpha)\zeta_\infty(\lambda(\theta^k)).$$
        Moreover, due to Assumption~\ref{assumption:gen-para}(iii) we have that:
        \begin{eqnarray}
        \label{eqn:important-gen}
            \|\theta_{\alpha} - \theta^k\|^2 & = & \|g(\alpha\lambda(\theta^*) + (1-\alpha)\lambda(\theta^k))- g(\lambda(\theta^k))\|^2\\
            & \leq & \alpha^2\ell_{\theta}^2\|\lambda(\theta^*) - \lambda(\theta^k)\|^2\nonumber\\
            & \leq & \alpha^2\ell_{\theta}^2D_\lambda^2.\nonumber
        \end{eqnarray}
        From which we get 
        \begin{align}
        \MoveEqLeft 
        \zeta_\infty(\lambda(\theta^*)) - \zeta_\infty(\lambda(\theta^{k+1})) \nonumber \\
        & \leq  \min_{\alpha\in[0,1]}\left\{\zeta_\infty(\lambda(\theta^*))-\zeta_\infty(\lambda(\theta_{\alpha})) + L\|\theta_{\alpha}-\theta^k\|^2: \theta_{\alpha} = g(\alpha\lambda(\theta^*) + (1-\alpha)\lambda(\theta^k)) \right\}\nonumber\\
        & \leq  \min_{\alpha\in[0,1]}(1-\alpha)\big(\zeta_\infty(\lambda(\theta^*))-\zeta_\infty(\lambda(\theta^k))\big) + \alpha^2L\ell_{\theta}^2D_\lambda^2 \,.
        \label{thm:ItrCmp-2-gen}
        \end{align}
        We define $\Lambda(\pi_\theta) := \lambda(\theta)$, then $\alpha_k = \frac{\zeta_\infty(\Lambda(\pi^*)) - \zeta_\infty(\Lambda(\pi^k))}{2L\ell_{\theta}^2D_\lambda^2}\geq0$, which is the minimizer of the RHS of  \eqref{thm:ItrCmp-2-gen} as long as it satisfies $\alpha_k\leq 1$. Now, we claim the following: If $\alpha_k\ge 1$ then $\alpha_{k+1}<1$. Further, if $\alpha_k<1$ then $\alpha_{k+1}\le \alpha_k$. The two claims together mean that $(\alpha_k)_k$ is decreasing and all $\alpha_k$ are in $[0,1)$ except perhaps $\alpha_0$.
    
        To prove the first of the two claims, assume $\alpha_k\ge 1$.
        This implies that $\zeta_\infty(\Lambda(\pi^*)) - \zeta_\infty(\Lambda(\pi^k))\geq 2L\ell_{\theta}^2D_\lambda^2$. Hence, choosing $\alpha=1$ in \eqref{thm:ItrCmp-2-gen}, we get
        \[\zeta_\infty(\lambda(\theta^*)) - \zeta_\infty(\lambda(\theta^{k}))\leq L\ell_{\theta}^2D_\lambda^2\]
        which implies that $\alpha_{k+1}\le 1/2<1$. To prove the second claim, we plug  $\alpha_k$ into \eqref{thm:ItrCmp-2-gen} to get
        \[
        \zeta_\infty(\lambda(\theta^*)) - \zeta_\infty(\lambda(\theta^{k+1})) \leq  \left(1-\frac{\zeta_\infty(\lambda(\theta^*)) - \zeta_\infty(\lambda(\theta^{k}))}{4L\ell_{\theta}^2D_\lambda^2}\right)(\zeta_\infty(\lambda(\theta^*)) - \zeta_\infty(\lambda(\theta^{k}))),
        \]
        which shows that $\alpha_{k+1}\le \alpha_k$ as required.
        
        Now, by our preceding discussion, for $k=1,2,\dots$ the previous recursion holds.
        Using the definition of $\alpha_k$, we rewrite this in the equivalent form  
        \[
        \frac{\alpha_{k+1}}{2}\leq \left(1-\frac{\alpha_{k}}{2}\right)\cdot\frac{\alpha_{k}}{2}.
        \] 
    By rearranging the preceding expressions and algebraic manipulations, we obtain
        $$\frac{2}{\alpha_{k+1}} \geq \frac{1}{\left(1-\frac{\alpha_{k}}{2}\right)\cdot\frac{\alpha_{k}}{2}} = \frac{2}{\alpha_{k}} + \frac{1}{1-\frac{\alpha_{k}}{2}}\geq\frac{2}{\alpha_k} + 1.$$
        For simplicity assume that $\alpha_0<1$ also holds. Then,
        $\frac{2}{\alpha_{k}}\geq \frac{2}{\alpha_0} + k$, and consequenlty
        $$\zeta_\infty(\lambda(\theta^*)) - \zeta_\infty(\lambda(\theta^{k}))\leq \frac{\zeta_\infty(\lambda(\theta^*)) - \zeta_\infty(\lambda(\theta^0))}{1+ \frac{\zeta_\infty(\lambda(\theta^*)) - \zeta_\infty(\lambda(\theta^0))}{4L\ell_{\theta}^2D_\lambda^2}\cdot k} \leq \frac{4L\ell_{\theta}^2D_\lambda^2}{k}.$$
        A similar analysis holds when $\alpha_0>1$. Combining these two gives that 
        $\zeta_\infty(\lambda(\pi^*)) - \zeta_\infty(\lambda(\pi^{k}))\leq \frac{4L\ell_{\theta}^2D_\lambda^2}{k+1}$ no matter the value of $\alpha_0$, which proves the result. 
    \end{proof}

\subsection{The Use of Markovian and Non-Markovian Policies in cMGs with Finite-Trials Formulations.}
\label{apx:policies}
The following result describes how in cMGs, as for cMDPs, Non-Markovian policies are the right policy class to employ to guarantee well-behaved results.

\begin{restatable}[Sufficiency of Disjoint Non-Markvoian Policies]{lem}{sufficiency}
    \label{lem:sufficiency} 
    For every cMG $\mathcal M$ there exist a joint policy $\pi^\star = (\pi^{\star, i})_{i \in \Ns}$, with $\pi^{\star, i}\in\Delta_{\Ss^T}^{\Acal^i}$ being a deterministic Non-Markovian policy, that is a Nash Equilibrium for non-Disjoint single-trial objectives, for $K=1$.
\end{restatable}

\begin{proof}
    The proof builds over a straight reduction. We build from the original MG $\MDP$ a temporally extended Markov Game $\tilde \MDP= ( \Ns, \tilde \Ss,   \Acal, \Pb, r, \mu, T)$. A state $\tilde s$ is defined for each history that can be induced, i.e., $\tilde s \in \tilde \Ss \iff \sbf \in \Ss^T $. We keep the other objects equivalent, where for the extended transition model we solely consider the last state in the history to define the conditional probability to the next history. We introduce a common reward function across all the agents $ r: \tilde \Ss \rightarrow \mathbb R$ such that $r(\tilde s) = H(d(\tilde s))$ for joint objectives and $r(\tilde s) = (1/N)\sum_{i \in [\Ns]} H(d_i(\tilde s_i))$ for mixture objectives, for all the histories of length T and $0$ otherwise. We now know that according to~\citet[Theorem 3.1,][]{leonardos2021globalconvergencemultiagentpolicy} there exists a deterministic Markovian policy $\tilde \pi^\star = (\tilde \pi^i)_{i \in \Ns}, \tilde \pi^i \in \Delta^{\Acal_i}_{\tilde \Ss}$ that is a Nash Equilibrium for $\tilde \MDP$. Since $\tilde s$ corresponds to the set of histories of the original game, $\tilde \pi^\star$ maps to a non-Markovian policy in it. Finally, it is straightforward to notice that the NE of  $\tilde \pi^\star$ for $\tilde \MDP$ implies the NE of $\tilde \pi^\star$ for the original cMG $\MDP$.
\end{proof}

The previous result implicitly asks for policies conditioned over the joint state space, as happened for infinite-trials objectives as well. Interestingly, finite-trials objectives allow for a further characterization of how an optimal Markovian policy would behave when conditioned on the per-agent states only:

\begin{restatable}[Behavior of Optimal Markovian Decentralized Policies]{lemma}{behaviormarkovian}
	Let $\pi_{\text{NM}} = (\pi^i_{\text{NM}}\in\Delta_{\Ss^T}^{\Acal^i})_{i \in [\Ns]}$ an optimal deterministic non-Markovian centralized policy and $\bar \pi_{\text{M}} = (\bar \pi^i_{\text{M}}\in\Delta_{\Ss}^{\Acal^i})_{i \in [\Ns]}$ the optimal Markovian centralized policy, namely $\bar \pi_{\text{M}}  = \argmax_{\pi = (\pi^i\in\Delta_{\Ss}^{\Acal^i})_{i \in [\Ns]} }\zeta_1(\pi)$. For a fixed sequence $\sbf_t \in \Ss^t$ ending in state $s = (s_i,s_{-i})$, the variance of the event of the optimal Markovian decentralized policy $ \pi_{\text{M}} = ( \pi^i_{\text{M}}\in\Delta_{\Ss_i}^{\Acal^i})_{i \in [\Ns]}$ taking $a^* = \pi_{\text{NM}} (\cdot|\sbf_t) = \bar \pi_{\text{M}}(\cdot|s,t)$ in $s_i$ at step $t$ is given by
	\begin{align*}
		\Var \big[ &\mathcal{B} ( \pi_{\text{M}} (a^*|s_i, t) ) \big] = \Var_{\sbf  \oplus s \sim p^{\pi_{\text{NM}}}_t} \big[ \E \big[ \mathcal{B} ( \pi_{\text{NM}} (a^* | \sbf  \oplus s) ) \big] \big]\\ &+ \Var_{\sbf \oplus(\cdot, s_{-i}) \sim p^{\bar \pi_{\text{M}}}_t} \big[ \E \big[ \mathcal{B} ( \bar \pi_{\text{M}} (a^* |  s_i, s_{-i}, t) ) \big] \big].
	\end{align*}
	where $\sbf \oplus s \in \Ss^t$ is any sequence of length $t$ such that the final state is $s$, \ie $\sbf \oplus s := (\sbf_{t - 1} \in \Ss^{t - 1}) \oplus s$, and $\mathcal{B} (x)$ is a Bernoulli with parameter $x$.
    \label{thr:behaviormarkovian}
\end{restatable}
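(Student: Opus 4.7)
The plan is to prove the identity as a two-level application of the law of total variance. First I would introduce the canonical Bernoulli $Y = \mathds{1}[\pi_{\text{NM}}(\sbf) = a^*]$, where $\sbf$ is a random length-$t$ history drawn from $p^{\pi_{\text{NM}}}_t$ and conditioned on $\sbf[t]_i = s_i$. Since $\pi_{\text{NM}}$ is deterministic, $Y$ is a $\{0,1\}$-valued random variable, and the optimal decentralized Markovian action probability $\pi_{\text{M}}(a^*|s_i,t)$ is obtained by marginalizing the indicator over the unobserved randomness in $(S_{-i},\sbf_{t-1})$ given $s_i$. Consequently $\E[Y] = \pi_{\text{M}}(a^*|s_i,t)$ and $\Var[Y] = \Var[\mathcal{B}(\pi_{\text{M}}(a^*|s_i,t))]$, which is the left-hand side of the statement.

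Next, I would apply the law of total variance conditioning on the terminal joint state $s = (s_i, S_{-i})$ that closes the history, giving
\[\Var[Y] = \E_{S_{-i}|s_i}\bigl[\Var[Y \mid s]\bigr] + \Var_{S_{-i}|s_i}\bigl[\E[Y \mid s]\bigr].\]
The inner conditional expectation collapses, by the same marginalization one level up, to $\E[Y \mid s] = \bar \pi_{\text{M}}(a^*|s,t)$, so the second summand matches the second term on the right-hand side of the statement (the variance, over terminal states sharing the fixed $s_i$, of the centralized Markovian action probability). For the first summand, observe that conditionally on $s$, $Y$ is a deterministic function of the remaining randomness in $\sbf_{t-1}$, so $\Var[Y|s] = \Var_{\sbf_{t-1}|s}\bigl[\mathds{1}[\pi_{\text{NM}}(\sbf_{t-1}\oplus s) = a^*]\bigr]$; absorbing the outer expectation over $S_{-i}|s_i$ into the joint trajectory distribution $p^{\pi_{\text{NM}}}_t$ conditioned on $s_i$ recovers the first term on the right-hand side, where the inner $\E[\mathcal{B}(\cdot)]$ acts as the trivial identity on $\{0,1\}$ and the outer $\Var$ ranges over full histories with the agent's coordinate fixed.

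The main obstacle is justifying the identifications $\pi_{\text{M}}(a^*|s_i,t) = \E_{S_{-i},\sbf_{t-1}|s_i}[\mathds{1}[\pi_{\text{NM}}(\sbf) = a^*]]$ and $\bar \pi_{\text{M}}(a^*|s,t) = \E_{\sbf_{t-1}|s}[\mathds{1}[\pi_{\text{NM}}(\sbf) = a^*]]$, i.e., that the optimal Markovian policies (both centralized and decentralized) are obtained from the optimal non-Markovian one by conditional marginalization onto the appropriate information sigma-algebra. This is where the \emph{optimal} qualifier in the statement is load-bearing; I would argue it by combining Lemma~\ref{lem:sufficiency}, which guarantees existence of a deterministic non-Markovian equilibrium, with the linearity of the single-trial objective in the per-trajectory indicator of taking $a^*$, so that coarsening the policy to a smaller sigma-algebra preserves the objective precisely when the coarsened policy matches the conditional expectation. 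Once these identifications are in hand, the remaining work is pure bookkeeping to match the notation $\sbf \oplus s$ and $\sbf \oplus (\cdot, s_{-i})$ of the statement, which merely records whether the variance is taken over the whole random history or only over the history prefix with the terminal state fixed.
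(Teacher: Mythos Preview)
Your plan is essentially the paper's two-level law-of-total-variance argument; the only presentational difference is the order of the two conditionings. The paper first conditions on the full history $\sbf$ given the joint state $(s,t)$, where determinism of $\pi_{\text{NM}}$ (Lemma~\ref{lem:sufficiency}) annihilates the $\E[\Var]$ term outright and yields the intermediate identity $\Var[\mathcal{B}(\bar\pi_{\text{M}}(a^*|s,t))]=\Var_{\sbf\oplus s}[\E[\mathcal{B}(\pi_{\text{NM}}(a^*|\sbf\oplus s))]]$; it then conditions on $s_{-i}$ given $s_i$ and substitutes this identity into the resulting $\E_{s_{-i}}[\Var[\,\cdot\,|s]]$ term. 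Your route---defining $Y$ directly from the deterministic $\pi_{\text{NM}}$ and conditioning on $S_{-i}$ first---bakes the determinism step in from the outset, so the two are equivalent in substance. The obstacle you flag (that the optimal Markovian policies coincide with the conditional marginalizations of $\pi_{\text{NM}}$) is real and the paper does not discharge it any more carefully than you propose: it simply posits a single random variable $A$ whose conditional law at each information level is declared to be the optimal policy at that level. Finally, your ``absorption'' of the outer $\E_{S_{-i}}$ into the joint trajectory variance is the same notational shortcut the paper takes in its last displayed line.
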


Unsurprisingly, this Lemma shows that whenever the optimal Non-Markovian strategy for requires to adapt its decision in a joint state $s$ according to the history that led to it, an optimal Markovian policy for the same objective  must necessarily be a stochastic policy, additionally, whenever the optimal Markovian policy conditioned over per-agent states only will need to be stochastic whenever the optimal Markovian strategy conditioned on the full states randomizes its decision based on the joint state $s$.

\begin{proof}
	Let us consider the random variable $A_i \sim \mathcal{P}_i$ denoting the event ``the agent $i$ takes action $a^*_i \in \Acal_i$''. Through the law of total variance~\cite{bertsekas2002introduction}, we can write the variance of $A$ given $s \in \Ss$ and $t \geq 0$ as
	\begin{align}
		\Var \big[ A | s, t \big]
		&= \E \big[ A^2 | s, t \big] - \E \big[ A | s, t \big]^2 \nonumber
		= \E_{\sbf} \Big[ \E \big[ A^2 | s, t,  \sbf  \big] \Big] - \E_{\sbf} \Big[ \E \big[ A | s, t, \sbf \big] \Big]^2 \nonumber \\
		&= \E_{\sbf} \Big[ \Var \big[ A | s, t,  \sbf  \big] + \E \big[ A | s, t,  \sbf  \big]^2 \Big]
		- \E_{\sbf} \Big[ \E_{\pi} \big[ A | s, t,  \sbf  \big] \Big]^2 \nonumber \\
		&= \E_{\sbf} \Big[ \Var \big[ A | s, t,  \sbf  \big] \Big] +\E_{\sbf} \Big[  \E \big[ A | s, t,  \sbf  \big]^2 \Big] - \E_{\sbf} \Big[ \E \big[ A | s, t,  \sbf  \big] \Big]^2 \nonumber \\
		&= \E_{\sbf} \Big[ \Var \big[ A | s, t,  \sbf  \big] \Big] + \Var_{ \sbf } \Big[ \E\big[ A | s, t,  \sbf  \big] \Big]. \label{eq:lotv1}
	\end{align}
	Now let the conditioning event $ \sbf $ be distributed as $ \sbf  \sim p^{\pi_{\text{NM}}}_{t - 1}$, so that the condition $s, t,  \sbf $ becomes $ \sbf  \oplus s$ where $\sbf  \oplus s = (s_{0}, a_{0}, s_{1}, \ldots, s_{t} = s) \in \Ss^t$, and let the variable $A$ be distributed according to $\mathcal{P}$ that maximizes the objective given the conditioning. Hence, we have that the variable $A$ on the left hand side of~\eqref{eq:lotv1} is distributed as a Bernoulli $\mathcal{B} (\bar \pi_{\text{M}} (a^* | s, t))$, and the variable $A$ on the right hand side of~\eqref{eq:lotv2} is distributed as a Bernoulli $\mathcal{B} (\pi_{\text{NM}} (a^* | \sbf  \oplus s))$. Thus, we obtain
	\begin{equation}
		\Var \big[ \mathcal{B} ( \bar \pi_{\text{M}} (a^*|s, t) ) \big] = \E_{\sbf  \oplus s \sim p^{\pi_{\text{NM}}}_t} \big[ \Var \big[ \mathcal{B} ( \pi_{\text{NM}} (a^* | \sbf  \oplus s) ) \big] \big] + \Var_{\sbf  \oplus s \sim p^{\pi_{\text{NM}}}_t} \big[ \E \big[ \mathcal{B} ( \pi_{\text{NM}} (a^* | \sbf  \oplus s) ) \big] \big].
	\label{eq:lotv2}
	\end{equation}
	We know from Lemma~\ref{lem:sufficiency} that the policy $\pi_{\text{NM}}$ is deterministic, so that $\Var \big[ \mathcal{B} ( \pi_{\text{NM}} (a^* | \sbf  \oplus s) ) \big] = 0$ for every $\sbf  \oplus s$. We then repeat the same steps in order to compare the two different Markovian policies:
    \begin{align}
		\Var \big[ A | s_i, t \big]
		&= \E_{s_{-i}} \Big[ \Var \big[ A | s_i, s_{-i}, t \big] \Big] + \Var_{s_{-i}} \Big[ \E\big[ A | s_i, s_{-i}, t \big] \Big].  \nonumber 
	\end{align}
    Repeating the same considerations as before we get that we can use~\eqref{eq:lotv2} to get:
    \begin{align*}
		\Var \big[ \mathcal{B} ( \pi_{\text{M}} (a^*|s_i, t) ) \big] &= \E_{\sbf \oplus(\cdot, s_{-i}) \sim p^{ \bar \pi_{\text{M}}}_t} \Big[ \Var \big[ \mathcal{B} ( \bar \pi_{\text{M}} (a^* | s_i, s_{-i}, t) ) \big] + \E \big[ \mathcal{B} ( \bar \pi_{\text{M}} (a^* |  s_i, s_{-i}, t) ) \big] \Big]\\
        &=\Var_{\sbf  \oplus s \sim p^{\pi_{\text{NM}}}_t} \big[ \E \big[ \mathcal{B} ( \pi_{\text{NM}} (a^* | \sbf  \oplus s) ) \big] \big] + \Var_{\sbf \oplus(\cdot, s_{-i}) \sim p^{\bar \pi_{\text{M}}}_t} \big[ \E \big[ \mathcal{B} ( \bar \pi_{\text{M}} (a^* |  s_i, s_{-i}, t) ) \big] \big].
	\end{align*}
\end{proof}


\section{Details on the Empirical Corroboration.}
\label{apx:exp}

All the experiments were performed over an Apple M2 chip (8-core CPU, 8-core GPU, 16-core Neural Engine) with 8 GB unified memory with a maximum time of execution of 24 hours.

\paragraph*{Environments.}~The main empirical proof of concept was based on two environments. First, Env. (\textbf{i}), the so called \emph{secret room} environment by~\citet{liu2021cooperative}. In this environment, two agents operate within two rooms of a $10\times10$ discrete grid. There is one switch in each room, one in position $(1,9)$ (corner of first room), another in position $(9,1)$ (corner of second room). The rooms are separated by a door and agents start in the same room deterministically at positions $(1,1)$ and $(2,2)$ respectively. The door will open only when one of the switches is occupied, which means that the (Manhattan) distance between one of the agents and the switch is less than $1.5$. The full state vector contains $x, y$ locations of the two agents and binary variables to indicate if doors are open \emph{but} per-agent policies are conditioned on their respective states only and the state of the door. For Sparse-Rewards Tasks, the goal was set to be deterministically at the worst case, namely $(9,9)$ and to provide a positive reward to both the agents of $100$ when reached, which means again that the (Manhattan) distance between one of the agents and the switch is less than $1.5$, a reward of $0$ otherwise. The second environment, Env. (\textbf{ii}), was the MaMuJoCo \emph{reacher} environment~\cite{peng2021facmac}. In this environment, two agents operate the two linked joints and each space dimension is discretized over $10$ bins. Per-agent policies were conditioned on their respective joint angles only.  For Sparse-Rewards Tasks, the goal was set to be randomly at the worst case, namely on position $(\pm0.21, \pm0.21)$ on the boundary of the reachable area. Reaching the goal mean to have a tip position (not observable by the agents and not discretized) at a distance less that $0.05$ and provides a positive reward to both the agents of $1$ when reached, a reward of $0$ otherwise. 

\paragraph*{Class of Policies.}~In Env. (\textbf{i}), the policy was parametrized by a dense $(64,64)$ Neural Network that takes as input the per-agent state features and outputs an action vector probabilities through a last soft-max layer. In Env. (\textbf{ii}), the policy was represented by a Gaussian distribution with diagonal covariance matrix. It takes as input the environment state features and outputs an action vector. The mean is state-dependent and is the downstream output of a a dense $(64,64)$ Neural Network. The standard deviation is state-independent, represented by a separated trainable vector and initialized to $-0.5$. The weights are initialized via Xavier Initialization.

\paragraph*{Trust Region Pure Exploration (TRPE).}~As outlined in the pseudocode of Algorithm~\ref{alg:trpe}, in each epoch a dataset of $N$ trajectories is gathered for a given exploration horizon $T$, leading to the reported number of samples. Throughout the experiment the number of epochs $e$ were set equal to $e=10k$, the number of trajectories $N=10$, the KL threshold $\delta = 6$, the maximum number of off-policy iterations set to $n_{\text{off,iter}} = 20$, the learning rate was set to $\eta = 10^{-5}$ and the number of seeds set equal to $4$ due to the inherent low stochasticity of the environment.

\paragraph*{Multi-Agent TRPO (MA-TRPO).}~We follow the same notation in~\citet{duan2016benchmarking}. Agents have independent critics $(64,64)$ Dense networks and in each epoch a dataset of $N$ trajectories is gathered for a given exploration horizon $T$ for each agent, leading to the reported number of samples. Throughout the experiment the number of epochs $e$ were set equal to $e=100$, the number of trajectories building the batch size $N=20$, the KL threshold $\delta = 10^{-4}$, the maximum number of off-policy iterations set to $n_{\text{off,iter}} = 20$, the discount was set to $\gamma = 0.99$.

The Repository is made available at the following \href{https://anonymous.4open.science/r/trpe-DB16/README.md}{Repository.}

\newpage

\begin{figure*}[]
    \centering
    \begin{tikzpicture}
    \node[draw=black, rounded corners, inner sep=2pt, fill=white] (legend) at (0,0) {
        \begin{tikzpicture}[scale=0.8]
            \draw[thick, color={rgb,255:red,230; green,159; blue,0}, opacity=0.8] (0,0) -- (1,0);
            \fill[color={rgb,255:red,230; green,159; blue,0}, opacity=0.2] (0,-0.1) rectangle (1,0.1);
            \node[anchor=west, font=\scriptsize] at (1.2,0) {Mixture};
            
            \draw[thick, dashed, color={rgb,255:red,86; green,180; blue,233}, opacity=0.8] (2.5,0) -- (3.5,0);
            \fill[color={rgb,255:red,86; green,180; blue,233}, opacity=0.2] (2.5,-0.1) rectangle (3.5,0.1);
            \node[anchor=west, font=\scriptsize] at (3.7,0) {Joint};

            \draw[thick, dotted, color={rgb,255:red,204; green,121; blue,167}, opacity=0.8] (4.7,0) -- (5.7,0);
            \fill[color={rgb,255:red,204; green,121; blue,167}, opacity=0.2] (4.7,-0.1) rectangle (5.7,0.1);
            \node[anchor=west, font=\scriptsize] at (5.9,0) {Disjoint};
            
            \draw[thick, color={rgb,255:red,153; green,153; blue,153}, opacity=0.8] (7.2,0) -- (8.2,0);
            \fill[color={rgb,255:red,153; green,153; blue,153}, opacity=0.2] (7.2,-0.1) rectangle (8.2,0.1);
            \node[anchor=west, font=\scriptsize] at (8.4,0) {Random Initialization};
        \end{tikzpicture}
    };
\end{tikzpicture}
    \vfill
    \begin{subfigure}[b]{0.245\textwidth}
        \includegraphics[width=\textwidth]{figures/roomS10N10/Exploration/T50/joint_entropy_nokl.pdf}
        \caption{\centering TRPE Joint Entropy (Env.~(\textbf{i}), $T=50$).}
        \label{subfig:imagea1}
    \end{subfigure}
    \hfill
    \begin{subfigure}[b]{0.245\textwidth}
        \includegraphics[width=\textwidth]{figures/roomS10N10/Exploration/T50/mixture_entropy_nokl.pdf}
        \caption{\centering TRPE Mixture Entropy (Env.~(\textbf{i}), $T=50$).}
        \label{subfig:imagea2}
    \end{subfigure}
    \hfill
    \begin{subfigure}[b]{0.245\textwidth}
        \centering
        \includegraphics[width=\textwidth]{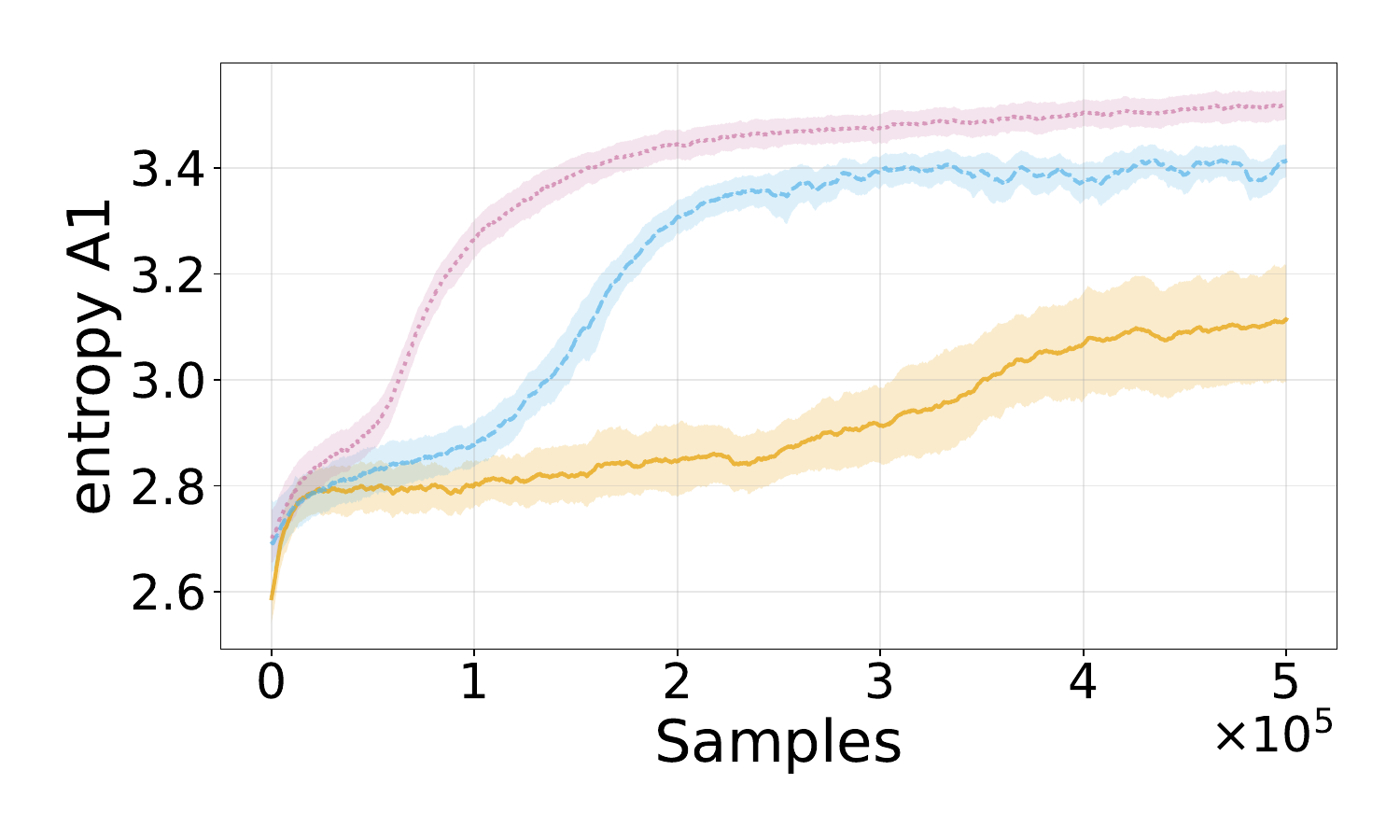}
        \caption{\centering TRPE Entropy Agent 1 (Env.~(\textbf{i}), $T=50$).}
        \label{subfig:image11}
    \end{subfigure}
    \hfill
    \begin{subfigure}[b]{0.245\textwidth}
        \centering
        \includegraphics[width=\textwidth]{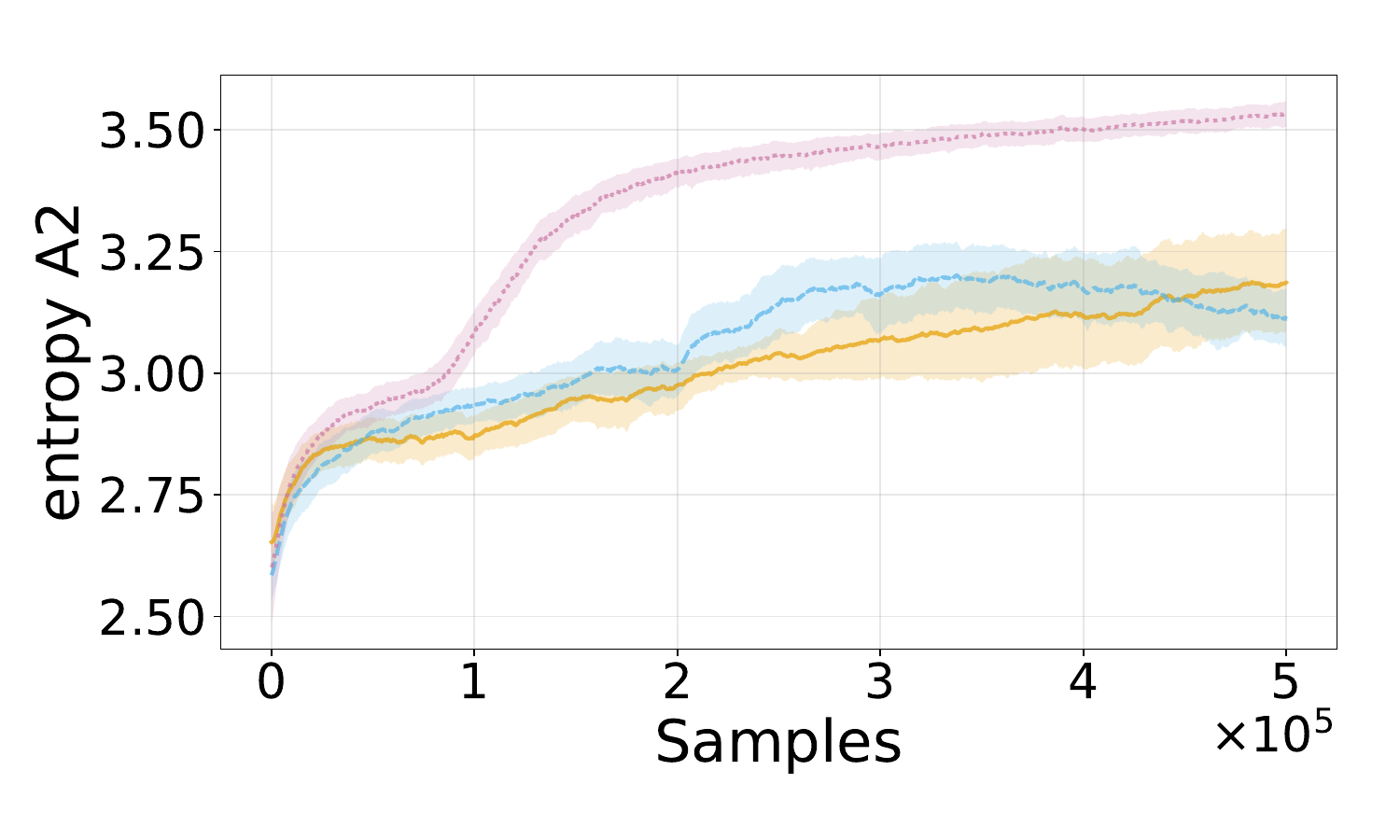}
        \caption{\centering TRPE Entropy Agent 2 (Env.~(\textbf{i}), $T=50$).}
        \label{subfig:image11}
    \end{subfigure}
    \vfill
    \begin{subfigure}[b]{0.245\textwidth}
        \includegraphics[width=\textwidth]{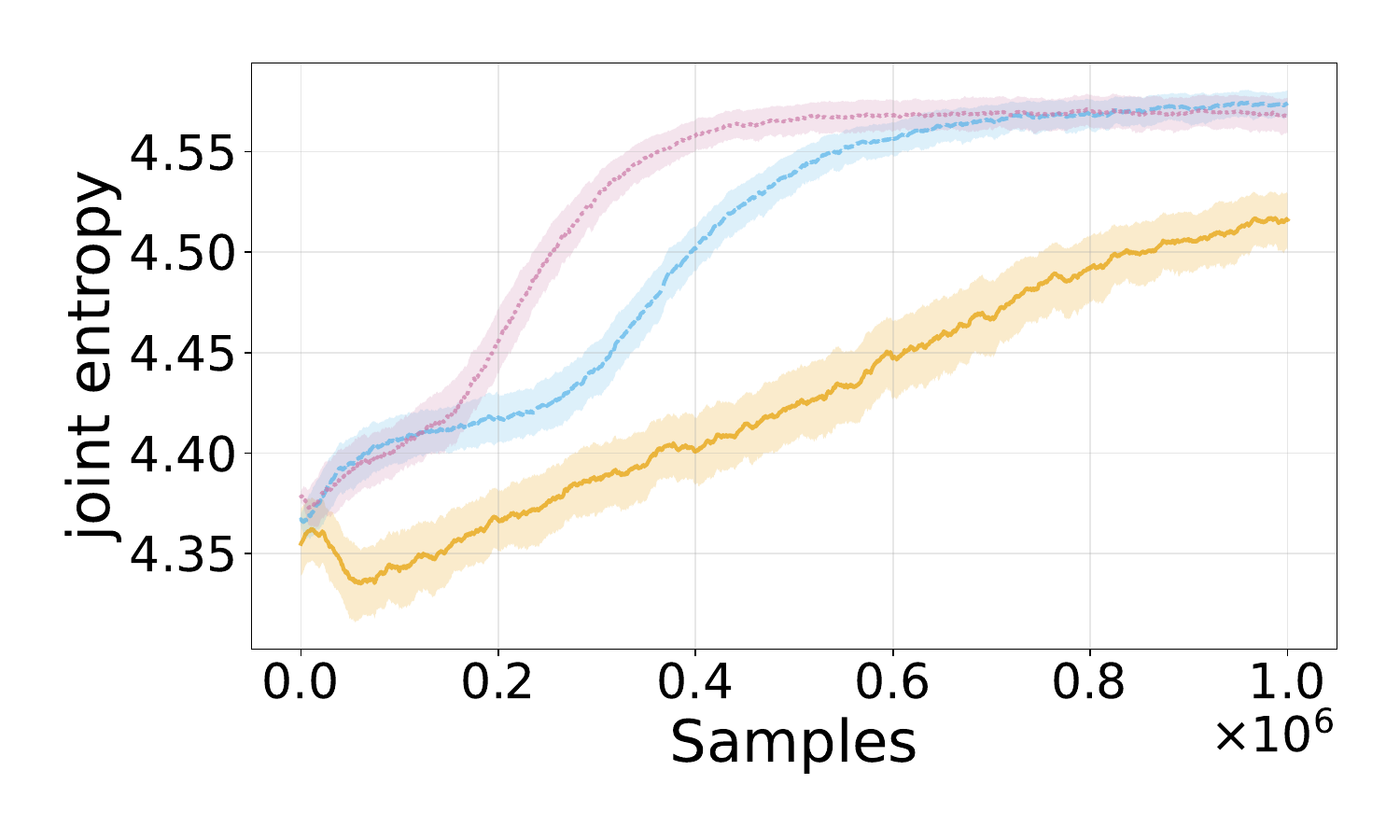}
        \caption{\centering TRPE Joint Entropy (Env.~(\textbf{i}), $T=100$).}
        \label{subfig:imagea1}
    \end{subfigure}
    \hfill
    \begin{subfigure}[b]{0.245\textwidth}
        \includegraphics[width=\textwidth]{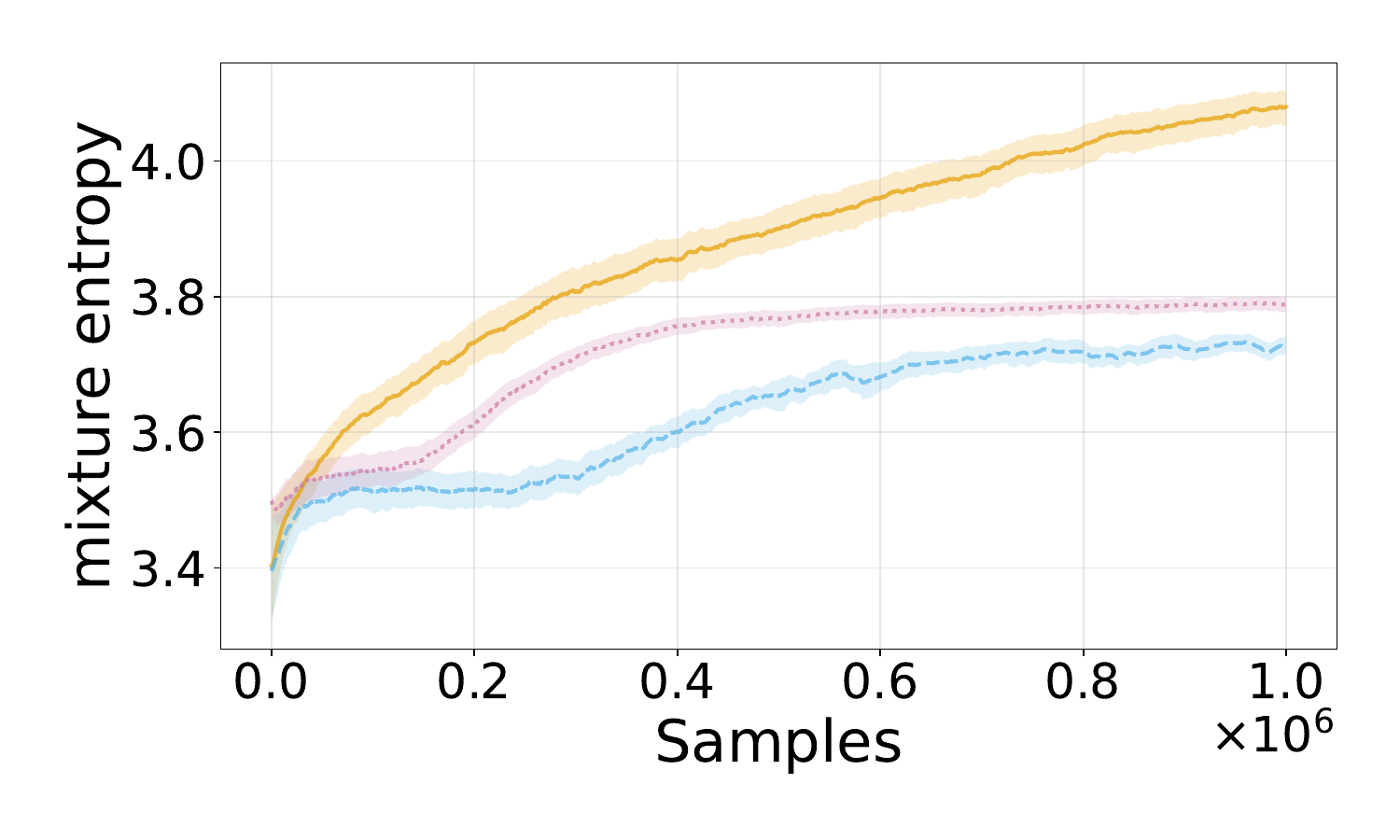}
        \caption{\centering TRPE Mixture Entropy (Env.~(\textbf{i}), $T=100$).}
        \label{subfig:imagea2}
    \end{subfigure}
    \hfill
    \begin{subfigure}[b]{0.245\textwidth}
        \centering
        \includegraphics[width=\textwidth]{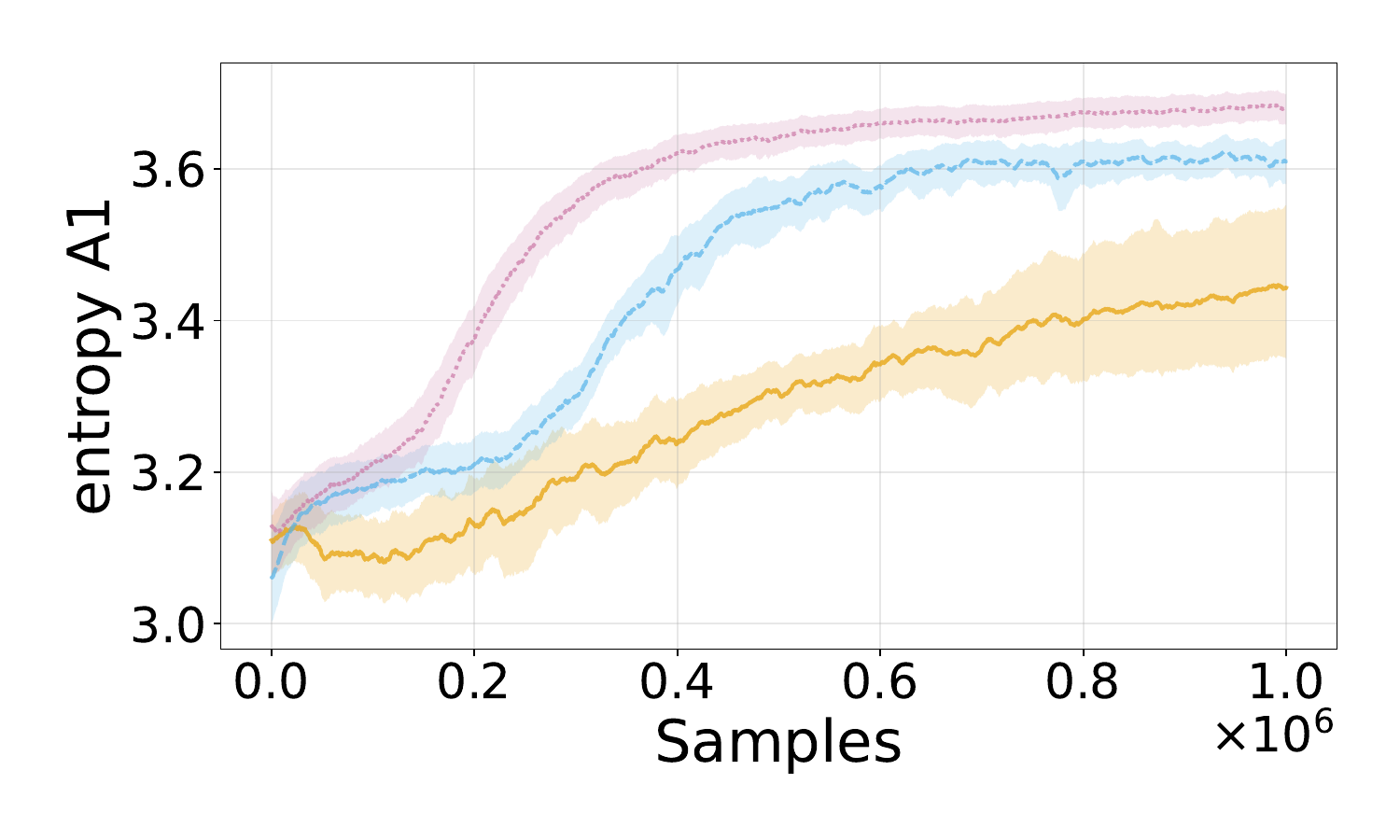}
        \caption{\centering TRPE Entropy Agent 1 (Env.~(\textbf{i}), $T=100$).}
        \label{subfig:image11}
    \end{subfigure}
    \hfill
    \begin{subfigure}[b]{0.245\textwidth}
        \centering
        \includegraphics[width=\textwidth]{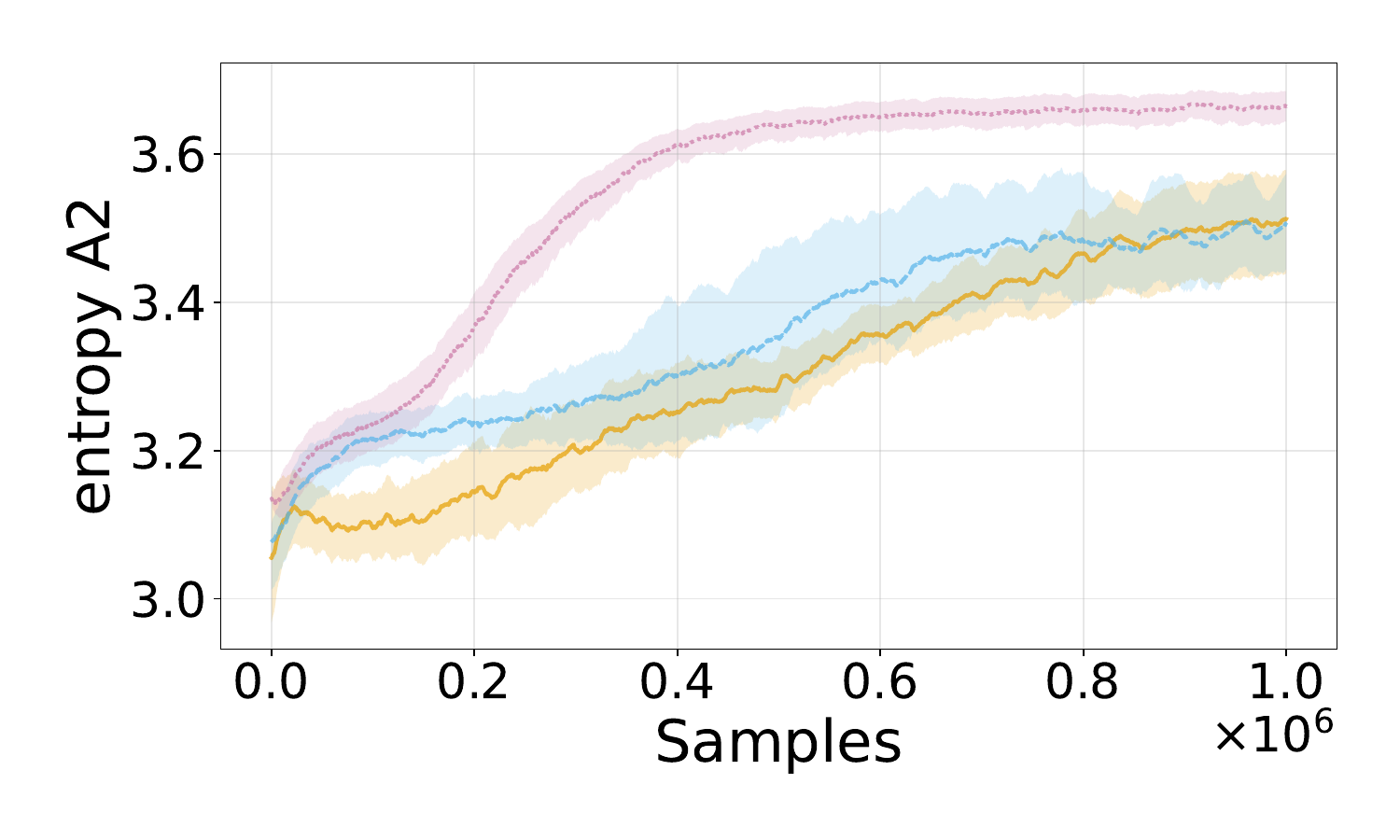}
        \caption{\centering TRPE Entropy Agent 2 (Env.~(\textbf{i}), $T=100$).}
        \label{subfig:image11}
    \end{subfigure}
    \vfill 
    \begin{subfigure}[b]{0.245\textwidth}
        \includegraphics[width=\textwidth]{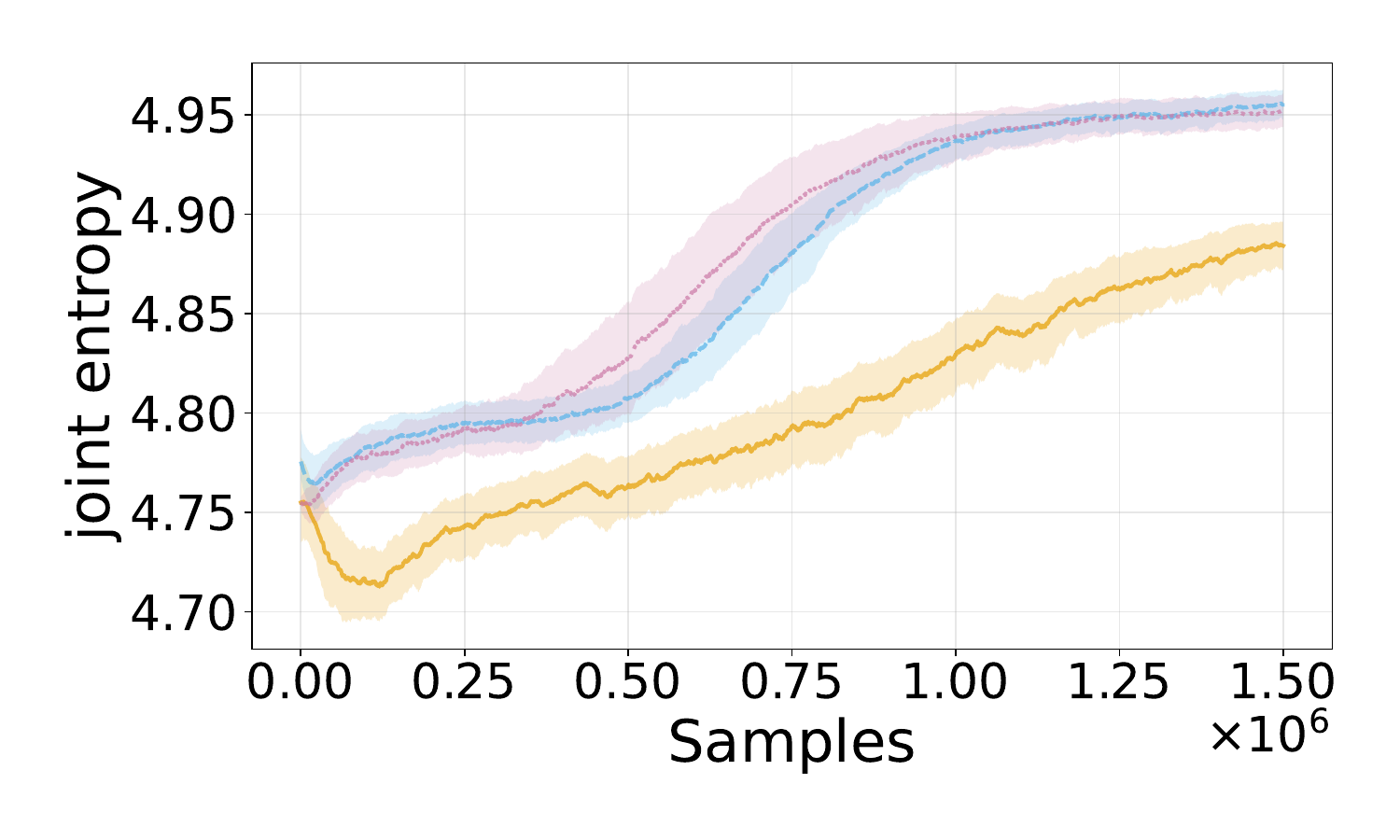}
        \caption{\centering TRPE Joint Entropy (Env.~(\textbf{i}), $T=150$).}
        \label{subfig:imagea1}
    \end{subfigure}
    \hfill
    \begin{subfigure}[b]{0.245\textwidth}
        \includegraphics[width=\textwidth]{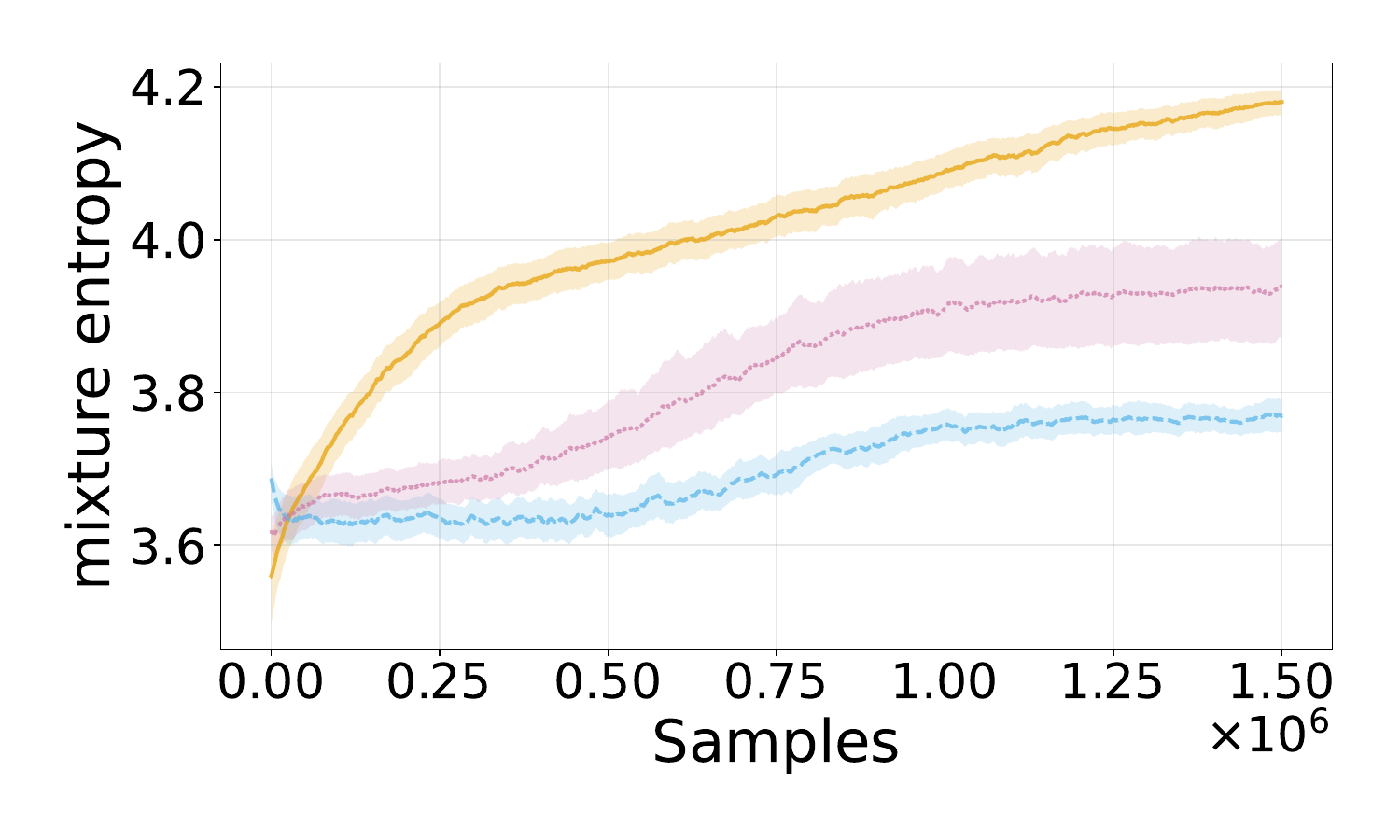}
        \caption{\centering TRPE Mixture Entropy (Env.~(\textbf{i}), $T=150$).}
        \label{subfig:imagea2}
    \end{subfigure}
    \hfill
    \begin{subfigure}[b]{0.245\textwidth}
        \centering
        \includegraphics[width=\textwidth]{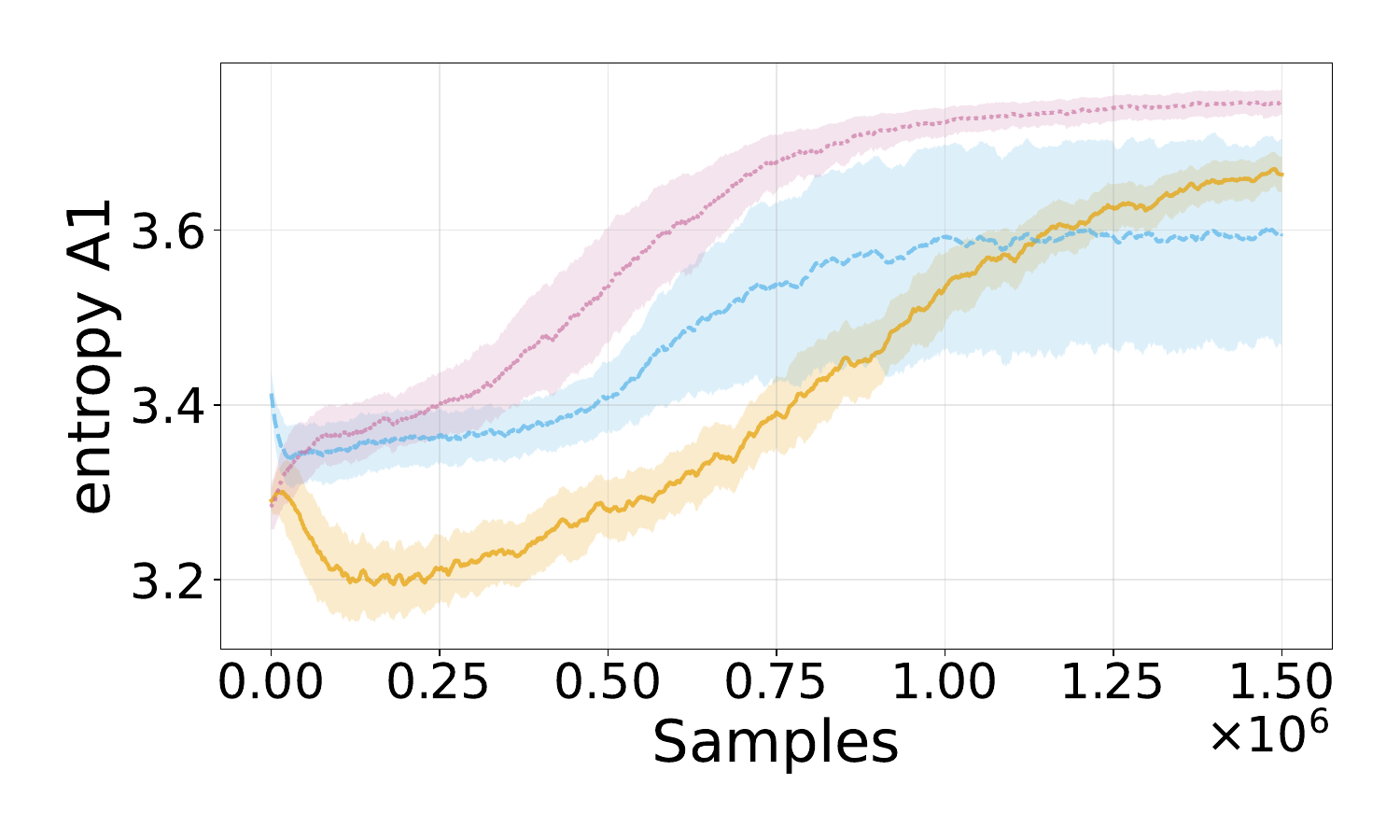}
        \caption{\centering TRPE Entropy Agent 1 (Env.~(\textbf{i}), $T=150$).}
        \label{subfig:image11}
    \end{subfigure}
    \hfill
    \begin{subfigure}[b]{0.245\textwidth}
        \centering
        \includegraphics[width=\textwidth]{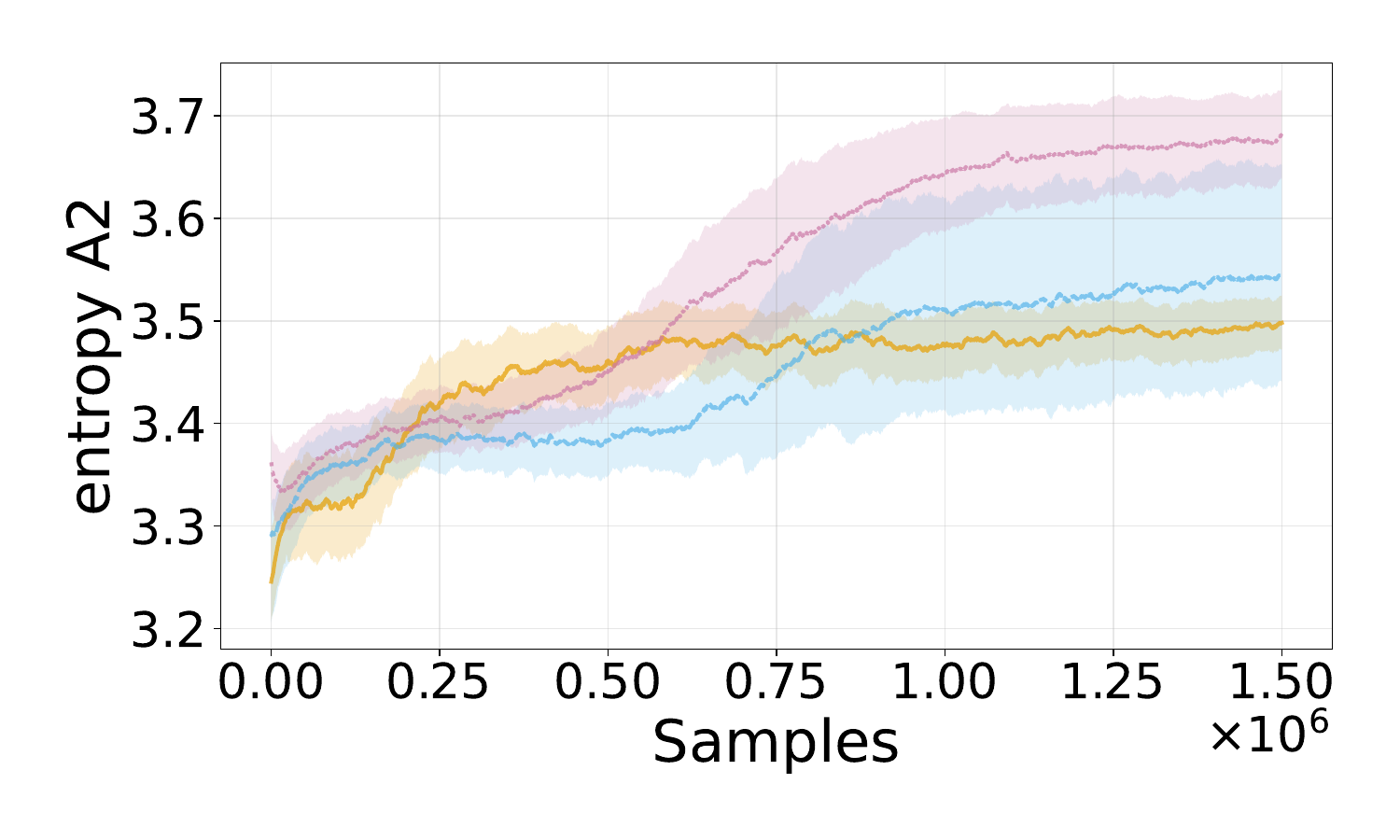}
        \caption{\centering TRPE Entropy Agent 2 (Env.~(\textbf{i}), $T=150$).}
        \label{subfig:image11}
    \end{subfigure}
    \vfill
    \begin{subfigure}[b]{0.245\textwidth}
        \includegraphics[width=\textwidth]{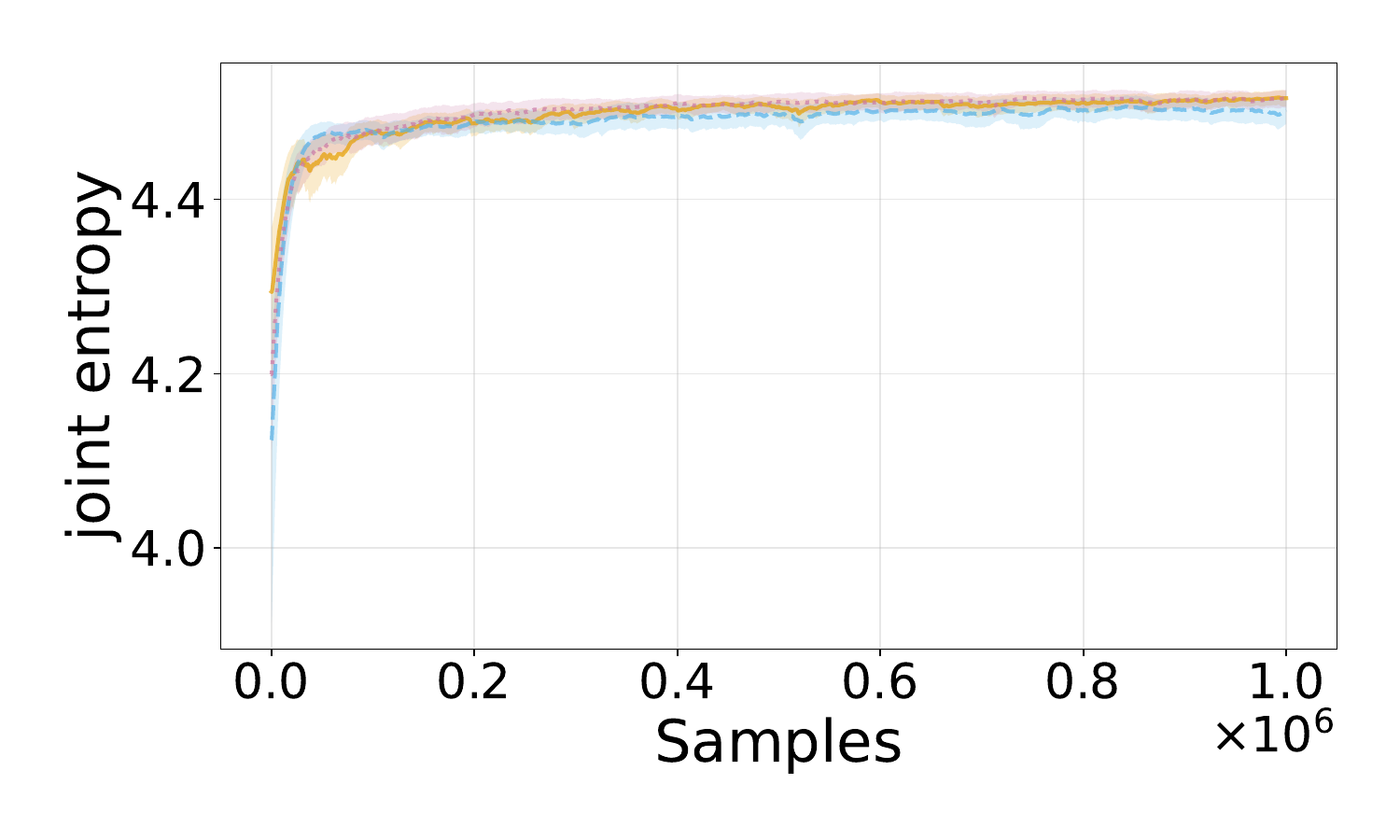}
        \caption{\centering TRPE Joint Entropy (Env.~(\textbf{ii}), $T=100$).}
        \label{subfig:imagea1}
    \end{subfigure}
    \hfill
    \begin{subfigure}[b]{0.245\textwidth}
        \includegraphics[width=\textwidth]{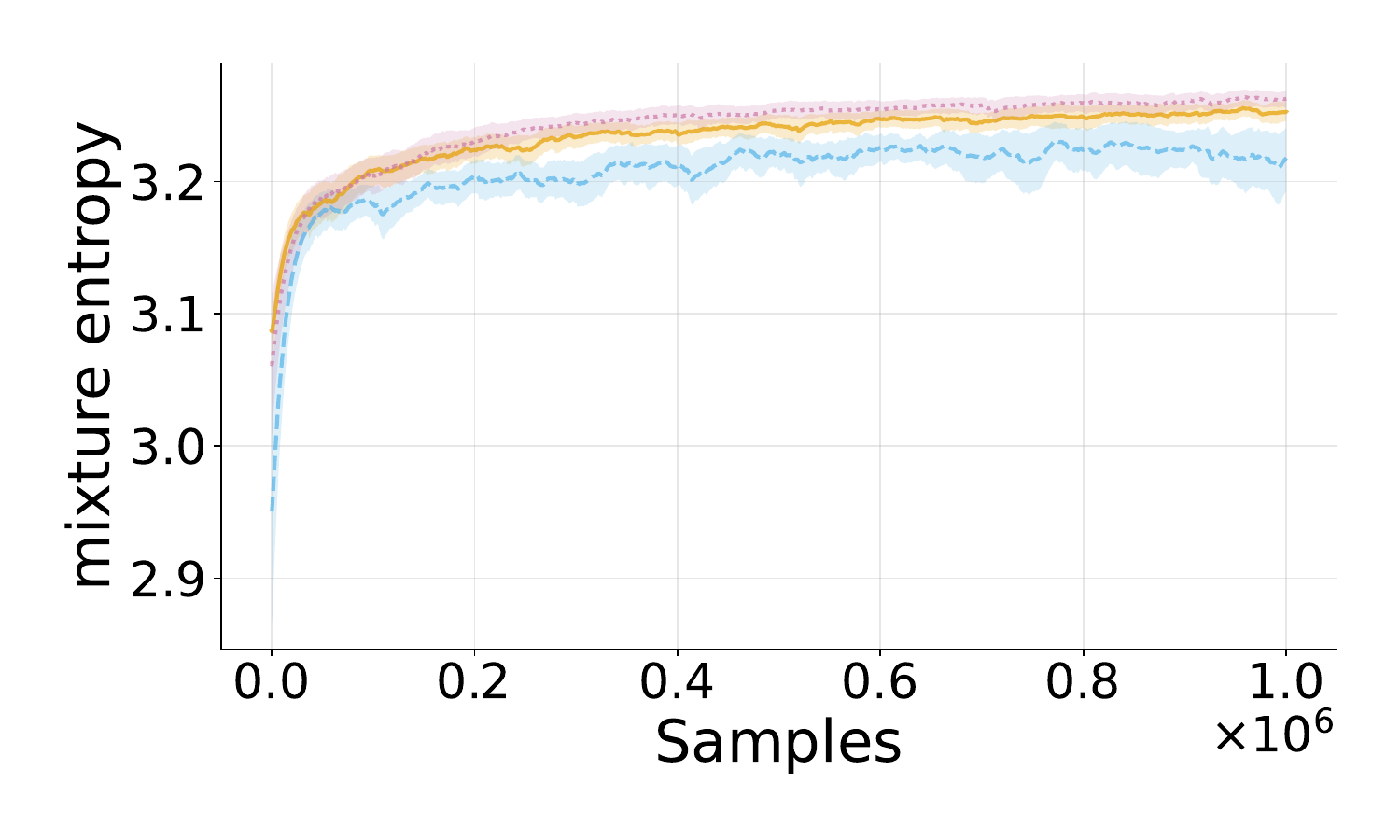}
        \caption{\centering TRPE Mixture Entropy (Env.~(\textbf{ii}), $T=100$).}
        \label{subfig:imagea2}
    \end{subfigure}
    \hfill
    \begin{subfigure}[b]{0.245\textwidth}
        \centering
        \includegraphics[width=\textwidth]{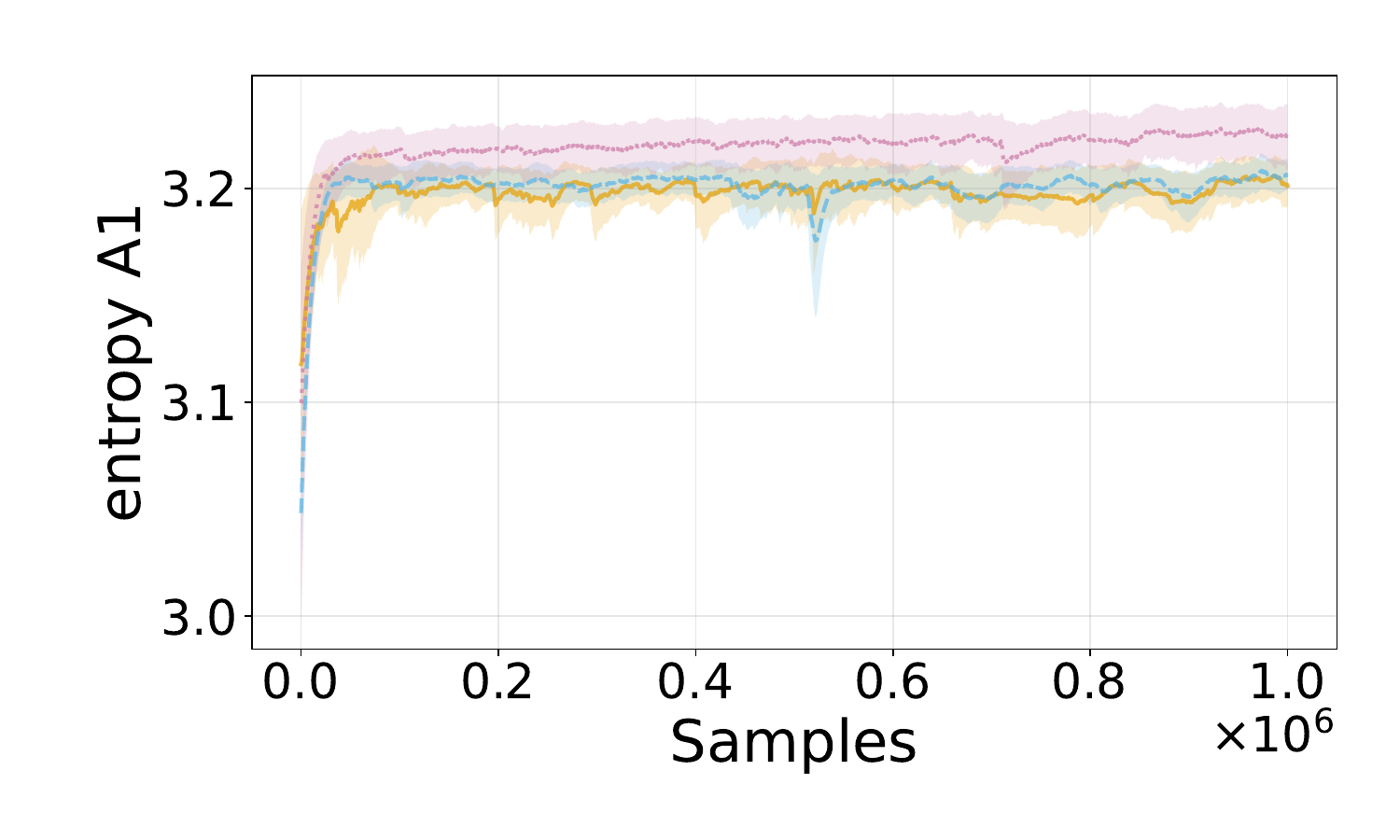}
        \caption{\centering TRPE Entropy Agent 1 (Env.~(\textbf{ii}), $T=100$).}
        \label{subfig:image11}
    \end{subfigure}
    \hfill
    \begin{subfigure}[b]{0.245\textwidth}
        \centering
        \includegraphics[width=\textwidth]{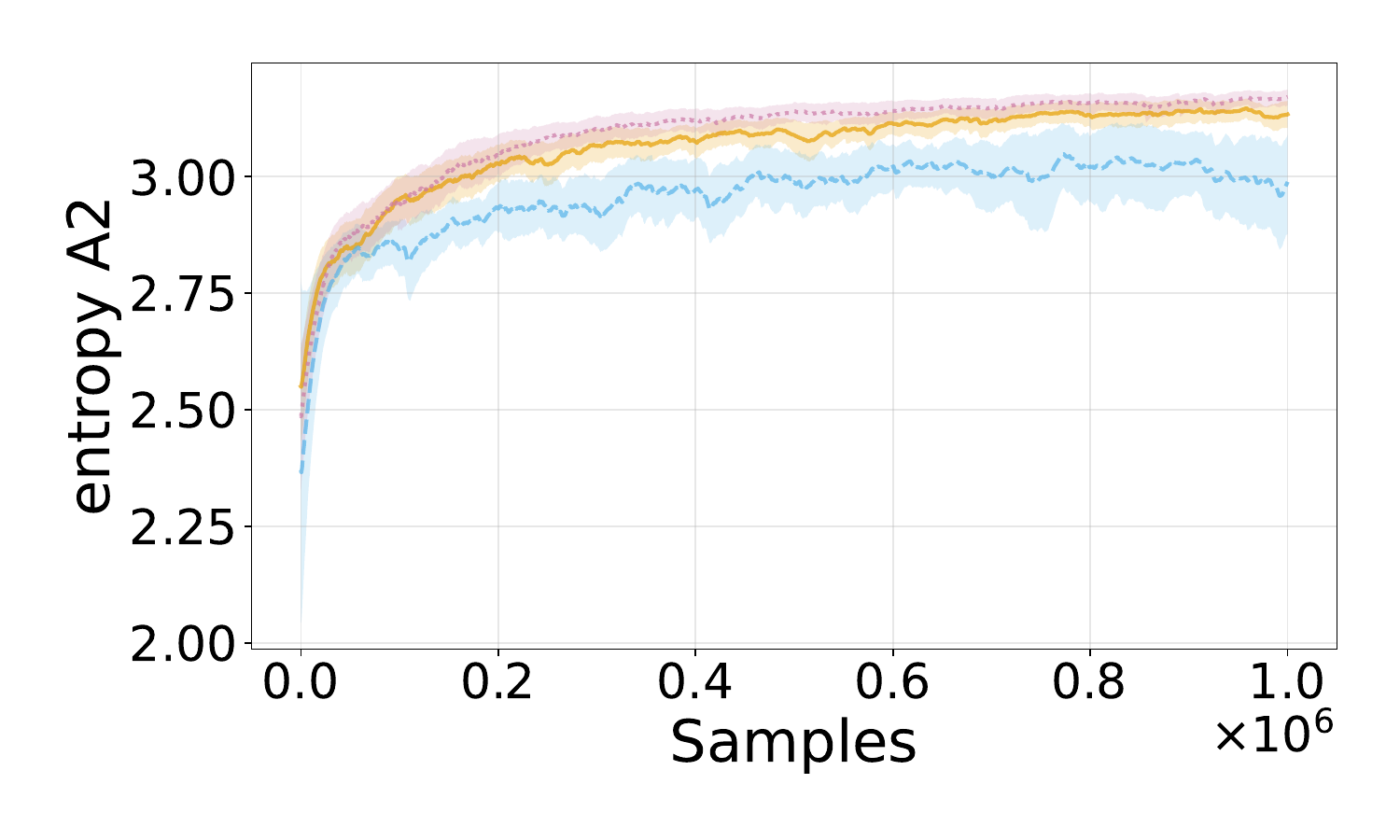}
        \caption{\centering TRPE Entropy Agent 2 (Env.~(\textbf{ii}), $T=100$).}
        \label{subfig:image11}
    \end{subfigure}
\caption{\centering Full Visualization of Reported Experiments. Experiments with longer horizons highlight how \textbf{the easier the task, the less crucial the distinction between the objectives is.}}
\label{fig:pretraining}
\end{figure*}

\begin{figure*}[!]
    \centering
    \begin{tikzpicture}
    \node[draw=black, rounded corners, inner sep=2pt, fill=white] (legend) at (0,0) {
        \begin{tikzpicture}[scale=0.8]
            \draw[thick, color={rgb,255:red,230; green,159; blue,0}, opacity=0.8] (0,0) -- (1,0);
            \fill[color={rgb,255:red,230; green,159; blue,0}, opacity=0.2] (0,-0.1) rectangle (1,0.1);
            \node[anchor=west, font=\scriptsize] at (1.2,0) {Mixture};
            
            \draw[thick, dashed, color={rgb,255:red,86; green,180; blue,233}, opacity=0.8] (2.5,0) -- (3.5,0);
            \fill[color={rgb,255:red,86; green,180; blue,233}, opacity=0.2] (2.5,-0.1) rectangle (3.5,0.1);
            \node[anchor=west, font=\scriptsize] at (3.7,0) {Joint};

            \draw[thick, dotted, color={rgb,255:red,204; green,121; blue,167}, opacity=0.8] (4.7,0) -- (5.7,0);
            \fill[color={rgb,255:red,204; green,121; blue,167}, opacity=0.2] (4.7,-0.1) rectangle (5.7,0.1);
            \node[anchor=west, font=\scriptsize] at (5.9,0) {Disjoint};
            
            \draw[thick, color={rgb,255:red,153; green,153; blue,153}, opacity=0.8] (7.2,0) -- (8.2,0);
            \fill[color={rgb,255:red,153; green,153; blue,153}, opacity=0.2] (7.2,-0.1) rectangle (8.2,0.1);
            \node[anchor=west, font=\scriptsize] at (8.4,0) {Random Initialization};
        \end{tikzpicture}
    };
\end{tikzpicture}
    \vfill
    \begin{subfigure}[b]{0.245\textwidth}
        \includegraphics[width=\textwidth]{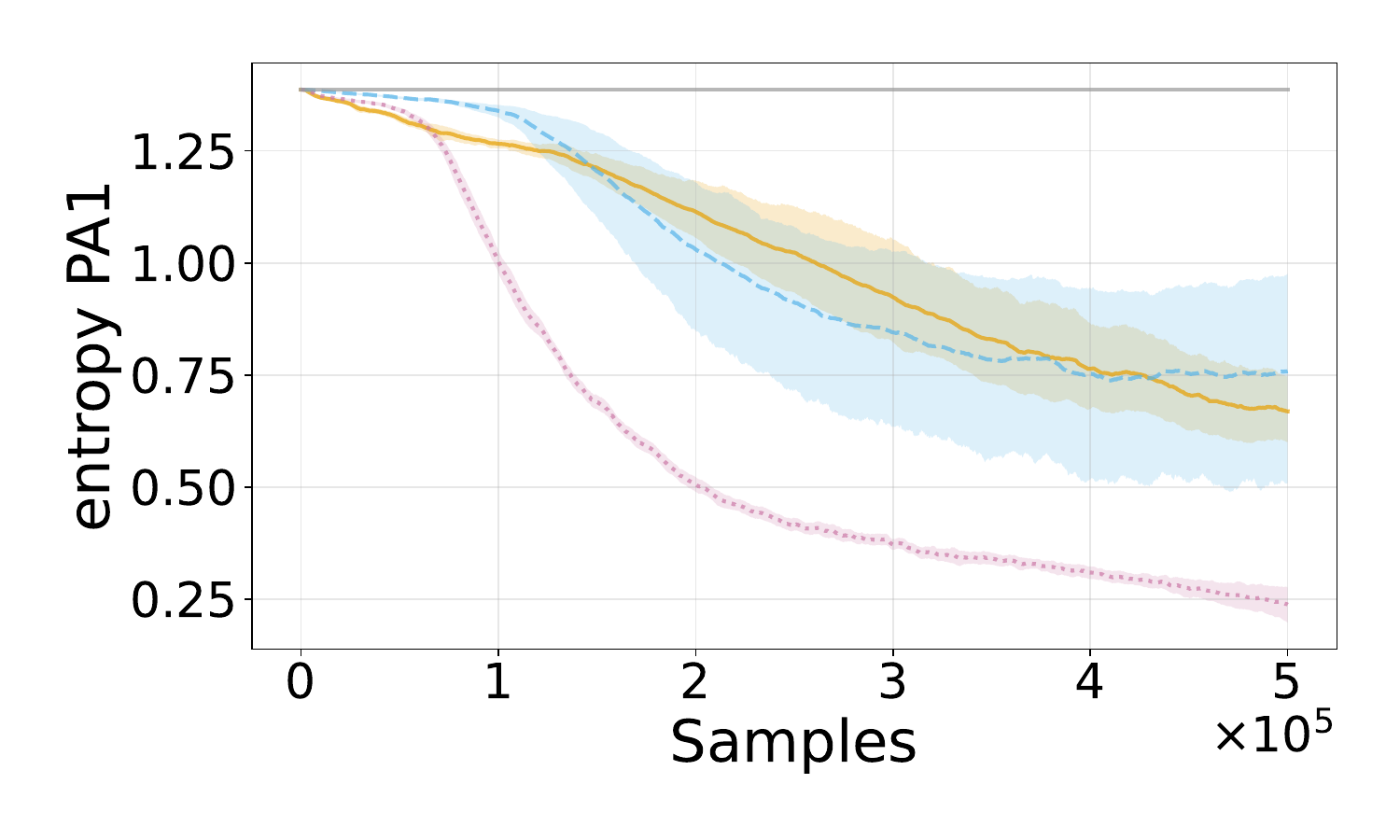}
        \caption{\centering Entropy of Agent 1 Policy in TRPE Training (\textbf{i}), $T=50$).}
        \label{subfig:image93}
    \end{subfigure}
    \hfill
    \begin{subfigure}[b]{0.245\textwidth}
        \includegraphics[width=\textwidth]{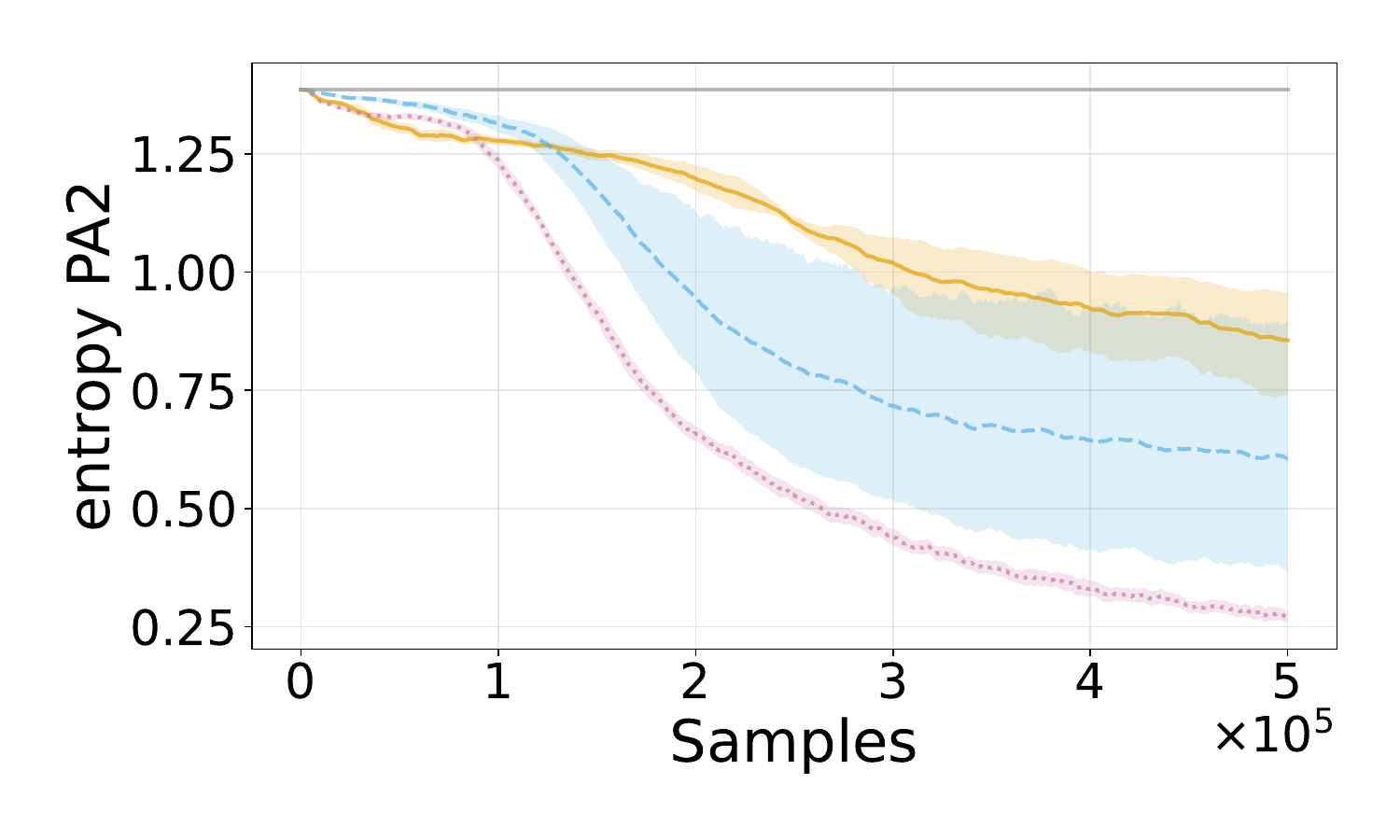}
        \caption{\centering Entropy of Agent 2 Policy in TRPE Training (\textbf{i}), $T=50$).}
        \label{subfig:image130}
    \end{subfigure}
    \hfill
    \begin{subfigure}[b]{0.245\textwidth}
        \includegraphics[width=\textwidth]{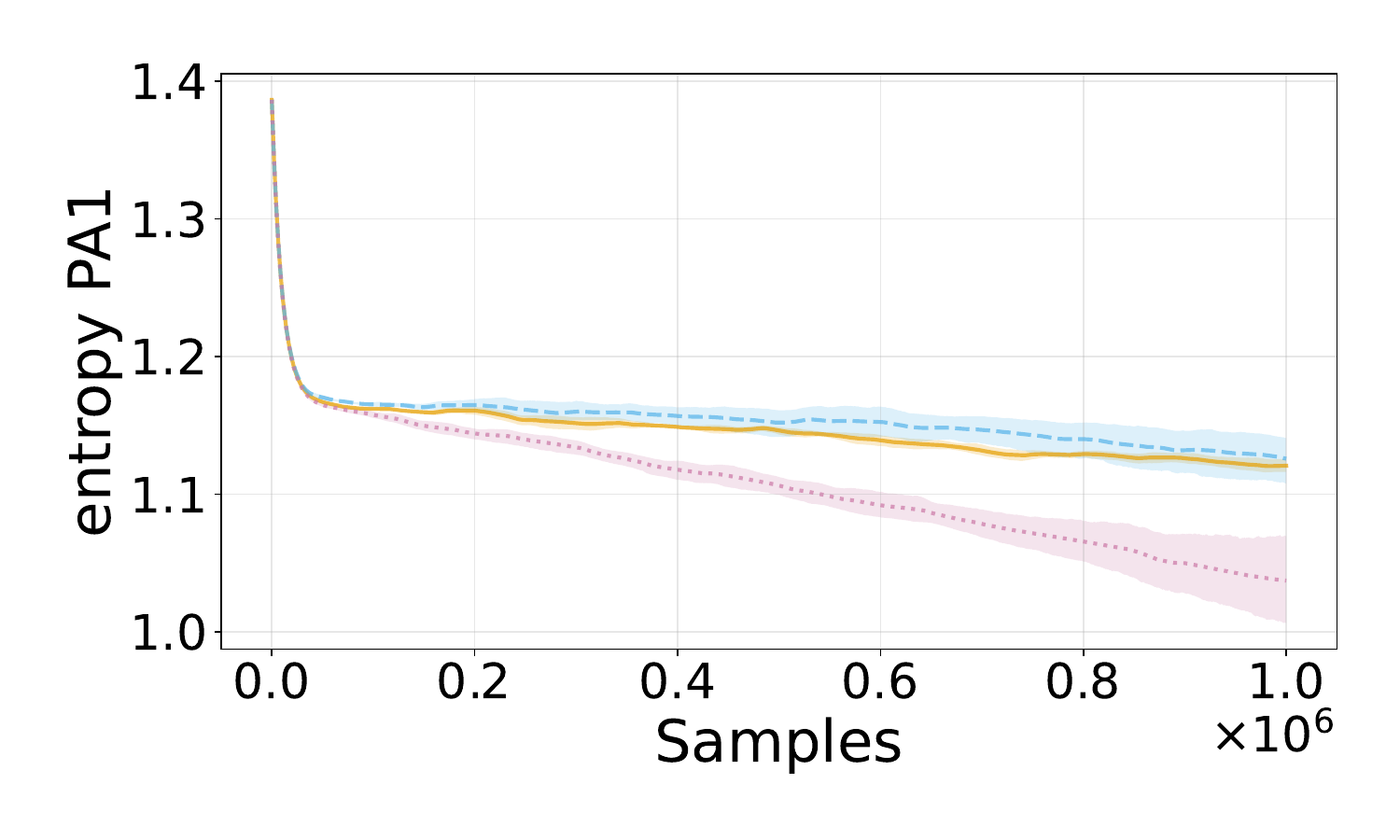}
        \caption{\centering Entropy of Agent 1 Policy in TRPE Training (\textbf{ii}), $T=100$).}
        \label{subfig:image130}
    \end{subfigure}
    \hfill
    \begin{subfigure}[b]{0.245\textwidth}
        \includegraphics[width=\textwidth]{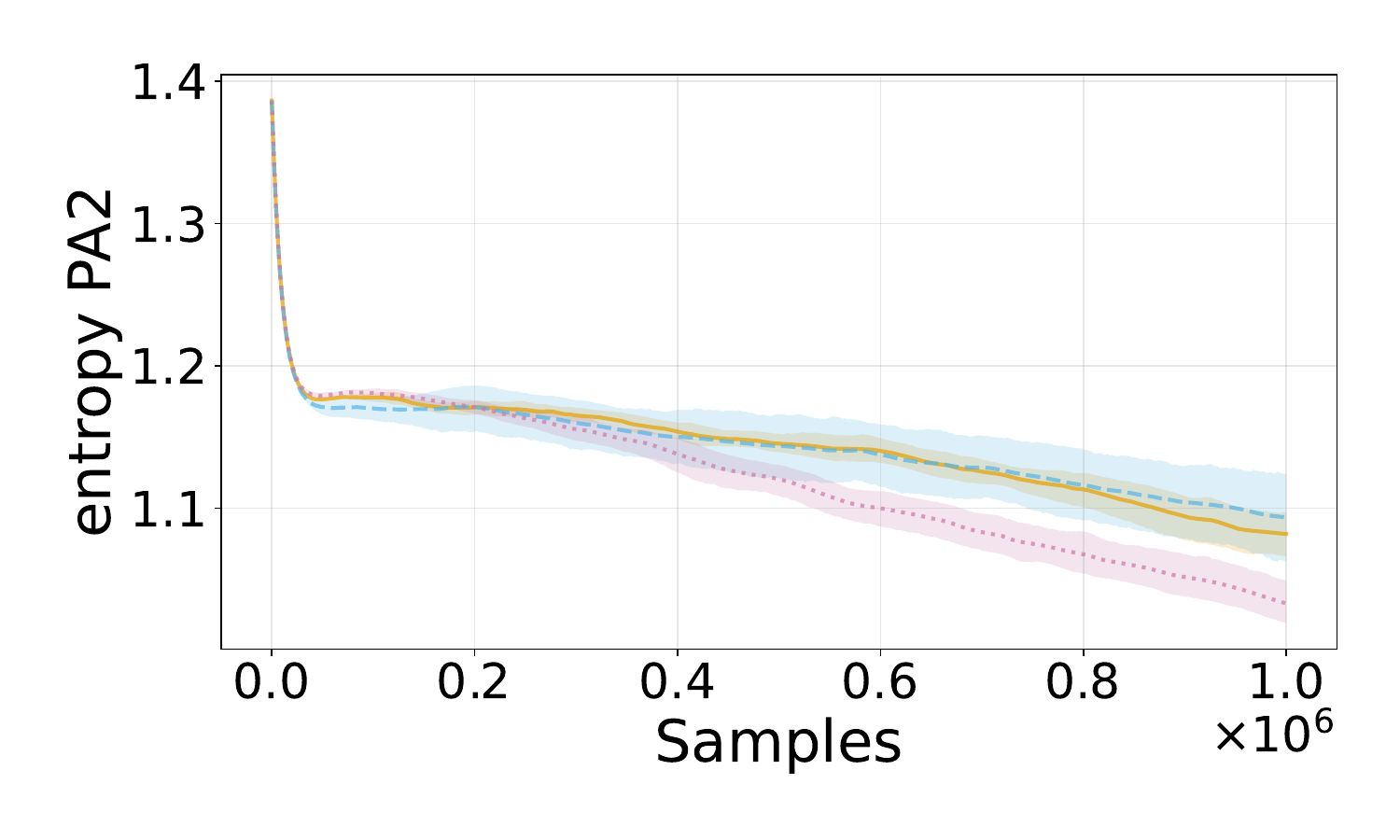}
        \caption{\centering Entropy of Agent 2 Policy in TRPE Training (\textbf{ii}), $T=100$).}
        \label{subfig:image130}
    \end{subfigure}
\caption{\centering Policiy Entropy Insights for TRPO Pretraining in Env (\textbf{i}) and Env (\textbf{ii}). \textbf{Lower Entropic Policies with Disjoint Objectives might justify the difference in pre-training performance even if the performances in training are similar}.}
\label{fig:333}
\end{figure*}

\clearpage

\newpage
\section*{NeurIPS Paper Checklist}

\begin{enumerate}

\item {\bf Claims}
    \item[] Question: Do the main claims made in the abstract and introduction accurately reflect the paper's contributions and scope?
    \item[] Answer:  \answerYes{} 
    \item[] Justification: Both the theoretical and the empirical claims are explicitly covered throughout the paper: 
    \item[] Guidelines:
    \begin{itemize}
        \item The answer NA means that the abstract and introduction do not include the claims made in the paper.
        \item The abstract and/or introduction should clearly state the claims made, including the contributions made in the paper and important assumptions and limitations. A No or NA answer to this question will not be perceived well by the reviewers. 
        \item The claims made should match theoretical and experimental results, and reflect how much the results can be expected to generalize to other settings. 
        \item It is fine to include aspirational goals as motivation as long as it is clear that these goals are not attained by the paper. 
    \end{itemize}

\item {\bf Limitations}
    \item[] Question: Does the paper discuss the limitations of the work performed by the authors?
    \item[] Answer: \answerYes{} 
    \item[] Justification: The authors included an explicit section covering the limitations of the proposed approach, made the assumptions underlying the models explicit and clearly stated the aim of the empirical corroboration in providing evidences of the nature of the new problem rather than confirming SOTA performances of the proposed algorithm.
    \item[] Guidelines:
    \begin{itemize}
        \item The answer NA means that the paper has no limitation while the answer No means that the paper has limitations, but those are not discussed in the paper. 
        \item The authors are encouraged to create a separate "Limitations" section in their paper.
        \item The paper should point out any strong assumptions and how robust the results are to violations of these assumptions (e.g., independence assumptions, noiseless settings, model well-specification, asymptotic approximations only holding locally). The authors should reflect on how these assumptions might be violated in practice and what the implications would be.
        \item The authors should reflect on the scope of the claims made, e.g., if the approach was only tested on a few datasets or with a few runs. In general, empirical results often depend on implicit assumptions, which should be articulated.
        \item The authors should reflect on the factors that influence the performance of the approach. For example, a facial recognition algorithm may perform poorly when image resolution is low or images are taken in low lighting. Or a speech-to-text system might not be used reliably to provide closed captions for online lectures because it fails to handle technical jargon.
        \item The authors should discuss the computational efficiency of the proposed algorithms and how they scale with dataset size.
        \item If applicable, the authors should discuss possible limitations of their approach to address problems of privacy and fairness.
        \item While the authors might fear that complete honesty about limitations might be used by reviewers as grounds for rejection, a worse outcome might be that reviewers discover limitations that aren't acknowledged in the paper. The authors should use their best judgment and recognize that individual actions in favor of transparency play an important role in developing norms that preserve the integrity of the community. Reviewers will be specifically instructed to not penalize honesty concerning limitations.
    \end{itemize}

\item {\bf Theory assumptions and proofs}
    \item[] Question: For each theoretical result, does the paper provide the full set of assumptions and a complete (and correct) proof?
    \item[] Answer: \answerYes{} 
    \item[] Justification: All the assumptions are clearly stated, and the proofs are exaustively reported in the Appendix, with references when needed.
    \item[] Guidelines:
    \begin{itemize}
        \item The answer NA means that the paper does not include theoretical results. 
        \item All the theorems, formulas, and proofs in the paper should be numbered and cross-referenced.
        \item All assumptions should be clearly stated or referenced in the statement of any theorems.
        \item The proofs can either appear in the main paper or the supplemental material, but if they appear in the supplemental material, the authors are encouraged to provide a short proof sketch to provide intuition. 
        \item Inversely, any informal proof provided in the core of the paper should be complemented by formal proofs provided in appendix or supplemental material.
        \item Theorems and Lemmas that the proof relies upon should be properly referenced. 
    \end{itemize}

    \item {\bf Experimental result reproducibility}
    \item[] Question: Does the paper fully disclose all the information needed to reproduce the main experimental results of the paper to the extent that it affects the main claims and/or conclusions of the paper (regardless of whether the code and data are provided or not)?
    \item[] Answer: \answerYes{} 
    \item[] Justification: All the information needed for reproducibility has been provided in the Appendix and the repository to the code has been provided as well.
    \item[] Guidelines:
    \begin{itemize}
        \item The answer NA means that the paper does not include experiments.
        \item If the paper includes experiments, a No answer to this question will not be perceived well by the reviewers: Making the paper reproducible is important, regardless of whether the code and data are provided or not.
        \item If the contribution is a dataset and/or model, the authors should describe the steps taken to make their results reproducible or verifiable. 
        \item Depending on the contribution, reproducibility can be accomplished in various ways. For example, if the contribution is a novel architecture, describing the architecture fully might suffice, or if the contribution is a specific model and empirical evaluation, it may be necessary to either make it possible for others to replicate the model with the same dataset, or provide access to the model. In general. releasing code and data is often one good way to accomplish this, but reproducibility can also be provided via detailed instructions for how to replicate the results, access to a hosted model (e.g., in the case of a large language model), releasing of a model checkpoint, or other means that are appropriate to the research performed.
        \item While NeurIPS does not require releasing code, the conference does require all submissions to provide some reasonable avenue for reproducibility, which may depend on the nature of the contribution. For example
        \begin{enumerate}
            \item If the contribution is primarily a new algorithm, the paper should make it clear how to reproduce that algorithm.
            \item If the contribution is primarily a new model architecture, the paper should describe the architecture clearly and fully.
            \item If the contribution is a new model (e.g., a large language model), then there should either be a way to access this model for reproducing the results or a way to reproduce the model (e.g., with an open-source dataset or instructions for how to construct the dataset).
            \item We recognize that reproducibility may be tricky in some cases, in which case authors are welcome to describe the particular way they provide for reproducibility. In the case of closed-source models, it may be that access to the model is limited in some way (e.g., to registered users), but it should be possible for other researchers to have some path to reproducing or verifying the results.
        \end{enumerate}
    \end{itemize}

\item {\bf Open access to data and code}
    \item[] Question: Does the paper provide open access to the data and code, with sufficient instructions to faithfully reproduce the main experimental results, as described in supplemental material?
    \item[] Answer: \answerYes{} 
    \item[] Justification: The link can be found in the appendix.
    \item[] Guidelines:
    \begin{itemize}
        \item The answer NA means that paper does not include experiments requiring code.
        \item Please see the NeurIPS code and data submission guidelines (\url{https://nips.cc/public/guides/CodeSubmissionPolicy}) for more details.
        \item While we encourage the release of code and data, we understand that this might not be possible, so “No” is an acceptable answer. Papers cannot be rejected simply for not including code, unless this is central to the contribution (e.g., for a new open-source benchmark).
        \item The instructions should contain the exact command and environment needed to run to reproduce the results. See the NeurIPS code and data submission guidelines (\url{https://nips.cc/public/guides/CodeSubmissionPolicy}) for more details.
        \item The authors should provide instructions on data access and preparation, including how to access the raw data, preprocessed data, intermediate data, and generated data, etc.
        \item The authors should provide scripts to reproduce all experimental results for the new proposed method and baselines. If only a subset of experiments are reproducible, they should state which ones are omitted from the script and why.
        \item At submission time, to preserve anonymity, the authors should release anonymized versions (if applicable).
        \item Providing as much information as possible in supplemental material (appended to the paper) is recommended, but including URLs to data and code is permitted.
    \end{itemize}

\item {\bf Experimental setting/details}
    \item[] Question: Does the paper specify all the training and test details (e.g., data splits, hyperparameters, how they were chosen, type of optimizer, etc.) necessary to understand the results?
    \item[] Answer: \answerYes{} 
    \item[] Justification: The information can be found in the Appendix.
    \item[] Guidelines:
    \begin{itemize}
        \item The answer NA means that the paper does not include experiments.
        \item The experimental setting should be presented in the core of the paper to a level of detail that is necessary to appreciate the results and make sense of them.
        \item The full details can be provided either with the code, in appendix, or as supplemental material.
    \end{itemize}

\item {\bf Experiment statistical significance}
    \item[] Question: Does the paper report error bars suitably and correctly defined or other appropriate information about the statistical significance of the experiments?
    \item[] Answer: \answerYes{} 
    \item[] Justification: the results are accompanied by confidence intervals.
    \item[] Guidelines:
    \begin{itemize}
        \item The answer NA means that the paper does not include experiments.
        \item The authors should answer "Yes" if the results are accompanied by error bars, confidence intervals, or statistical significance tests, at least for the experiments that support the main claims of the paper.
        \item The factors of variability that the error bars are capturing should be clearly stated (for example, train/test split, initialization, random drawing of some parameter, or overall run with given experimental conditions).
        \item The method for calculating the error bars should be explained (closed form formula, call to a library function, bootstrap, etc.)
        \item The assumptions made should be given (e.g., Normally distributed errors).
        \item It should be clear whether the error bar is the standard deviation or the standard error of the mean.
        \item It is OK to report 1-sigma error bars, but one should state it. The authors should preferably report a 2-sigma error bar than state that they have a 96\% CI, if the hypothesis of Normality of errors is not verified.
        \item For asymmetric distributions, the authors should be careful not to show in tables or figures symmetric error bars that would yield results that are out of range (e.g. negative error rates).
        \item If error bars are reported in tables or plots, The authors should explain in the text how they were calculated and reference the corresponding figures or tables in the text.
    \end{itemize}

\item {\bf Experiments compute resources}
    \item[] Question: For each experiment, does the paper provide sufficient information on the computer resources (type of compute workers, memory, time of execution) needed to reproduce the experiments?
    \item[] Answer: \answerYes{} 
    \item[] Justification: The Appendix contains all the required information.
    \item[] Guidelines:
    \begin{itemize}
        \item The answer NA means that the paper does not include experiments.
        \item The paper should indicate the type of compute workers CPU or GPU, internal cluster, or cloud provider, including relevant memory and storage.
        \item The paper should provide the amount of compute required for each of the individual experimental runs as well as estimate the total compute. 
        \item The paper should disclose whether the full research project required more compute than the experiments reported in the paper (e.g., preliminary or failed experiments that didn't make it into the paper). 
    \end{itemize}
    
\item {\bf Code of ethics}
    \item[] Question: Does the research conducted in the paper conform, in every respect, with the NeurIPS Code of Ethics \url{https://neurips.cc/public/EthicsGuidelines}?
    \item[] Answer: \answerYes{} 
    \item[] Justification: The authors have reviewed the NeurIPS Code of Ethics and confirm the paper conform with it.
    \item[] Guidelines:
    \begin{itemize}
        \item The answer NA means that the authors have not reviewed the NeurIPS Code of Ethics.
        \item If the authors answer No, they should explain the special circumstances that require a deviation from the Code of Ethics.
        \item The authors should make sure to preserve anonymity (e.g., if there is a special consideration due to laws or regulations in their jurisdiction).
    \end{itemize}

\item {\bf Broader impacts}
    \item[] Question: Does the paper discuss both potential positive societal impacts and negative societal impacts of the work performed?
    \item[] Answer: \answerNA{} 
    \item[] Justification:
    \item[] Guidelines:
    \begin{itemize}
        \item The answer NA means that there is no societal impact of the work performed.
        \item If the authors answer NA or No, they should explain why their work has no societal impact or why the paper does not address societal impact.
        \item Examples of negative societal impacts include potential malicious or unintended uses (e.g., disinformation, generating fake profiles, surveillance), fairness considerations (e.g., deployment of technologies that could make decisions that unfairly impact specific groups), privacy considerations, and security considerations.
        \item The conference expects that many papers will be foundational research and not tied to particular applications, let alone deployments. However, if there is a direct path to any negative applications, the authors should point it out. For example, it is legitimate to point out that an improvement in the quality of generative models could be used to generate deepfakes for disinformation. On the other hand, it is not needed to point out that a generic algorithm for optimizing neural networks could enable people to train models that generate Deepfakes faster.
        \item The authors should consider possible harms that could arise when the technology is being used as intended and functioning correctly, harms that could arise when the technology is being used as intended but gives incorrect results, and harms following from (intentional or unintentional) misuse of the technology.
        \item If there are negative societal impacts, the authors could also discuss possible mitigation strategies (e.g., gated release of models, providing defenses in addition to attacks, mechanisms for monitoring misuse, mechanisms to monitor how a system learns from feedback over time, improving the efficiency and accessibility of ML).
    \end{itemize}
    
\item {\bf Safeguards}
    \item[] Question: Does the paper describe safeguards that have been put in place for responsible release of data or models that have a high risk for misuse (e.g., pretrained language models, image generators, or scraped datasets)?
    \item[] Answer: \answerNA{} 
    \item[] Justification:
    \item[] Guidelines:
    \begin{itemize}
        \item The answer NA means that the paper poses no such risks.
        \item Released models that have a high risk for misuse or dual-use should be released with necessary safeguards to allow for controlled use of the model, for example by requiring that users adhere to usage guidelines or restrictions to access the model or implementing safety filters. 
        \item Datasets that have been scraped from the Internet could pose safety risks. The authors should describe how they avoided releasing unsafe images.
        \item We recognize that providing effective safeguards is challenging, and many papers do not require this, but we encourage authors to take this into account and make a best faith effort.
    \end{itemize}

\item {\bf Licenses for existing assets}
    \item[] Question: Are the creators or original owners of assets (e.g., code, data, models), used in the paper, properly credited and are the license and terms of use explicitly mentioned and properly respected?
    \item[] Answer:\answerNA{} 
    \item[] Justification:
    \item[] Guidelines:
    \begin{itemize}
        \item The answer NA means that the paper does not use existing assets.
        \item The authors should cite the original paper that produced the code package or dataset.
        \item The authors should state which version of the asset is used and, if possible, include a URL.
        \item The name of the license (e.g., CC-BY 4.0) should be included for each asset.
        \item For scraped data from a particular source (e.g., website), the copyright and terms of service of that source should be provided.
        \item If assets are released, the license, copyright information, and terms of use in the package should be provided. For popular datasets, \url{paperswithcode.com/datasets} has curated licenses for some datasets. Their licensing guide can help determine the license of a dataset.
        \item For existing datasets that are re-packaged, both the original license and the license of the derived asset (if it has changed) should be provided.
        \item If this information is not available online, the authors are encouraged to reach out to the asset's creators.
    \end{itemize}

\item {\bf New assets}
    \item[] Question: Are new assets introduced in the paper well documented and is the documentation provided alongside the assets?
    \item[] Answer: \answerNA{} 
    \item[] Justification:
    \item[] Guidelines:
    \begin{itemize}
        \item The answer NA means that the paper does not release new assets.
        \item Researchers should communicate the details of the dataset/code/model as part of their submissions via structured templates. This includes details about training, license, limitations, etc. 
        \item The paper should discuss whether and how consent was obtained from people whose asset is used.
        \item At submission time, remember to anonymize your assets (if applicable). You can either create an anonymized URL or include an anonymized zip file.
    \end{itemize}

\item {\bf Crowdsourcing and research with human subjects}
    \item[] Question: For crowdsourcing experiments and research with human subjects, does the paper include the full text of instructions given to participants and screenshots, if applicable, as well as details about compensation (if any)? 
    \item[] Answer: \answerNA{}
    \item[] Justification: 
    \item[] Guidelines:
    \begin{itemize}
        \item The answer NA means that the paper does not involve crowdsourcing nor research with human subjects.
        \item Including this information in the supplemental material is fine, but if the main contribution of the paper involves human subjects, then as much detail as possible should be included in the main paper. 
        \item According to the NeurIPS Code of Ethics, workers involved in data collection, curation, or other labor should be paid at least the minimum wage in the country of the data collector. 
    \end{itemize}

\item {\bf Institutional review board (IRB) approvals or equivalent for research with human subjects}
    \item[] Question: Does the paper describe potential risks incurred by study participants, whether such risks were disclosed to the subjects, and whether Institutional Review Board (IRB) approvals (or an equivalent approval/review based on the requirements of your country or institution) were obtained?
    \item[] Answer: \answerNA{} 
    \item[] Justification: 
    \item[] Guidelines:
    \begin{itemize}
        \item The answer NA means that the paper does not involve crowdsourcing nor research with human subjects.
        \item Depending on the country in which research is conducted, IRB approval (or equivalent) may be required for any human subjects research. If you obtained IRB approval, you should clearly state this in the paper. 
        \item We recognize that the procedures for this may vary significantly between institutions and locations, and we expect authors to adhere to the NeurIPS Code of Ethics and the guidelines for their institution. 
        \item For initial submissions, do not include any information that would break anonymity (if applicable), such as the institution conducting the review.
    \end{itemize}

\item {\bf Declaration of LLM usage}
    \item[] Question: Does the paper describe the usage of LLMs if it is an important, original, or non-standard component of the core methods in this research? Note that if the LLM is used only for writing, editing, or formatting purposes and does not impact the core methodology, scientific rigorousness, or originality of the research, declaration is not required.
    \item[] Answer: \answerNA{}
    \item[] Justification:
    \item[] Guidelines:
    \begin{itemize}
        \item The answer NA means that the core method development in this research does not involve LLMs as any important, original, or non-standard components.
        \item Please refer to our LLM policy (\url{https://neurips.cc/Conferences/2025/LLM}) for what should or should not be described.
    \end{itemize}

\end{enumerate}

\end{document}